\newcommand{\ceil}[1]{\left\lceil #1 \right\rceil}
\newcommand{\prob}{\ensuremath{\mathsf{Pr}}}
\newcommand{\expect}{\ensuremath{\mathsf{E}}}
\newcommand{\satisfying}[1]{\ensuremath{R_{#1}}} %
\newcommand{\weight}[1]{\ensuremath{w \! \left( #1 \right)}} %
\newcommand{\scaledweight}[1]{\ensuremath{\mathcal{W} \left( #1 \right)}} %
\newcommand{\wmax}{\ensuremath{w_{max}}} %
\newcommand{\wmin}{\ensuremath{w_{min}}} %
\newcommand{\ratio}{\ensuremath{\rho}} %
\newcommand{\UniformWitness}{\ensuremath{\mathsf{UniWit}}}
\newcommand{\UniGen}{\ensuremath{\mathsf{UniGen}}}
\newcommand{\PAWS}{\ensuremath{\mathsf{PAWS}}}
\newcommand{\var}{\ensuremath{\mathsf{V}}}
\newcommand{\NP}{\ensuremath{\mathsf{NP}}}
\newcommand{\SAT}{\ensuremath{\mathsf{SAT}}}
\newcommand{\UNSAT}{\ensuremath{\mathsf{UNSAT}}}
\newcommand{\Cachet}{\ensuremath{\mathsf{Cachet}}}
\newcommand{\WeightMC}{\ensuremath{\mathsf{WeightMC}}}
\newcommand{\PartitionedWeightMC}{\ensuremath{\mathsf{PartitionedWeightMC}}}
\newcommand{\ApproxMC}{\ensuremath{\mathsf{ApproxMC}}}
\newcommand{\WeightMCCore}{\ensuremath{\mathsf{WeightMCCore}}}
\newcommand{\ComputeKappaAndPivot}{\ensuremath{\mathsf{ComputeKappaPivot}}}
\newcommand{\BoundedWeightSAT}{\ensuremath{\mathsf{BoundedWeightSAT}}}
\newcommand{\ComputeIterCount}{\ensuremath{\mathsf{ComputeIterCount}}}
\newcommand{\WeightGen}{\ensuremath{\mathsf{WeightGen}}}
\newcommand{\AddBlockClause}{\ensuremath{\mathsf{AddBlockClause}}}
\newcommand{\SolveSAT}{\ensuremath{\mathsf{SolveSAT}}}
\newcommand{\SDD}{\ensuremath{\mathsf{SDD}}}
\newcommand{\IS}{\ensuremath{\mathsf{IS}}}
\newcommand{\MAP}{\ensuremath{\mathsf{MAP}}}
\newcommand{\WISH}{\ensuremath{\mathsf{WISH}}}
\newcommand{\killthis}[1]{}
\newtheoremstyle{newstyle}      
  {.5\baselineskip\@plus.2\baselineskip\@minus.2\baselineskip}%
  {.5\baselineskip\@plus.2\baselineskip\@minus.2\baselineskip}%
{\mdseries} %
{} %
{\bfseries} %
{.} %
{ } %
{} %
\newtheorem{lemma}{Lemma}
\newtheorem{theorem}{Theorem}
\title{ Distribution-Aware Sampling and Weighted Model Counting for SAT}
\author{Supratik Chakraborty\\Indian Institute of Technology, Bombay
\And Daniel J. Fremont\\University of California, Berkeley
\AND Kuldeep S. Meel\\Department of Computer Science, Rice University
\And Sanjit A. Seshia\\University of California, Berkeley
\AND Moshe Y. Vardi\\Department of Computer Science, Rice University
}
\begin{document}
\maketitle

\begin{abstract}
Given a CNF formula and a weight for each assignment of values to
variables, two natural problems are weighted model counting and
distribution-aware sampling of satisfying assignments.  Both problems 
have a wide variety of important applications.  Due to the inherent
complexity of the exact versions of the problems, interest has focused
on solving them approximately.  Prior work in this area scaled only to
small problems in practice, or failed to provide strong theoretical
guarantees, or employed a computationally-expensive maximum a 
posteriori probability (MAP) oracle that assumes prior knowledge of a
factored representation of the weight distribution.  We present a
novel approach that works with a black-box oracle for weights of
assignments and requires only an {\NP}-oracle (in practice, a SAT-solver) to solve both the
counting and sampling problems.  Our approach works 
under mild assumptions on the distribution of weights of satisfying
assignments, provides strong theoretical guarantees, and scales to
problems involving several thousand variables. We also show that the
assumptions can be significantly relaxed while improving computational efficiency 
if a factored representation of the weights is known. 
\end{abstract}

\section{Introduction}
Given a set of weighted elements, computing the cumulative weight of
all elements that satisfy a set of constraints is a fundamental
problem that arises in many contexts.  Known variously as weighted
model counting, discrete integration and partition function
computation, this problem has applications in machine learning,
probabilistic reasoning, statistics, planning and combinatorics, among
other areas ~\cite{Roth1996,Sang04combiningcomponent,domshlak2007,xue2012basing}.  
A closely related problem is that of sampling
elements satisfying a set of constraints, where the probability of
choosing an element is proportional to its weight.  The latter
problem, known as weighted sampling, also has important applications
in probabilistic reasoning, machine learning, statistical physics,
constrained random verification and other
domains~\cite{jerrum1996markov,Bacchus2003,NavRim07,madras1999importance}.  Unfortunately, the exact
versions of both problems are computationally hard.  Weighted model
counting can be used to count the number of satisfying assignments of
a CNF formula; hence it is $\#P$-hard~\cite{valiant1979complexity}.  It is also
known that an efficient algorithm for weighted sampling would yield a
fully polynomial randomized approximation scheme (FPRAS) for
{\#P}-complete inference problems~\cite{jerrum1996markov,madras1999importance} -- a
possibility that lacks any evidence so far.  Fortunately, approximate
solutions to weighted model counting and weighted sampling are good
enough for most applications.  Consequently, there has been
significant interest in designing practical approximate algorithms for
these problems.

Since constraints arising from a large class of real-world problems
can be modeled as propositional CNF (henceforth CNF) formulas, we
focus on CNF and assume that the weights of truth assignments are
given by a weight function $w(\cdot)$ defined on the set of truth
assignments.  Roth showed that approximately counting the models of a
CNF formula is {\NP}-hard even when the structure of the formula is
severely restricted~\cite{Roth1996}.  By a result of Jerrum, Valiant
and Vazirani ~\cite{Jerr}, we also know that approximate model
counting and almost uniform sampling (a special case of approximate
weighted sampling) are polynomially inter-reducible.  Therefore, it is
unlikely that there exist polynomial-time algorithms for either
approximate weighted model counting or approximate weighted
sampling~\cite{KarpLuby1989}.  Recently, a new class of algorithms
that use pairwise independent random parity constraints and a {\MAP}
(\emph{maximum a posteriori probability})-oracle have been proposed
for solving both
problems~\cite{ermon2013icml,ermon2014icml,ermon2013nips}.  These
algorithms provide strong theoretical guarantees (FPRAS relative to
the {\MAP} oracle), and have been shown to scale to medium-sized
problems in practice.  While this represents a significant step in our
quest for practically efficient algorithms with strong guarantees for
approximate weighted model counting and approximate weighted sampling,
the use of {\MAP}-queries presents issues that need to be addressed in
practice.  First, the use of {\MAP}-queries along with parity
constraints poses scalability
hurdles~\cite{ermon2014icml,ermon2013uai}.  Second, existing
{\MAP}-query solvers work best when the distribution of weights is
represented by a graphical model with small tree-width -- a
restriction that is violated in several real-life problems.  While
this does not pose problems in practical applications where an
approximation of the optimal {\MAP} solution without guarantees of the
approximation factor suffices, it presents significant challenges when
we demand the optimal MAP solution.  This motivates us to ask if we
can design approximate algorithms for weighted model counting and
weighted sampling that do not invoke {\MAP}-oracles at all, and do not
assume any specific representation of the weight distribution.

Our primary contribution is an affirmative answer to the above
question under mild assumptions on the distribution of weights.
Specifically, we show that two recently-proposed algorithms for
approximate (unweighted) model counting~\cite{CMV13MC} and 
near-uniform (unweighted) sampling~\cite{CMV-CAV13} can be adapted to work in the
setting of weighted assignments, using only a SAT solver
({\NP}-oracle) and a black-box weight function $w(\cdot)$.  For the
algorithm to work well in practice, we require that \emph{tilt} of
the weight function, which is the  ratio of the maximum weight 
of a satisfying assignment to the minimum weight of a satisfying assignment, 
is small.  We present arguments why this is a reasonable assumption 
in some important classes of problems.  We also present an adaptation of our 
algorithm for problem instances where the tilt is large.
The adapted algorithm requires a pseudo-Boolean SAT solver instead of
a (regular) SAT solver as an oracle.

\section{Notation and Preliminaries}\label{sec:prelims}

Let $F$ be a Boolean formula in conjunctive normal form (CNF), and let
$X$ be the set of variables appearing in $F$.  The set $X$ is called
the \emph{support} of $F$.  Given a set of variables $S \subseteq X$ 
and an assignment $\sigma$ of truth values to the variables in $X$, 
we write $\sigma|_{S}$ for the projection of $\sigma$ onto $S$. 
A \emph{satisfying assignment} or \emph{witness} of $F$ is an assignment that 
makes $F$ evaluate to true.  We denote the set of all witnesses of $F$ by $\satisfying{F}$. 
For notational convenience, whenever the formula $F$ is clear from the context, 
we omit mentioning it. Let ${\mathcal D} \subseteq X$ be a subset of the support such that there 
are no two satisfying assignments that differ only in the truth values of 
variables in ${\mathcal D}$.  In other words, in every satisfying assignment, 
the truth values of variables in $X\setminus {\mathcal D}$ uniquely determine 
the truth value of every variable in ${\mathcal D}$.  The
set ${\mathcal D}$ is called a \emph{dependent} support of $F$, and 
$X\setminus \mathcal{D}$ is called an \emph{independent} support.  
Note that there may be more than one independent support: 
$(a \vee \neg b) \wedge (\neg a \vee b)$ has three, namely $\{a\}$, $\{b\}$ and $\{a, b\}$. 
Clearly, if ${\mathcal{I}}$ is an independent support of $F$, so is every superset 
of ${\mathcal{I}}$.

Let $\weight{\cdot}$ be a function that takes as input an assignment
$\sigma$ and yields a real number $\weight{\sigma} \in (0, 1]$ called
  the \emph{weight} of $\sigma$.  Given a set $Y$ of assignments, we
  use $\weight{Y}$ to denote $\Sigma_{\sigma \in Y} \weight{\sigma}$.
  Our main algorithms (see Section~\ref{sec:algorithm}) make no
  assumptions about the nature of the weight function, treating it as
  a black-box function.  In particular, we do not assume that the
  weight of an assignment can be factored into the weights of
  projections of the assignment on specific subsets of variables.  The
  exception to this is Section \ref{sec:white-box}, where we consider
  possible improvements when the weights are given by a known
  function, or ``white-box''. Three important quantities derived from the weight function are 
$\wmax = \max_{\sigma \in \satisfying{F}} \weight{\sigma}$, 
$\wmin = \min_{\sigma \in \satisfying{F}} \weight{\sigma}$, 
and the \emph{tilt} $\ratio = \wmax / \wmin$.
Our algorithms require an 
upper bound on the tilt, denoted $r$, which is provided by the user.
As tight a bound as possible is desirable to maximize the efficiency of 
the algorithms.  We define {\MAP}
  (\emph{maximum a posteriori probability}) for our distribution of weights to be $\frac{w_{max}}{\weight{R_F}}$.  
  
  We write $\prob\left[X: {\cal P} \right]$ for the probability of
outcome $X$ when sampling from a probability space ${\cal P}$.  
For brevity, we omit ${\cal P}$ when it is clear from the
context.  The expected value of the outcome $X$ is denoted
$\expect\left[X\right]$.

A special class of hash functions, called
\emph{$k$-wise independent} hash functions, play a crucial role in
our work ~\cite{Bellare98uniformgeneration}.  Let $n, m$ and $k$ be positive integers, and let
$H(n,m,k)$ denote a family of $k$-wise independent hash functions
mapping $\{0, 1\}^n$ to $\{0, 1\}^m$.  We use $h \xleftarrow{R}
H(n,m,k)$ to denote the probability space obtained by choosing a
hash function $h$ uniformly at random from $H(n,m,k)$.  The property
of $k$-wise independence guarantees that for all $\alpha_1, \ldots
\alpha_k \in \{0,1\}^m $ and for all distinct $y_1, \ldots y_k \in
\{0,1\}^n$, $\prob\left[\bigwedge_{i=1}^k h(y_i) = \alpha_i\right.$
$\left.: h \xleftarrow{R} H(n, m, k)\right] = 2^{-mk}$.  For every
$\alpha \in \{0, 1\}^m$ and $h \in H(n, m, k)$, let $h^{-1}(\alpha)$
denote the set $\{y \in \{0, 1\}^n \mid h(y) = \alpha\}$.  Given
$\satisfying{F} \subseteq \{0, 1\}^n$ and $h \in H(n, m, k)$, we use 
$\satisfying{F, h, \alpha}$ to denote the set $\satisfying{F} \cap h^{-1}(\alpha)$.  

Our work uses an efficient family of hash functions, denoted as $H_{xor}(n, m, 3)$. 
Let $h: \{0, 1\}^n \rightarrow \{0, 1\}^m$ be a hash function in
the family, and let $y$ be a vector in $\{0, 1\}^n$.  Let $h(y)[i]$
denote the $i^{th}$ component of the vector obtained by applying $h$
to $y$.  The family of hash functions of interest is defined as
$\{h(y) \mid h(y)[i] = a_{i,0} \oplus (\bigoplus_{l=1}^n a_{i,l}\cdot
y[l]), a_{i,j} \in \{0, 1\}, 1 \leq i \le m, 0 \leq j \leq n\}$, where
$\oplus$ denotes the xor operation.  By choosing values of $a_{i,j}$
randomly and independently, we can effectively choose a random hash
function from the family.  It has been shown in~\cite{Gomes-Sampling}
that this family of hash functions is $3$-independent. 

Given a CNF formula $F$, an \emph{exact weighted model counter}
returns $\weight{\satisfying{F}}$.  An \emph{approximate weighted
  model counter} relaxes this requirement to some extent: given
\emph{tolerance} $\varepsilon > 0$ and \emph{confidence} $1-\delta \in
(0, 1]$, the value $v$ returned by the counter satisfies
  $\prob[\frac{\weight{\satisfying{F}}}{1+\varepsilon} \le v \le
    (1+\varepsilon)\weight{\satisfying{F}}] \ge 1-\delta$.  A related
  type of algorithm is a \emph{weighted-uniform probabilistic generator}, which
  outputs a witness $w \in \satisfying{F}$ such that $\prob\left[w =
    y\right] = \weight{y}/\weight{\satisfying{F}}$ for every $y \in
  \satisfying{F}$.  An \emph{almost weighted-uniform generator}
  relaxes this requirement, ensuring that for all $y \in
  \satisfying{F}$, we have $\frac{\weight{y}}{(1+
    \varepsilon)\weight{\satisfying{F}}}$ $\le$ $\prob\left[w =
    y\right]\le$ $\frac{(1 +
    \varepsilon)\weight{y}}{\weight{\satisfying{F}}}$.  Probabilistic
  generators are allowed to occasionally ``fail" by not returning a
  witness (when $\satisfying{F}$ is non-empty), with the failure
  probability upper bounded by $\delta$.

\section{Related Work}
\label{sec:relatedwork}

Marrying strong theoretical guarantees with scalable performance is
the holy grail of research in the closely related areas of weighted
model counting and weighted sampling.  The tension between the two
objectives is evident from a survey of the literature.  Earlier
algorithms for weighted model counting can be broadly divided into
three categories: those that give strong guarantees but scale poorly
in practice, those that give weak guarantees but scale well in
practice, and some recent algorithms that attempt to bridge this gap
by making use of a {\MAP}-oracle and random parity constraints.
Techniques in the first category attempt to compute the weighted model
count exactly by enumerating partial
solutions~\cite{SangBearKautz2005} or by converting the CNF formula to
alternative representations~\cite{darwiche2004new,ChoiDarwiche13}.
Unfortunately, none of these approaches scale to large problem
instances.  Techniques in the second category employ variational
methods, sampling-based methods or other heuristic methods.
Variational methods~\cite{wainwright2008graphical,GogDech2011} work
extremely well in practice, but do not provide guarantees except in
very special cases.  Sampling-based methods are usually based on
importance sampling (e.g. ~\cite{GogDech2011}), which provide weak
one-sided bounds, or on Markov Chain Monte Carlo (MCMC)
sampling~\cite{jerrum1996markov,madras2002}.  MCMC sampling is perhaps
the most popular technique for both weighted sampling and weighted
model counting.  Several MCMC algorithms like simulated annealing and
the Metropolis-Hastings algorithm have been studied extensively in the
literature~\cite{Kirkpatrick83,madras2002}.  While MCMC sampling is
guaranteed to converge to a target distribution under mild
requirements, convergence is often impractically
slow~\cite{jerrum1996markov}.  Therefore, practical MCMC
sampling-based tools use heuristics that destroy the theoretical
guarantees.
Several other heuristic techniques that provide weak one-sided bounds 
have also been proposed in the literature~\cite{Gomes06modelcounting}.

Recently, Ermon et al. proposed new hashing-based algorithms for
approximate weighted model counting and approximate weighted
sampling~\cite{ermon2013icml,ermon2013nips,ermon2013uai,ermon2014icml}.
Their algorithms use random parity constraints as pair-wise
independent hash functions to partition the set of satisfying
assignments of a CNF formula into cells.  A {\MAP} oracle is then
queried to obtain the maximum weight of an assignment in a randomly
chosen cell.  By repeating the {\MAP} queries polynomially many times
for randomly chosen cells of appropriate expected sizes, Ermon et al
showed that they can provably compute approximate weighted model
counts and also provably achieve approximate weighted sampling. The
performance of Ermon et al's algorithms depend crucially on the
ability to efficiently answer {\MAP} queries.  Complexity-wise, {\MAP}
is significantly harder than CNF satisfiability, and is known to be
${\NP}^{PP}$-complete~\cite{Park2002}.  The problem is further
compounded by the fact that the {\MAP} queries generated by Ermon et
al's algorithms have random parity constraints built into them.
Existing {\MAP}-solving techniques work efficiently when the weight
distribution of assignments is specified by a graphical model, and the
underlying graph has specific structural properties.  With random
parity constraints, these structural properties are likely to be
violated very often.  In~\cite{ermon2013uai}, it has been argued that
a {\MAP}-oracle-based weighted model-counting algorithm proposed
in~\cite{ermon2013icml} is unlikely to scale well to large problem
instances.  Since {\MAP} solving is also crucial in the weighted
sampling algorithm of~\cite{ermon2013nips}, the same criticism applies
to that algorithm as well.  Several relaxations of the
{\MAP}-oracle-based algorithm proposed in~\cite{ermon2013icml}, were
therefore discussed in~\cite{ermon2013uai}.  While these relaxations
help reduce the burden of {\MAP} solving, they also significantly
weaken the theoretical guarantees.

In later work~\cite{ermon2014icml}, Ermon et al showed how the average
size of parity constraints in their weighted model counting and
weighted sampling algorithms can be reduced using a new class of hash
functions.  This work, however, still stays within the same paradigm
as their earlier work -- i.e, it uses {\MAP}-oracles and XOR
constraints.  %
Although Ermon et al's algorithms provide a $16$-factor approximation
in theory, in actual experiments, they use relaxations and timeouts of
the {\MAP} solver to get upper and lower bounds of the optimal {\MAP}
solution.  Unfortunately, these bounds do not come with any guarantees
on the factor of approximation.  Running the {\MAP} solver to obtain
the optimal value is likely to take significantly longer, and is not
attempted in Ermon et al's work.  %

The algorithms developed in this paper are closely related to two
algorithms proposed recently by Chakraborty, Meel and
Vardi~\cite{CMV13MC,CMV-CAV13}.  The first of these~\cite{CMV13MC}
computes the approximate (unweighted) model-count of a CNF formula,
while the second algorithm~\cite{CMV-CAV13} performs near-uniform
(unweighted) sampling.  Like Ermon et al's algorithms, these
algorithms make use of parity constraints as pair-wise independent
hash functions, and can benefit from the new class of hash functions
proposed in~\cite{ermon2014icml}.  Unlike, however, Ermon et al's
algorithms, Chakraborty et al.~use a SAT solver ({\NP}-oracle)
specifically engineered to handle parity constraints efficiently.

\section{Algorithm}
\label{sec:algorithm}
We now present algorithms for approximate weighted model counting and
approximate weighted sampling, assuming a small bounded tilt and a
black-box weight function.  Recalling that the tilt concerns weights
of only satisfying assignments, our assumption about it being bounded
by a small number is reasonable in several practical situations.  For
example, when solving probabilistic inference with evidence by
reduction to weighted model counting~\cite{chavira2008probabilistic}, every
satisfying assignment of the CNF formula corresponds to an assignment
of values to variables in the underlying probabilistic graphical model
that is consistent with the evidence.  Furthermore, the weight of a
satisfying assignment is the joint probability of the corresponding
assignment of variables in the probabilistic graphical model.  A large
tilt would therefore mean existence of two assignments that are
consistent with the evidence, but one of which is overwhelmingly more
likely than the other.  In several real-world problems (see, e.g. Sec
8.3 of~\cite{diez2006canonical}), this is considered unlikely given that
numerical conditional probability values are often obtained from human
experts providing qualitative and rough quantitative data.

Our weighted model counting algorithm, called {\WeightMC}, is best
viewed as an adaptation of the {\ApproxMC} algorithm proposed by
Chakraborty, Meel and Vardi~\cite{CMV13MC} for approximate unweighted
model counting.  Similarly, our weighted sampling algorithm, called
{\WeightGen}, can be viewed as an adaptation of the the
{\UniformWitness} algorithm~\cite{CMV-CAV13}, originally proposed for
near-uniform unweighted sampling.  The key idea in both {\ApproxMC}
and {\UniformWitness} is to partition the set of satisfying
assignments into ``cells'' containing roughly equal numbers of
satisfying assignments, using a random hash function from the family
$H_{xor}(n, m, 3)$.  A random cell is then chosen and inspected to see
if the number of satisfying assignments in it is smaller than a
pre-computed threshold.  The threshold, in turn, depends on the
desired approximation factor or tolerance $\varepsilon$.  If the
chosen cell is small enough, {\UniGen} samples uniformly from the
chosen small cell to obtain a near-uniformly generated satisfying
assignment.  {\ApproxMC} multiplies the number of satisfying
assignments in the cell by a suitable scaling factor to obtain an
estimate of the model count.  {\ApproxMC} is then repeated a number of
times (depending on the desired confidence: $1-\delta$) and the
statistical median of computed counts taken to give the final
approximate model count.  For weighted model counting and sampling,
the primary modification that needs to be done to {\ApproxMC} and
{\UniGen} is that instead of requiring ``cells" to have roughly equal
numbers of satisfying assignments, we now require them to have roughly
equal weights of satisfying assignments. To ensure that all weights
lie in $[0, 1]$, we scale weights by a factor of
$\frac{1}{\mathrm{w_{max}}}$.  Unlike earlier
works~\cite{ermon2013icml,ermon2013uai}, however, we do not require a
{\MAP}-oracle to get $\mathrm{w_{max}}$; instead we estimate
$\mathrm{w_{max}}$ online without incurring any additional performance
cost.

A randomly chosen hash function from $H_{xor}(n,m,3)$ consists of $m$
XOR constraints, each of which has expected size $n/2$. Although
{\ApproxMC} and {\UniformWitness} were shown to scale for few
thousands of variables, the performance erodes rapidly after a few
thousand variables.  It has recently been showin in~\cite{msthesis}
that by using random parity constraints on the independent support of
a formula (which can be orders of magnitude smaller than the complete
support), we can significantly reduce the size of XOR constraints.  We
use this idea in our work.  For all our benchmark problems, obtaining
the independent support of CNF formulae has been easy, once we examine
the domain from which the problem originated.  %

 Both {\WeightMC} and {\WeightGen} assume access to a subroutine
 called {\BoundedWeightSAT} that takes a CNF formula $F$, a ``pivot'',
 an upper bound $r$ of the tilt and an upper bound $w_{max}$ of the
 maximum weight of a satisfying assignment in the independent support
 set $S$.  It returns a set of satisfying assignments of $F$ such that
 the total weight of the returned assignments scaled by $1/w_{max}$
 exceeds pivot.  It also updates the minimum weight of a satisfying
 assignment seen so far and returns the same.  {\BoundedWeightSAT}
 accesses a subroutine {\AddBlockClause} that takes as inputs a
 formula $F$ and a projected assignment $\sigma|_{S}$, computes a
 blocking clause for $\sigma|_{S}$, and returns the formula $F'$
 obtained by conjoining $F$ with the blocking clause thus obtained.
 Both algorithms also accept as input a positive
 real-valued parameter $r$ which is an upper bound on
 {\ratio}. Finally, the algorithms assume access to an {\NP}-oracle,
 which in particular can decide SAT.
    
\subsection{WeightMC Algorithm} 
The pseudocode for {\WeightMC} is shown in Algorithm
\ref{alg:WtMC}. The algorithm takes a CNF formula $F$, tolerance
$\varepsilon \in (0,1)$, confidence parameter $\delta \in (0,1)$,
independent support $S$, and tilt upper bound $r$, and returns an
approximate weighted model count.  {\WeightMC} invokes an auxiliary
procedure {\WeightMCCore} that computes an approximate weighted model
count by randomly partitioning the space of satisfying assignments
using hash functions from the family $H_{xor}(|S|, m, 3)$, where $S$
denotes an independent support of $F$.  After invoking {\WeightMCCore}
sufficiently many times, {\WeightMC} returns the median of the
non-$\bot$ counts returned by {\WeightMCCore}.
\begin{theorem}\label{thm:wtmcApproximation}
Given a propositional formula $F$, $\varepsilon \in(0,1)$, $\delta \in
(0,1)$, independent support $S$, and tilt bound $r$, suppose
{\WeightMC}$(F, \varepsilon, \delta, S, r)$ returns $c$.  Then
$\prob\left[{\left(1+\varepsilon\right)}^{-1}\cdot
  \weight{\satisfying{F}}) \right.$ $\left. \le c \le
  (1+\varepsilon)\cdot \weight{\satisfying{F}})\right]$ $\ge
1-\delta$.
 \end{theorem}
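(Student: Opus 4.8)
The plan is to prove that $\WeightMC$ is an $(\varepsilon,\delta)$-approximation by analyzing the core subroutine $\WeightMCCore$ and then amplifying confidence via the median trick. I would structure the argument in two stages: first establishing that a single invocation of $\WeightMCCore$ returns, with constant probability bounded away from $1/2$, a value $c$ lying in the interval $[(1+\varepsilon)^{-1}\weight{\satisfying{F}},\,(1+\varepsilon)\weight{\satisfying{F}}]$; then showing that taking the median over sufficiently many independent runs boosts this to the required confidence $1-\delta$.

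\textbf{Stage one: correctness of a single core run.} The idea is that $\WeightMCCore$ picks a number $m$ of XOR constraints from $H_{xor}(|S|,m,3)$ to partition the witnesses into $2^m$ cells, and then estimates $\weight{\satisfying{F}}$ by $2^m$ times the scaled weight of a randomly chosen cell. For each witness $y$ and each cell index $\alpha$, let $Y_y$ be the indicator that $y$ falls into the chosen cell. Because the chosen hash family is $3$-independent, the random variables controlling which witnesses land in a given cell are pairwise (indeed $3$-wise) independent, so I would compute $\expect$ and $\mathsf{Var}$ of the scaled weight $\scaledweight{\satisfying{F,h,\alpha}}$ of the cell. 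The expectation is $\weight{\satisfying{F}}/(2^m \wmax)$, and the variance admits a bound controlled by $\wmax/\wmin = \ratio \le r$ together with the pivot-driven lower bound on the expected number of witnesses per cell. The key step is a Chebyshev-type concentration inequality showing that, when $m$ is chosen so the expected scaled cell weight is close to the pre-computed threshold (which depends on $\varepsilon$ and $r$), the estimate $2^m \cdot \wmax \cdot \scaledweight{\satisfying{F,h,\alpha}}$ lies within a factor $(1+\varepsilon)$ of $\weight{\satisfying{F}}$ with probability bounded below by a constant strictly exceeding $1/2$. I expect this is where the tilt bound $r$ enters essentially: the variance bound degrades with the spread of weights, so the threshold and iteration counts must be calibrated against $r$ to keep the deviation probability under control.

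\textbf{Stage two: median amplification.} Once a single run succeeds with probability at least some fixed $p > 1/2$, I would invoke a standard Chernoff/Hoeffding argument on the number of ``good'' runs among the $t$ independent repetitions that $\WeightMC$ performs. If strictly more than half the runs return a value in the good interval, then their median also lies in that interval; the probability that fewer than half succeed decays exponentially in $t$, so choosing $t = \Theta(\log(1/\delta))$ drives the overall failure probability below $\delta$. I would also need to confirm that $\bot$-returns (runs where no suitable $m$ is found) are handled correctly — namely that the failure interval analysis still applies when medians are taken over the non-$\bot$ outputs.

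\textbf{Main obstacle.} The crux will be the variance computation in stage one and, in particular, obtaining a clean bound on $\mathsf{Var}[\scaledweight{\satisfying{F,h,\alpha}}]$ that is tight enough, given only the tilt bound $r$ (rather than detailed knowledge of the weight distribution), to force the single-run success probability above $1/2$. Unlike the unweighted $\ApproxMC$ analysis, here each witness contributes a \emph{different} weight, so the second-moment calculation must carry these weights through; the worst case for the variance is governed by how unevenly weight can be distributed across cells, which is exactly what $r$ controls. Getting the threshold and the range of admissible $m$ right — so that the expected cell weight brackets the threshold and the concentration bound holds simultaneously — is the delicate part, and I would expect the bulk of the proof effort to go into this second-moment estimate and the resulting choice of parameters.
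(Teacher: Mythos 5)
Your two-stage skeleton --- a constant-probability concentration bound for a single \WeightMCCore{} run via limited independence, followed by median amplification over $t = \Theta(\log(1/\delta))$ runs with care about $\bot$-returns --- is exactly the paper's architecture, and your stage two matches the paper's Theorem~\ref{thm:wtmcApproximation} proof (which bounds the error event by the binomial tail $\eta(t, t/2, 0.4) \le 3\cdot(0.98)^t$ with $t = \lceil 35\log_2(3/\delta)\rceil$, starting from a single-run success probability $(1-e^{-3/2})^2 > 0.6$, and handles $\bot$ by noting that a median error among $q$ non-$\bot$ outputs forces at least $t - q + \lceil q/2\rceil$ ``bad'' runs). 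The genuine gap is in stage one, precisely at what you call the main obstacle: the tilt bound $r$ does \emph{not} enter the concentration analysis at all. The paper's key move is to scale every weight by $1/\mathrm{w_{max}}$, so that each weighted indicator $\gamma_{y,\alpha} \in \{0, \scaledweight{y}\}$ is confined to $[0,1]$; then the same Chernoff--Hoeffding bound for sums of $3$-wise independent $[0,1]$-valued variables used in the unweighted \ApproxMC{} analysis (Lemma~\ref{theorem:chernoff-hoeffding}) applies verbatim with $\beta = \varepsilon/(1+\varepsilon)$, and the pivot $2\lceil e^{3/2}(1+1/\varepsilon)^2\rceil$ depends only on $\varepsilon$. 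Your intuition that the variance is ``governed by how unevenly weight can be distributed across cells, which is exactly what $r$ controls'' is incorrect: since $\sum_y \weight{y}^2 \le \wmax \sum_y \weight{y}$, the second moment is controlled by $\wmax$ alone, and the scaling eliminates even that dependence. Calibrating the threshold and iteration counts against $r$, as you propose, would not match the algorithm (whose pivot is $r$-free) and would needlessly weaken the result. In the paper, $r$ appears only in the runtime analysis (Theorem~\ref{thm:wtmcComplexity}: each \BoundedWeightSAT{} call makes at most $r \cdot \mathrm{pivot} + 1$ oracle calls) and in guaranteeing soundness of the online $\wmax$ estimate.

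Two further pieces your plan omits. First, the algorithm never knows $\wmax$: \BoundedWeightSAT{} maintains a running bound $\mathrm{w_{min}} \cdot r$, which upper-bounds the true maximum weight exactly because $r$ bounds the tilt --- this is the only place $r$ touches correctness, whereas your expectation computation $\expect\left[\scaledweight{\satisfying{F,h,\alpha}}\right] = 2^{-m}\,\scaledweight{\satisfying{F}}$ implicitly treats $\wmax$ as known. Second, the theorem is stated for an arbitrary independent support $S$, yet you hash over $S$ without justification; the paper devotes two lemmas to this (the decomposition of $F$ as $g_0(\vec{S}) \wedge \bigwedge_{j}(x_{k+j} \leftrightarrow g_j(\vec{S}))$, and the distributional equivalence of \BoundedWeightSAT{} under $H_{xor}(|S|, i, 3)$ versus $H_{xor}(n, i, 3)$), after which one may assume $S = X$. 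Neither omission is fatal to your architecture, but both steps must be supplied, and the $r$-dependent variance calibration must be replaced by the $[0,1]$-scaling argument for the proof to go through with the algorithm's actual parameters.
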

\begin{theorem}\label{thm:wtmcComplexity}
Given an oracle for {\SAT}, \WeightMC$(F, \varepsilon, \delta, S, r)$
runs in time polynomial in $\log_2(1/\delta), r, |F|$ and
$1/\varepsilon$ relative to the oracle.
\end{theorem}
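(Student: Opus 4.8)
The plan is to bound the running time of $\WeightMC$ by accounting for its two nested sources of repetition: the outer loop that invokes $\WeightMCCore$ enough times to boost confidence to $1-\delta$, and the work performed inside a single $\WeightMCCore$ call. I would first argue that the number of outer iterations is $O(\log_2(1/\delta))$, since the median-of-counts amplification standard to this family of algorithms (inherited from $\ApproxMC$) reduces the per-call failure probability to the desired $\delta$ using a logarithmic number of independent trials. This immediately isolates the linear dependence on $\log_2(1/\delta)$ claimed in the statement, reducing the problem to showing that each $\WeightMCCore$ call runs in time polynomial in $r$, $|F|$, and $1/\varepsilon$ relative to the $\SAT$ oracle.

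Next I would examine a single $\WeightMCCore$ invocation. It searches over values of $m$ (the number of XOR constraints, hence $m \le |S| \le |F|$), and for each candidate it samples a hash function from $H_{xor}(|S|,m,3)$ and calls $\BoundedWeightSAT$ with a pivot derived from $\varepsilon$. Each hash function consists of at most $|F|$ parity constraints, so constructing the augmented formula and each blocking clause via $\AddBlockClause$ costs time polynomial in $|F|$. The key quantity to control is the pivot: $\BoundedWeightSAT$ keeps enumerating satisfying assignments (each enumeration costing one $\SAT$-oracle call plus polynomial bookkeeping) until the scaled total weight exceeds the pivot, so the number of oracle calls is bounded by the number of assignments needed to reach that weight threshold.

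The main obstacle is precisely this last bound, because the weight function is a black box and individual scaled weights can be as small as $1/r$ (the minimum scaled weight is $\wmin/\wmax = 1/\ratio \ge 1/r$). I would show that since each satisfying assignment contributes scaled weight at most $1$ and at least $1/r$, reaching a pivot that is polynomial in $1/\varepsilon$ requires enumerating at most $O(r \cdot \mathrm{pivot})$ assignments, which is polynomial in both $r$ and $1/\varepsilon$. This is where the tilt bound enters the complexity essentially: a large tilt forces many light assignments to be enumerated before the weight threshold is crossed, giving the linear factor of $r$. Combining the $O(\log_2(1/\delta))$ outer iterations, the $O(|F|)$ candidate values of $m$, and the $\mathrm{poly}(r, |F|, 1/\varepsilon)$ oracle calls per $\BoundedWeightSAT$ invocation, and observing that all auxiliary operations (hash construction, blocking clauses, weight accumulation, median computation) are polynomial in $|F|$, yields the stated overall bound polynomial in $\log_2(1/\delta)$, $r$, $|F|$, and $1/\varepsilon$ relative to the $\SAT$ oracle.
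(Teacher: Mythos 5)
Your proposal is correct and takes essentially the same route as the paper's proof: $t = \lceil 35\log_2(3/\delta)\rceil$ outer iterations read off the pseudocode, $\mathcal{O}(|F|)$ calls to \BoundedWeightSAT{} per \WeightMCCore{} invocation, at most $(r \cdot \mathrm{pivot})+1 = \mathcal{O}(r/\varepsilon^2)$ {\SAT}-oracle calls per \BoundedWeightSAT{} call, and polynomial auxiliary work for hash construction and blocking clauses. The only (cosmetic) difference is that the paper derives the per-call bound from the dynamically updated $\mathrm{w_{min}}$ in the termination test $\mathrm{w_{total}}/(\mathrm{w_{min}} \cdot r) > \mathrm{pivot}$ --- each enumerated weight is at least the final $\mathrm{w_{min}}$, so at most $r \cdot \mathrm{pivot}$ assignments can be enumerated --- whereas you appeal to the true tilt $\wmin/\wmax \ge 1/r$; since the algorithm's $\wmax$ is only a running estimate your phrasing is slightly loose, but the counting argument is the same and goes through relative to the scale $\mathrm{w_{min}} \cdot r$ actually used in the test.
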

We defer all proofs to the supplementary material for lack of space.

\subsection{WeightGen Algorithm}
The pseudocode for {\WeightGen} is presented in Algorithm
\ref{alg:WtGen}. {\WeightGen} takes in a CNF formula $F$, tolerance
$\varepsilon > 1.71$, tilt upper bound $r$, and independent support
$S$ and returns a random (approximately weighted-uniform) satisfying
assignment.
{\WeightGen} first computes $\kappa$ and $\mathrm{pivot}$ and uses
them to compute $\mathrm{hiThresh}$ and $\mathrm{loThresh}$, which
quantify the size of a ``small" cell. The easy case of the weighted
count being less than $\mathrm{hiThresh}$ is handled in lines
6--9. Otherwise, {\WeightMC} is called to estimate the weighted model
count, which is used to estimate the range of candidate values for
$m$.  The choice of parameters for {\WeightMC} is motivated by
technical reasons. The loop in 13--19 terminates when a small cell is
found and a sample is picked weighted-uniformly at random. Otherwise,
the algorithm reports a failure.
\begin{theorem}\label{thm:wtgenUniformity}
Given a CNF formula $F$, tolerance $\varepsilon > 1.71$, tilt bound
$r$, and independent support $S$, for every $y \in R_F$ we have $
\frac{\weight{y}}{(1+\varepsilon)\weight{R_F}} \le
\prob\left[{\WeightGen}(F, \varepsilon, r, X) = y\right] \le $ $
(1+\varepsilon)\frac{\weight{y}}{\weight{R_F}}.$ Also, {\WeightGen}
succeeds (i.e. does not return $\bot$) with probability at least
$0.62$.
\end{theorem}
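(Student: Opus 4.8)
The plan is to establish the two claims of Theorem~\ref{thm:wtgenUniformity} separately: first the almost-uniformity bound on the per-witness output probability, and then the lower bound $0.62$ on the success probability. Since $\WeightGen$ is an adaptation of the $\UniformWitness$ sampler to the weighted setting, I expect the proof to mirror the original near-uniform sampling analysis, with counts of satisfying assignments replaced everywhere by scaled weights $\scaledweight{\cdot} = \weight{\cdot}/\wmax$. The central object is the weighted cell $\satisfying{F,h,\alpha}$ obtained by applying a random hash function $h \xleftarrow{R} H_{xor}(|S|,m,3)$ and a random cell label $\alpha \in \{0,1\}^m$; because the hash family is $3$-independent, I can control the first two moments of the random variable $\scaledweight{\satisfying{F,h,\alpha}}$.

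First I would fix a witness $y \in \satisfying{F}$ and analyze the probability that $\WeightGen$ outputs $y$. Conditioned on choosing a particular value of $m$ and a cell containing $y$ whose scaled weight lies between $\mathrm{loThresh}$ and $\mathrm{hiThresh}$, the final weighted-uniform draw from that small cell returns $y$ with probability exactly $\weight{y}/\weight{\satisfying{F,h,\alpha}}$. The plan is then to sum over all hash functions and cell labels $\alpha$ that make $y$'s cell ``small'', and use $3$-independence to argue that the total contribution is sandwiched, after accounting for the range $[\mathrm{loThresh},\mathrm{hiThresh}]$ and the choice of $\mathrm{pivot}$ and $\kappa$, between $\frac{\weight{y}}{(1+\varepsilon)\weight{\satisfying{F}}}$ and $(1+\varepsilon)\frac{\weight{y}}{\weight{\satisfying{F}}}$. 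The key technical inputs are: (i) that $\WeightMC$ supplies, with high probability, an estimate of $\weight{\satisfying{F}}$ accurate enough to pin down the correct range of candidate values of $m$; and (ii) second-moment (Chebyshev-type) concentration of $\scaledweight{\satisfying{F,h,\alpha}}$ around its mean $\scaledweight{\satisfying{F}}/2^m$, which for a $3$-independent family requires bounding the pairwise-collision term. The presence of arbitrary (black-box) weights rather than unit weights means these moment bounds must be expressed in terms of $\wmax$, $\wmin$ and hence the tilt bound $r$; controlling the variance in the weighted setting is where the hypothesis on the tilt does its work.

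For the success-probability claim I would argue that $\WeightGen$ fails only if either the $\WeightMC$ subcall returns a count outside its guaranteed tolerance, or the loop over candidate $m$ values exits without ever finding a cell whose scaled weight falls in $[\mathrm{loThresh},\mathrm{hiThresh}]$. The first event is bounded by the confidence parameter passed into $\WeightMC$, chosen so that its contribution to the failure budget is small. For the second event, the plan is to show that for the ``correct'' value of $m$ — the one for which the expected scaled cell weight lands comfortably inside the threshold window — both the probability that a random cell is too heavy and the probability that it is too light are each bounded below $1$ by a constant, again via the second-moment method; a union bound then caps the total failure probability, and the arithmetic is arranged so that the surviving success probability is at least $0.62$. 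The constraint $\varepsilon > 1.71$ enters precisely here: it guarantees that $\mathrm{hiThresh}$ and $\mathrm{loThresh}$ are separated widely enough (relative to $\kappa$ and $\mathrm{pivot}$) that a single value of $m$ has a good chance of yielding a small cell.

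The main obstacle I anticipate is the second-moment estimate in the weighted regime. In the unweighted $\UniformWitness$ analysis one bounds the variance of the cell-count directly using $3$-independence, but here each witness contributes its own weight to $\scaledweight{\satisfying{F,h,\alpha}}$, so the variance involves cross terms $\weight{y}\weight{y'}$ over pairs of distinct witnesses. Showing that this variance stays small enough to guarantee concentration — and that the resulting bounds, after scaling by $1/\wmax$ and invoking the tilt bound $r$, still yield the clean $(1+\varepsilon)$ sandwich and the $0.62$ success constant — is the delicate step. I would handle it by factoring $\scaledweight{\cdot}$ out so that every scaled witness weight lies in $(0,1]$, bounding the second moment by the first moment times the maximal scaled weight (which is $1$), and then tuning the definitions of $\mathrm{pivot}$, $\kappa$, $\mathrm{hiThresh}$ and $\mathrm{loThresh}$ so that Chebyshev's inequality delivers constants strong enough for both parts of the theorem simultaneously.
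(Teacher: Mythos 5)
Your plan is correct and follows essentially the same route as the paper's proof: a lemma showing the $\WeightMC$-derived estimate $q$ satisfies $q-3 \le m \le q$ with probability at least $0.8$, per-witness lower and upper bounds on the output probability obtained by second-moment concentration of $\scaledweight{\satisfying{F,h,\alpha}}$ around $\scaledweight{\satisfying{F}}/2^i$ (Chernoff--Hoeffding for $i=m$, Chebyshev for the early iterations $i<m$, summed over the candidate window and the cases $m<q-3$, $q-3\le m\le q$, $m>q$), and a success-probability bound of the form $p_m \cdot f_m \ge 0.8\,(1-e^{-3/2}) \approx 0.62$. One clarification: contrary to your middle paragraph, the tilt bound $r$ plays \emph{no} role in the concentration or uniformity analysis (it only governs the termination and runtime of $\BoundedWeightSAT$, hence Theorem~\ref{thm:wtgenComplexity}); your own final paragraph already contains the paper's actual mechanism, namely that after scaling by $1/\mathrm{w_{max}}$ every witness weight lies in $(0,1]$, so the variance is bounded by the mean and the sandwich constants come out independent of $r$.
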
 
\begin{theorem}\label{thm:wtgenComplexity}
Given an oracle for {\SAT}, $\WeightGen(F,\varepsilon,r,S)$ runs in time polynomial in $ r, |F|$ and
$1/\varepsilon$ relative to the oracle.
\end{theorem}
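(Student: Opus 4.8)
The plan is to bound the running time of $\WeightGen$ by accounting separately for each of its components and showing that every one is polynomial in $r$, $|F|$, and $1/\varepsilon$ relative to the SAT oracle. First I would dispense with the preprocessing: computing $\kappa$, $\mathrm{pivot}$, $\mathrm{hiThresh}$, and $\mathrm{loThresh}$ involves only elementary arithmetic on quantities that are themselves polynomial in $1/\varepsilon$, so this phase contributes negligible cost. The single invocation of $\WeightMC$ used to estimate the weighted model count is made with a fixed confidence parameter, so by Theorem~\ref{thm:wtmcComplexity} it terminates in time polynomial in $r$, $|F|$, and $1/\varepsilon$; this accounts for the most expensive single subroutine call and can be quoted directly.

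The heart of the argument is bounding the main loop (lines 13--19) together with each call to $\BoundedWeightSAT$. The key lemma I would establish is that a single call to $\BoundedWeightSAT$ makes only polynomially many SAT-oracle queries. That subroutine enumerates witnesses one at a time, each enumeration costing one oracle call plus an $\AddBlockClause$ step, until the total weight of the witnesses found, scaled by $1/\wmax$, exceeds the pivot. Since every witness $\sigma$ satisfies $\weight{\sigma} \ge \wmin$, its scaled weight is at least $\wmin/\wmax = 1/\ratio \ge 1/r$. Hence the number of witnesses enumerated before the scaled total exceeds $\mathrm{pivot}$ is at most $r\cdot\mathrm{pivot} + 1$, which is polynomial in $r$ and $1/\varepsilon$. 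This is exactly where the tilt bound $r$ is indispensable: without it, individual witness weights could be arbitrarily small relative to the scaling factor, and the enumeration could require superpolynomially many oracle calls.

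I would then bound the number of loop iterations. After $\WeightMC$ returns its estimate of $\weight{\satisfying{F}}$, the candidate values for $m$ range over an interval whose length is at most $|S| \le |F|$, since $m$ cannot exceed the size of the independent support, so the loop executes polynomially many times. Each iteration samples a hash function from $H_{xor}(|S|, m, 3)$ at cost $O(m\cdot|S|)$, invokes $\BoundedWeightSAT$ (polynomially many oracle calls by the lemma above), and, in the successful case, draws a weighted-uniform sample from the small cell by enumerating its at most $\mathrm{hiThresh}$ witnesses and selecting one with probability proportional to its weight---again polynomial in $1/\varepsilon$. Multiplying the per-iteration cost by the number of iterations and adding the cost of the single $\WeightMC$ call yields the claimed polynomial bound.

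The main obstacle is the $\BoundedWeightSAT$ analysis. One must argue carefully that the stopping condition is reached after polynomially many enumerations, which hinges on the per-witness scaled-weight lower bound $1/r$ derived from the tilt assumption, and one must verify that the \emph{online} estimation of $\wmax$ does not degrade this bound: because the running estimate never exceeds the true maximum weight of a satisfying assignment, scaling by it can only increase scaled weights, so the lower bound $1/r$ is preserved throughout and the enumeration length stays controlled.
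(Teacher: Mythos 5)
Your proposal is correct and follows essentially the same route as the paper's proof: one call to {\WeightMC} with fixed parameters $(\varepsilon = 0.8, \delta = 0.2)$, polynomial by Theorem~\ref{thm:wtmcComplexity}, plus calls to {\BoundedWeightSAT}, each making at most $(r \cdot \mathrm{pivot}) + 1$ {\SAT}-oracle queries, with $\mathrm{pivot} = \mathcal{O}(1/\varepsilon^2)$. Two points of divergence are worth noting. First, you bound the number of iterations of the main loop by $|S|$; in fact the loop counter runs only from $q-3$ to $q$, so there are at most $4$ iterations and hence at most $5$ calls to {\BoundedWeightSAT} in all, which is the constant bound the paper uses---your polynomial bound is weaker but still suffices for the theorem. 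Second, your closing remark about the online estimation of $\wmax$ has the inequality direction inverted: the running estimate $\mathrm{w_{min}} \cdot r$ returned by {\BoundedWeightSAT} is always an \emph{upper} bound on the true $\wmax$ (by induction: either the running $\mathrm{w_{min}}$ is a witness weight, hence at least $\wmin$, so $\mathrm{w_{min}} \cdot r \ge \wmin \cdot \ratio = \wmax$; or $\mathrm{w_{min}} \cdot r$ equals the incoming estimate, an upper bound inductively, with base case $1 \ge \wmax$), and dividing by an over-estimate \emph{decreases} scaled weights rather than increasing them. Fortunately your key lemma does not need that remark at all, and the paper's argument shows why: the termination test compares $\mathrm{w_{total}}$ against $\mathrm{pivot} \cdot \mathrm{w_{min}} \cdot r$ using the \emph{running} minimum, and every witness accumulated into $\mathrm{w_{total}}$ has weight at least that running minimum by construction, so after $k$ iterations $\mathrm{w_{total}}/(\mathrm{w_{min}} \cdot r) \ge k/r$, forcing termination within $(r \cdot \mathrm{pivot}) + 1$ iterations with no comparison to the true $\wmax$ whatsoever. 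With that one sentence repaired, your proof is sound and matches the paper's.
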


\subsection{Implementation Details}
 In our implementations of {\WeightGen} and {\WeightMC},
 {\BoundedWeightSAT} is implemented using
 CryptoMiniSAT~\cite{CryptoMiniSAT}, a SAT solver that handles xor
 clauses efficiently. CryptoMiniSAT uses \emph{blocking clauses} to
 prevent already generated witnesses from being generated again.
 Since the independent support of $F$ determines every satisfying
 assignment of $F$, blocking clauses can be restricted to only
 variables in the set $S$.  We implemented this optimization in
 CryptoMiniSAT, leading to significant improvements in performance. We
 used ``random\_device" implemented in C++11 as source of
 pseudo-random numbers to make random choices in {\WeightGen} and
 {\WeightMC}.

\begin{algorithm}[h]

\scriptsize
\caption{\WeightMC$(F, \varepsilon, \delta, S, r)$}
\label{alg:WtMC}

\begin{algorithmic}[1]

\State $\mathrm{counter} \gets 0; C \gets \mathsf{emptyList}; \mathrm{w_{max}} \gets 1$; \label{line:weightmc-init-start}
\State $\mathrm{pivot} \gets 2 \times \lceil e^{3/2}\left(1 + \frac{1}{\varepsilon}\right)^2 \rceil$;
\State $t \gets \left\lceil 35\log_2 (3/\delta) \right\rceil$; \label{line:weightmc-init-end}
\Repeat \label{line:weightmc-loop-start}
	\State $(c,\mathrm{w_{max}}) \gets \WeightMCCore(F,S,\mathrm{pivot}, r,\mathrm{w_{max}})$;
	\State $\mathrm{counter} \gets \mathrm{counter}+1$;
	\If {$ c \neq \bot$}
		\State $ \mathsf{AddToList}(C,c \cdot \mathrm{w_{max}} )$;
	\EndIf
\Until { $\mathrm{counter} < t$} \label{line:weightmc-loop}
\State $\mathrm{finalCount} \gets \mathsf{FindMedian}(C)$; \label{line:weightmc-median}
\State \Return $\mathrm{finalCount}$;
\end{algorithmic}
\end{algorithm}

\begin{algorithm}
\scriptsize
\caption{\WeightMCCore$(F,S,\mathrm{pivot}, r,\mathrm{w_{max}})$}
\label{alg:WtMCCore}
\begin{algorithmic}[1]
\State $(Y,\mathrm{w_{max}}) \gets \BoundedWeightSAT(F,\mathrm{pivot}, r,\mathrm{w_{max}},S)$;
\If {$\weight{Y} / \mathrm{w_{max}} \leq \mathrm{pivot}$}
	\State \Return $\weight{Y}$;
\Else
	\State $i \gets 0$;
	\Repeat \label{line:wmccore-loop-start}
		\State $i \gets i+1$;
		\State Choose $h$ at random from $H_{xor}(|S|, i, 3)$; \label{line:WeightMCCore-choose-hash}
 		\State Choose $\alpha$ at random from $\{0, 1\}^{i}$; \label{line:weightmccore-choose-alpha}
		\State $(Y,\mathrm{w_{max}}) \gets \BoundedWeightSAT(F \wedge (h(x_1, \ldots x_{|S|}) = \alpha),\mathrm{pivot},\ratio, \mathrm{w_{max}},S)$; \label{line:WeightMCCore-call-BoundedWeightSAT}
	\Until{$ (0 < \weight{Y}/\mathrm{w_{max}} \leq \mathrm{pivot})$ or $i = n$} \label{line:wmccore-loop}
	\If {$\weight{Y}/\mathrm{w_{max}} > \mathrm{pivot}$ {\bfseries or} $\weight{Y} = 0$}
		\Return $(\bot,\mathrm{w_{max}})$;
	\Else 
		\Return $ (\frac{\weight{Y} \cdot 2^{i-1}}{\mathrm{w_{max}}},\mathrm{w_{max}})$; \label{line:wmccore-return}
	\EndIf
\EndIf
\end{algorithmic}
\end{algorithm}

\begin{algorithm}
\scriptsize
\caption{$\BoundedWeightSAT(F,\mathrm{pivot}, r,\mathrm{w_{max}},S)$}
\label{alg:bounded}
\begin{algorithmic}[1]
\State $\mathrm{w_{min}} \gets \mathrm{w_{max}}/r; \mathrm{w_{total}} \gets 0; Y = \{\} $;
\Repeat
	\State $y \gets \SolveSAT (F)$;
\If {y == \UNSAT}
	\State break;
\EndIf
\State $Y = Y \cup y$;
\State $F = \AddBlockClause(F,y|_S)$;
\State $\mathrm{w_{total}} \gets \mathrm{w_{total}} + \weight{y}$;
\State $\mathrm{w_{min}} \gets min(\mathrm{w_{min}}, \weight{y})$;
\Until {$\mathrm{w_{total}}/(\mathrm{w_{min}} \cdot r) > pivot$};
\State \Return $(Y,\mathrm{w_{min}} \cdot r)$;
\end{algorithmic}
\end{algorithm}
 {\fontsize{5pt}{5pt}\selectfont
\begin{algorithm}
\scriptsize
\caption{\WeightGen$(F, \varepsilon, r, S)$}
\label{alg:WtGen}
\begin{algorithmic}[1]
\Statex \hspace*{-0.6cm}/*Assume $\varepsilon > 1.71$ */ 
\State $\mathrm{w_{max}} \gets 1$; $\mathrm{Samples} = \{\}$;
\State $(\kappa, \mathrm{pivot}) \gets \ComputeKappaAndPivot(\varepsilon)$;
\State $\mathrm{hiThresh} \gets 1 + (1+\kappa)\mathrm{pivot}$;
\State $\mathrm{loThresh} \gets \frac{1}{1+\kappa} \mathrm{pivot}$;
\State $(Y,\mathrm{w_{max}}) \gets \BoundedWeightSAT(F, \mathrm{hiThresh}, r,\mathrm{w_{max}},S)$;
\If {$\left(\weight{Y} / \mathrm{w_{max}} \le \mathrm{hiThresh} \right)$}
 	 \State Choose $y$ weighted-uniformly at random from $Y$;
	 \State \Return $y$; \label{line:weightgen-base-return}
\Else
	 \State $(C,\mathrm{w_{max}}) \gets \WeightMC(F, 0.8, 0.2);$ \label{line:weightgen-approx-counter}
	 	\State $q \gets \lceil\log C - \log \mathrm{w_{max}}+ \log 1.8- \log \mathrm{pivot}\rceil$; \label{line:weightgen-q}
 	 	\State $i \gets q-4$;
		 \Repeat \label {line:weightgen-loop-start} 
    	 	\State $i \gets i+1$;
 			\State Choose $h$ at random from $H_{xor}(|S|, i, 3)$;
 			\State Choose $\alpha$ at random from $\{0, 1\}^{i}$; \label{line:weightgen-alpha}
			\State $(Y,\mathrm{w_{max}}) \gets \BoundedWeightSAT(F \wedge (h(x_1, \ldots x_{|S|}) = \alpha),\mathrm{hiThresh}, r, \mathrm{w_{max}},S)$; \label{line:weightgen-bwsat-2}
			\State $W \gets \weight{Y} / \mathrm{w_{max}}$
		\Until {$\left(\mathrm{loThresh} \le W \le \mathrm{hiThresh} \right)$ or ($i = q$)} \label{line:weightgen-loop}
	\If {($W > \mathrm{hiThresh}$) or ($W < \mathrm{loThresh}$)} \label{line:weightgen-failure} {\Return} $\bot$
	{\Else} Choose $y$ weighted-uniformly at random from $Y$; \Return $y$; \label{line:weightgen-last-return}
	\EndIf

\EndIf
\end{algorithmic}
\end{algorithm}
}

\begin{algorithm}
\scriptsize
\caption{$\ComputeKappaAndPivot(\varepsilon)$}
\label{alg:computeKappa}
\begin{algorithmic}[1]
\State Find $\kappa \in [0, 1)$ such that $\varepsilon = (1+\kappa)(2.36+\frac{0.51}{(1-\kappa)^2}) - 1$ ;
\State $\mathrm{pivot} \gets \lceil e^{3/2}\left(1 + \frac{1}{\kappa}\right)^2 \rceil$; \Return $(\kappa, \mathrm{pivot})$

\end{algorithmic}
\end{algorithm}

\section{Experimental Results}
\label{sec:experiments}

To evaluate the performance of {\WeightGen} and {\WeightMC}, we built 
prototype implementations and conducted an extensive set of experiments. The 
suite of benchmarks was made up of problems arising from various practical 
domains as well as problems of theoretical interest. Specifically, we
used bit-level unweighted versions of constraints arising from grid
networks, plan recognition, DQMR networks, bounded model checking of
circuits, bit-blasted versions of SMT-LIB~\cite{SMTLib} benchmarks,
and ISCAS89~\cite{brglez1989combinational} circuits  
with parity conditions on randomly chosen subsets of outputs and next-state 
variables~\cite{SangBearKautz2005,AJSC11}. While our algorithm is agnostic 
to the weight oracle, other tools that we used for comparison require the 
weight of an assignment to be the product of the weights of its literals. 
Consequently, to create weighted problems with tilt at most some bound $r$, 
we randomly selected $m = \max(15, n/100)$ of the variables and assigned 
them the weight $w$ such that $(w / (1-w))^m = r$, their negations the weight $1-w$, and all other 
literals the weight 1. Unless mentioned otherwise, our 
experiments for {\WeightMC} used $r = 3$, $\epsilon=0.8$, and $\delta=0.2$,
while our experiments for {\WeightGen} used $r =3$ and $\epsilon = 5$.

To facilitate performing multiple experiments in parallel, we used a high 
performance cluster, each experiment running on its own core.  Each node 
of the cluster had two quad-core Intel Xeon processors with 4GB of main 
memory. We used 2500 seconds as the timeout of each invocation of 
{\BoundedWeightSAT} and 20 hours as the overall timeout for 
{\WeightGen} and {\WeightMC}. If an invocation of {\BoundedWeightSAT} 
timed out in line \ref{line:WeightMCCore-call-BoundedWeightSAT} ({\WeightMC}) and line \ref{line:weightgen-bwsat-2} ({\WeightGen}), we repeated 
the execution of the corresponding loops without incrementing the variable 
$i$ (in both algorithms). With this setup, {\WeightMC} and {\WeightGen} 
were able to successfully return weighted counts and generate weighted 
random instances for formulas with close to 64,000 variables.

We compared the performance of {\WeightMC} with the {\SDD}
Package~\cite{sdd-package}, a state-of-the-art tool which can perform
exact weighted  model counting by compiling CNF formulae into Sentential 
Decision Diagrams~\cite{ChoiDarwiche13}. (We also tried to compare our tools
against {\Cachet},  {\WISH} and {\PAWS} but we have not been able to run 
these tools on our systems.)
Our results are shown in Table \ref{table:performance}, where column 1
lists the benchmarks and columns 2 and 3 give the number of variables
and clauses for each benchmark. Column 4 lists the time taken by
{\WeightMC}, while column 5 lists the time taken by {\SDD}. We also measured
the time taken by {\WeightGen} to generate samples, which we will discuss later in this section, and list it i column 6.   ``T'' and
``mem'' indicate that an experiment exceeded our imposed 20-hour and
4GB-memory limits, respectively. While {\SDD} was generally superior for 
small problems, {\WeightMC} was significantly faster for all benchmarks 
with more than 1,000 variables. 

\begin{table}
\caption{{\WeightMC}, {\SDD}, and {\WeightGen} runtimes in seconds.}
\label{table:performance}
\scriptsize
\begin{tabular}{|l|r|r|r|r|r|}
\hline
\textbf{Benchmark} & \textbf{vars} & \textbf{\#clas} & \textbf{\shortstack{Weight-\\MC}} & \textbf{SDD} &\textbf{\shortstack{Weight-\\Gen}} \\
\hline
or-50&100&266&15&0.38&0.14\\ \hline 
or-70&140&374&771&0.83&13.37\\ \hline 
s526\_3\_2&365&943&62&29.54&0.85\\ \hline 
s526a\_3\_2&366&944&81&12.16&1.1\\ \hline 
s953a\_3\_2&515&1297&11978&355.7&21.14\\ \hline 
s1238a\_7\_4&704&1926&3519&mem&19.52\\ \hline 
s1196a\_15\_7&777&2165&3087&2275&19.59\\ \hline 
Squaring9&1434&5028&34942&mem&110.37\\ \hline 
Squaring7&1628&5837&39367&mem&113.12\\ \hline 
ProcessBean&4768&14458&53746&mem&418.29\\ \hline 
LoginService2&11511&41411&322&mem&3.45\\ \hline 
Sort&12125&49611&19303&T&140.19\\ \hline 
EnqueueSeq&16466&58515&8620&mem&165.64\\ \hline 
Karatsuba&19594&82417&4962&mem&193.11\\ \hline 
TreeMax&24859&103762&34&T&2.0\\ \hline 
LLReverse&63797&257657&1496&mem&88.0\\ \hline
\end{tabular}
\end{table}

To evaluate the quality of the approximate counts returned by
{\WeightMC}, we computed exact weighted model counts using the {\SDD}
tool for a subset of our benchmarks. Figure \ref{fig:quality_comparison}
shows the counts returned by {\WeightMC}, and the exact counts from {\SDD} scaled up and down by 
$(1+\varepsilon)$. The weighted model counts are represented on the y-axis, while the x-axis represents benchmarks arranged in increasing order of counts.  We observe,
for all our experiments, that the weighted counts returned by
{\WeightMC} lie within the tolerance of the exact counts. Over all
of the benchmarks, the $L_1$ norm of the relative error was 0.036,
demonstrating that in practice {\WeightMC} is substantially more accurate 
than the theoretical guarantees provided by Theorem~\ref{thm:wtgenUniformity}. 
\begin{figure}[htb]\centering
\scriptsize
\includegraphics[width=\linewidth,clip=true,trim=0 1.1cm 0 1cm]{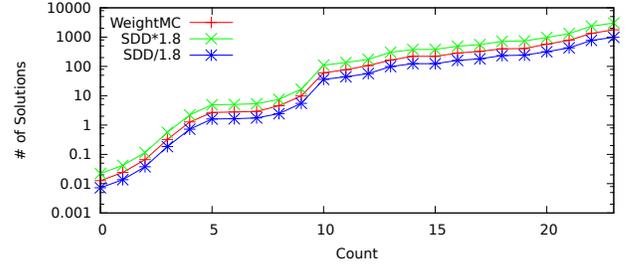}
\caption{\scriptsize Quality of counts computed by {\WeightMC}. The benchmarks are
  arranged in increasing order of weighted model counts.}
\label{fig:quality_comparison}
\end{figure}
\begin{figure}[htb]\centering
\scriptsize
\includegraphics[width=\linewidth,clip=true,trim=0 1.1cm 0 1cm]{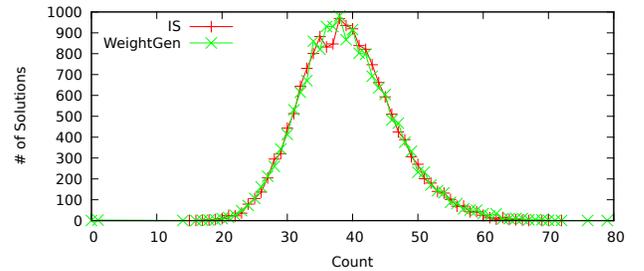}
\vspace*{-0.15in}
\caption{\scriptsize Uniformity comparison for case110}
\label{fig:wtgen-uniform}

\end{figure}

In another experiment, we studied the effect of different values of the 
tilt bound $r$ on the runtime of {\WeightMC}.  Runtime as a function $r$ is shown for several benchmarks in Figure \ref{fig:weightmc-ratio},
where times have been normalized so that at the lowest tilt ($r = 1$) 
each benchmark took one time unit. Each runtime is an average over five 
runs on the same benchmark. The theoretical linear dependence on the tilt
shown in Theorem~\ref{thm:wtmcComplexity} can be seen to roughly 
occur in practice. 
\begin{figure}[htb]\centering
\scriptsize
\includegraphics[width=\linewidth,clip=true,trim=0 0.7cm 0 1cm]{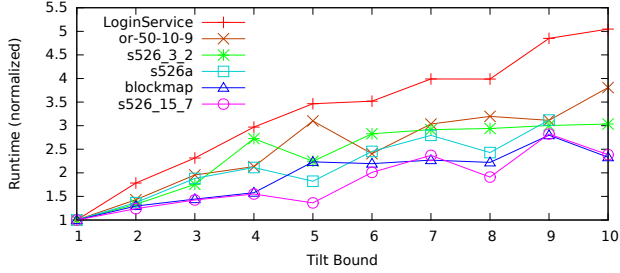}
\caption{\scriptsize Runtime of {\WeightMC} as a function of \emph{tilt bound}.}
\label{fig:weightmc-ratio}
\end{figure}

Since a probabilistic generator is likely to be invoked many times with the same formula and weights, it is useful to perform the counting on line \ref{line:weightgen-approx-counter} of {\WeightGen} only once, and reuse the result for every sample. Reflecting this, column 6 in Table
~\ref{table:performance} lists the time, averaged over a large number of runs, taken by {\WeightGen}
to generate one sample given that the weighted model count on line \ref{line:weightgen-approx-counter} has already been found. It is clear
from Table ~\ref{table:performance} that {\WeightGen} scales to
formulas with thousands of variables.  
 
To measure the accuracy of {\WeightGen}, we implemented an \emph{Ideal
Sampler}, henceforth called {\IS}, and compared the  
distributions generated by {\WeightGen} and {\IS} for a representative 
benchmark. Given a CNF formula $F$, {\IS} first generates all the 
satisfying assignments, then computes their weights and uses these to
sample the ideal distribution.  We then generated a large number $N$ 
($=6 \times 10^5$) of sample witnesses using both {\IS} and 
{\WeightGen}. In each case, the number of times various witnesses were
generated was recorded, yielding a distribution of the counts. Figure
\ref{fig:wtgen-uniform} shows the distributions generated by
{\WeightGen} and {\IS} for one of our benchmarks (case110) with 16,384
solutions. The almost perfect match between the distribution generated by
{\IS} and {\WeightGen} held also for other benchmarks.  Thus, as was the 
case for {\WeightMC}, the accuracy of {\WeightGen} is better in practice 
than that established by Theorem~\ref{thm:wtgenUniformity}.

\section{White-Box Weight Functions}
\label{sec:white-box}

As noted above, the runtime of {\WeightMC} is proportional to the tilt of 
the weight function, which means that the algorithm becomes impractical 
when the tilt is large. If the assignment weights are given by a known 
polynomial-time-computable function instead of an oracle, we can do better.
We abuse notation slightly and 
denote this weight function by $\weight{X}$, where $X$ is the set
of support variables of the Boolean formula $F$.
The essential idea is to partition the set of satisfying assignments into 
regions within which the tilt is small. Defining
$
\satisfying{F}(a,b) = \{ \sigma \in \satisfying{F} | 
a < \weight{\sigma} \le b \} ,
$
we have $\weight{\satisfying{F}} = \weight{\satisfying{F}(\wmin, \wmax)}$. 
If we use a partition of the form 
$\satisfying{F}(\wmin, \wmax) = 
\satisfying{F}(\wmax / 2, \wmax) \cup 
\satisfying{F}(\wmax / 4, \wmax / 2) \cup \dots 
\cup \satisfying{F}(\wmax / 2^N, \wmax / 2^{N-1})$, 
where $\wmax / 2^N \le \wmin$, then in each partition region the tilt is 
at most 2.  Note that we do not need to know the actual values of $\wmin$ 
and $\wmax$: any bounds $L$ and $H$ such that 
$0 < L \le \wmin$ and $\wmax \le H$ will do (although if the bounds are too
loose, we may partition $\satisfying{F}$ into more regions than necessary).
If assignment weights are poly-time computable, we can add to $F$ a 
constraint that eliminates all assignments not in a particular region. 
So we can run {\WeightMC} on each region in turn, passing 2 as the upper 
bound on the tilt, and sum the results to get $\weight{\satisfying{F}}$. 
This idea is implemented in {\PartitionedWeightMC} 
(Algorithm \ref{alg:PartitionedWeightMC}).

\begin{algorithm}[h]
\caption{\PartitionedWeightMC$(F, \varepsilon, \delta, S, L, H)$}
\label{alg:PartitionedWeightMC}
\begin{algorithmic}[1]
\State $N \leftarrow \ceil{\log_2 H / L} + 1$; $\delta' \leftarrow \delta / N$; $c \leftarrow 0$
\ForAll{$1 \le m \le N$}
	\State $G \leftarrow F \land (H / 2^m < \weight{X} \le H / 2^{m-1})$ 
	\State $d \leftarrow \WeightMC(G, \varepsilon, \delta', S, 2)$
	\If{$(d = \bot)$}
		\Return $\bot$ %
	\EndIf
	\State $c \leftarrow c + d$
\EndFor
\State \Return $c$
\end{algorithmic}
\end{algorithm}

The correctness and runtime of {\PartitionedWeightMC} are established by 
the following theorems, whose proof is deferred to 
Appendix.
\begin{theorem}
\label{theorem:partitioned-approx}
If $\PartitionedWeightMC(F, \varepsilon, \delta, S, L, H)$ returns $c$ 
(and all arguments are in the required ranges), then
\[
\Pr \left[ c \ne \bot \land (1 + \varepsilon)^{-1} \weight{\satisfying{F}} \le c \le (1 + \varepsilon) \weight{\satisfying{F})} \right] \ge 1 - \delta .
\]
\end{theorem}

\begin{theorem}
\label{theorem:partitioned-runtime}
With access to an {\NP} oracle, the runtime of 
$\PartitionedWeightMC(F, \varepsilon, \delta, S, L, H)$ is polynomial in 
$|F|$, $1/\varepsilon$, $\log (1/\delta)$, and $\log r = \log (H / L)$.
\end{theorem}

The reduction of the runtime's dependence on the tilt bound $r$ from linear
to logarithmic can be a substantial saving. If the assignment weights are 
products of literal weights, as is the case in many applications, the best 
\emph{a priori} bound on the tilt {\ratio} given only the literal weights 
is exponential in $n$. Thus, unless the structure of the problem allows a 
better bound on {\ratio} to be used, {\WeightMC} will not be practical. 
In this situation {\PartitionedWeightMC} can be used to maintain polynomial
runtime.

When implementing {\PartitionedWeightMC} in practice
the handling of the weight constraint
$H / 2^m < \weight{X} \le H / 2^{m-1}$ is critical to
efficiency. If assignment weights are sums of literal weights, or
equivalently products of literal weights (we just take logarithms),
then the weight constraint is a pseudo-Boolean constraint. In this
case we may replace the {\SAT}-solver used by {\WeightMC} with a
pseudo-Poolean satisfiability (PBS) solver. While a number of
PBS-solvers exist~\cite{pbscomp-2012}, none have the specialized
handling of XOR clauses that is critical in making {\WeightMC}
practical. The design of such solvers is a clear direction for future
work. We also note that the choice of 2 as the tilt bound for each region 
is arbitrary, and the value may be adjusted depending on the application: 
larger values will decrease the number of regions, but increase the 
difficulty of counting within each region. Finally, note that the same partitioning idea can be used to reduce
{\WeightGen}'s dependence on $r$ to be logarithmic.

\section{Conclusion}
\label{sec:conclusion}
In this paper, we considered approximate approaches to the twin problems of
distribution-aware sampling and weighted model counting for SAT. For
approximation techniques that provide strong theoretical two-way
bounds, a major limitation is the reliance on potentially-expensive
maximum a posteriori (MAP) queries. We showed how to remove this reliance
on MAP queries, while retaining strong theoretical guarantees. First,
we provided model counting and sampling algorithms that work with a
black-box model of giving weights to assignments, requiring access
only to an $\NP$-oracle, which is efficient for small tilt values.
Experimental results demonstrate the effectiveness of this approach 
in practice. Second, we provide an alternative approach that promises to be
efficient for tilt value, requiring, however, a white-box model of weighting 
and access to a pseudo-Boolean solver. As a next step, we plan to empirically 
evaluate this latter approach using pseudo-Boolean solvers designed to
handle parity constraints efficiently.

\clearpage
\bibliography{Report}
\bibliographystyle{aaai}

\clearpage
\appendix

\setcounter{theorem}{0}
\setcounter{lemma}{0}
\section*{APPENDIX}
Using notation introduced in Section~\ref{sec:prelims}, let $\weight{y}$ denote the weight of solution $y$ 
and $\satisfying{F}$ denote the set of witnesses of the Boolean formula $F$. We denote the weight of the set $\satisfying{F}$ by $\weight{\satisfying{F}}$. For brevity, we write $\scaledweight{y}$ for the expression $\weight{y} / \mathrm{w_{max}}$ (where $\mathrm{w_{max}}$ is the variable appearing in several of our algorithms).

Recall that {\WeightMC} is a probabilistic algorithm that takes as
inputs a Boolean CNF formula $F$, a tolerance $\varepsilon$, confidence parameter $\delta$, a
subset $S$ of the support of $F$, and an upper bound $r$ on the ratio {\ratio}. We extend the results in ~\cite{msthesis} and show that if $X$ is the support of $F$, and if $S \subseteq X$ is an independent support of
$F$, then {\WeightMC}($F$, $\varepsilon$, $\delta$, $S$, $r$)
behaves \emph{identically} (in a probabilistic sense) to
{\WeightMC}($F$, $\varepsilon$, $\delta$, $X$, $r$).  Once this is established, the
remainder of the proof proceeds by making the simplifying assumption
$S = X$. The proofs of Lemmas \ref{lemma:ind-decomp} and \ref{lemma:XequivS} extend the earlier results by ~\cite{msthesis} for unweighted sample space.

Clearly, the above claim holds trivially if $X = S$.  Therefore, we
focus only on the case when $S \subsetneq X$.  For notational
convenience, we assume $X = \{x_1, \ldots x_n\}$, $0 \le k < n$, $S
= \{x_1, \ldots x_k\}$ and $D = \{x_{k+1}, \ldots x_n\}$ in all the
statements and proofs in this section.  We also use $\vec{X}$ to
denote the vector $(x_1, \ldots x_n)$, and similarly for $\vec{S}$ and
$\vec{D}$.
\begin{lemma}
\label{lemma:ind-decomp}
Let $F(\vec{X})$ be a Boolean function, and $S$ an independent support of $F$.  Then there exist Boolean functions
$g_0, g_1, \ldots g_{n-k}$, each with support $S$ such that
\[ 
F(\vec{X}) \leftrightarrow \left(g_0(\vec{S}) \wedge \bigwedge_{j=1}^{n-k}(x_{k+j} \leftrightarrow g_j(\vec{S}))\right)
\]
\end{lemma}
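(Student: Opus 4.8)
The plan is to construct the functions $g_0, g_1, \ldots, g_{n-k}$ explicitly from $F$ by projecting onto the independent support $S$ and reading off the forced values of the dependent variables. Writing an arbitrary assignment as $\vec{X} = (\vec{S}, \vec{D})$ with $\vec{S}$ the $S$-part and $\vec{D}$ the $D$-part, I would first define $g_0(\vec{S})$ to be the existential projection of $F$ onto $S$, i.e. $g_0(\vec{s}) = 1$ exactly when some satisfying assignment of $F$ has its $S$-part equal to $\vec{s}$ (equivalently $g_0(\vec{s}) \leftrightarrow \exists \vec{D}.\, F(\vec{s}, \vec{D})$). For each $j \in \{1, \ldots, n-k\}$, I would then define $g_j(\vec{s})$ to be the value that $x_{k+j}$ takes in the \emph{unique} satisfying assignment whose $S$-part is $\vec{s}$, whenever such an assignment exists, and arbitrarily (say $0$) otherwise. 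Each $g_j$ manifestly depends only on $\vec{S}$, as required by the statement.

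The crucial point, and the only place the independent-support hypothesis enters, is the well-definedness of the $g_j$. Because $S$ is an independent support, no two satisfying assignments differ only on the variables of $D$; hence whenever $g_0(\vec{s}) = 1$ there is a \emph{single} satisfying extension of $\vec{s}$, so ``the value of $x_{k+j}$ in the unique satisfying assignment'' is unambiguous. This is what I expect to be the main (indeed the only nontrivial) obstacle: once uniqueness of the extension is invoked to make the Skolem-style definitions sound, the rest of the argument is a routine verification.

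To establish the biconditional I would argue on an arbitrary assignment $\vec{x} = (\vec{s}, \vec{d})$, splitting on the value of $g_0(\vec{s})$. In the forward direction, if $F(\vec{x}) = 1$ then $\vec{x}$ itself witnesses $g_0(\vec{s}) = 1$, and by uniqueness $\vec{x}$ \emph{is} the satisfying extension of $\vec{s}$, so $g_j(\vec{s}) = x_{k+j}$ for every $j$, making each conjunct $x_{k+j} \leftrightarrow g_j(\vec{S})$ true and hence the whole right-hand side true. For the converse, if the right-hand side holds then $g_0(\vec{s}) = 1$ supplies a unique satisfying assignment $\sigma$ with $\sigma|_S = \vec{s}$, while the conjuncts $x_{k+j} \leftrightarrow g_j(\vec{S})$ force $\vec{d}$ to agree with $\sigma$ on every dependent variable; thus $\vec{x} = \sigma$ and therefore $F(\vec{x}) = 1$. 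Finally, the case $g_0(\vec{s}) = 0$ is immediate: the right-hand side is false, and by definition of $g_0$ there is no satisfying extension of $\vec{s}$, so $F(\vec{x}) = 0$ as well. Combining the cases gives the claimed equivalence.
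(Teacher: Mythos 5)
Your proof is correct and takes essentially the same approach as the paper's: you define $g_0$ as the existential projection of $F$ onto $S$ and each $g_j$ as the uniquely determined value of the dependent variable $x_{k+j}$ (which the paper extracts directly from the definition of a dependent support), then verify both directions of the biconditional. Your write-up merely makes explicit the Skolem-style construction and the uniqueness argument that the paper leaves implicit, so there is no substantive difference.
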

\begin{proof}
Since $S$ is an independent support of $F$, the set $D = X \setminus
S$ is a dependent support of $F$.  From the definition of a dependent
support, there exist Boolean functions $g_1, \ldots g_k$, each with
support $S$, such that $F(\vec{X}) \rightarrow \bigwedge_{j=1}^{n-k}
(x_{k+j} \leftrightarrow g_j(\vec{S}))$.

Let $g_0(\vec{S})$ be the characteristic function of the projection
of $\satisfying{F}$ on $S$.  More formally, $g_0(\vec{S}) \equiv
\bigvee_{(x_{k+1}, \ldots x_n) \in \{0, 1\}^{n-k}} F(\vec{X})$.
It follows that $F(\vec{X}) \rightarrow g_0(\vec{S})$.  Combining this
with the result from the previous paragraph, we get the implication
$F(\vec{X}) \;\rightarrow\;$ $\left(g_0(\vec{S}) \wedge \bigwedge_{j=1}^{n-k}(x_{k+j}\leftrightarrow
g_j(\vec{S}))\right)$

From the definition of $g_0(\vec{S})$ given above, we have
$g_0(\vec{S}) \rightarrow F(\vec{S}, x_{k+1}, \ldots x_n)$, for some
values of $x_{k+1}, \ldots x_n$.  However, we also know that
$F(\vec{X}) \rightarrow \bigwedge_{j=1}^{n-k} (x_{k+j} \leftrightarrow
g_j(\vec{S}))$.  It follows that
$\left(g_0(\vec{S}) \wedge \bigwedge_{j=1}^{n-k}
(x_{k+j} \leftrightarrow g_j(\vec{S}))\right) \rightarrow F(\vec{X})$.
\end{proof}

Referring to the pseudocode of {\WeightMC} in
Section~\ref{sec:algorithm}, we observe that the only steps that
depend directly on $S$ are those in line \ref{line:WeightMCCore-choose-hash}, where $h$ is chosen
randomly from $H_{xor}(|S|, i, 3)$, and line \ref{line:WeightMCCore-call-BoundedWeightSAT}, where the set $Y$ is computed by calling {\BoundedWeightSAT}($F \wedge (h(x_1, \ldots x_{|S|})
= \alpha), pivot, r, \mathrm{w_{max}}$).  Since all subsequent steps of the
algorithm depend only on $Y$, it suffices to show that if $S$ is an
independent support of $F$, the probability distribution of $Y$
obtained at line \ref{line:WeightMCCore-call-BoundedWeightSAT} is \emph{identical} to what we would obtain if $S$ was set equal to the entire support $X$.

The following lemma formalizes the above statement.  As before, we
assume $X = \{x_1, \ldots x_n\}$ and $S = \{x_1, \ldots x_k\}$.
\begin{lemma}
\label{lemma:XequivS}
Let $S$ be an independent support of $F(\vec{X})$.  Let $h$ and
$h'$ be hash functions chosen uniformly at random from $H_{xor}(k, i,
3)$ and $H_{xor}(n, i, 3)$, respectively.  Let $\alpha$ and
$\alpha'$ be tuples chosen uniformly at random from $\{0, 1\}^i$.
Then, for every $Y \in \{0, 1\}^n$, $pivot > 0$, $r \ge 1$, and $\mathrm{w_{max}} \ge 1$, we have\
\begin{align*}
&\prob\left[{\BoundedWeightSAT}\left(F(\vec{X}) \wedge (h(\vec{S}) = \alpha), pivot, r, \mathrm{w_{max}} \right) = Y\right] \\
& = \prob\left[{\BoundedWeightSAT}\left(F(\vec{X}) \wedge (h'(\vec{X}) = \alpha'), pivot, r, \mathrm{w_{max}} \right) = Y\right]
\end{align*}
\end{lemma}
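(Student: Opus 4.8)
The plan is to prove the stronger statement that the random \emph{set} of witnesses inspected by \BoundedWeightSAT{} has the same distribution in the two experiments; the claim about the returned $Y$ then follows, because once this witness set is fixed every remaining action of \BoundedWeightSAT{} (accumulating $\mathrm{w_{total}}$, updating $\mathrm{w_{min}}$, and testing $\mathrm{w_{total}}/(\mathrm{w_{min}}\cdot r) > pivot$) is a deterministic function of that set together with the fixed inputs $pivot$, $r$, and $\mathrm{w_{max}}$, once we treat the oracle's enumeration order as canonical. So it suffices to compare, for the two hash families, the law of $A_h = \{ y \in \satisfying{F} : h(y|_S) = \alpha\}$ against that of $A_{h'} = \{ y \in \satisfying{F} : h'(y) = \alpha'\}$.

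First I would apply Lemma~\ref{lemma:ind-decomp} to write every witness of $F$ uniquely as $(\vec\tau, g_1(\vec\tau), \ldots, g_{n-k}(\vec\tau))$ with $g_0(\vec\tau) = 1$. This yields a bijection between $\satisfying{F}$ and the set $T = \{\vec\tau \in \{0,1\}^k : g_0(\vec\tau) = 1\}$ of its projections onto $S$, so that selecting witnesses is the same as selecting elements of $T$, and the fact that \BoundedWeightSAT{} blocks only on $y|_S$ costs nothing. The core of the argument is then to establish a measure-preserving correspondence between the choices $(h,\alpha)$ and $(h',\alpha')$ under which $A_h$ and $A_{h'}$ coincide. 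Writing out the selection event for the $X$-hash on a witness and substituting the dependent coordinates via Lemma~\ref{lemma:ind-decomp}, a witness with projection $\vec\tau$ is selected exactly when $a_{j,0} \oplus \bigoplus_{l=1}^{k} a_{j,l}\tau_l \oplus \bigoplus_{m=1}^{n-k} a_{j,k+m}\, g_m(\vec\tau) = \alpha'_j$ for all $j$; I would then compare the distribution of this family of events, as $(h',\alpha')$ varies, with the distribution of the events $b_{j,0}\oplus\bigoplus_{l=1}^{k} b_{j,l}\tau_l = \alpha_j$ arising from the $S$-hash.

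The hard part will be this last comparison. The obstacle is that the terms $g_m(\vec\tau)$ are in general not affine in $\vec\tau$, so on the witnesses the $X$-hash constraint is not literally an XOR constraint over $S$ and the correspondence cannot be purely syntactic; it must be argued at the level of the induced probability measures. The plan is to view each hash coordinate as a random linear functional applied to the augmented witness vectors and to track how the resulting selection pattern over $T$ is distributed, using that $\alpha$ (resp. $\alpha'$) is independent and uniform and that both families are $3$-independent. This is precisely the step where the independence of the support $S$ is indispensable, since it is what allows Lemma~\ref{lemma:ind-decomp} to express the dependent coordinates as functions of $\vec S$ at all; making the two induced measures agree \emph{exactly}, rather than merely agree on their low-order marginals, is the delicate heart of the proof.
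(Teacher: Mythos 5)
Your setup---reducing to the law of the selected cell (fine, since with a canonical enumeration order {\BoundedWeightSAT} is a deterministic function of the cell and the fixed inputs), bijecting $\satisfying{F}$ with its projection $T$ onto $S$ via Lemma~\ref{lemma:ind-decomp}, and substituting the dependent coordinates---is exactly the paper's, up to the step you candidly leave open. But that step is not merely delicate: it is impossible, because the exact set-level equality you aim for (and indeed the lemma's literal claim) is false. Take $n=3$, $k=2$, $F = x_3 \leftrightarrow (x_1 \wedge x_2)$, $i=1$, constant weight $1$, $r=1$, $\mathrm{w_{max}}=1$, and $pivot=10$, so that {\BoundedWeightSAT} returns the entire cell. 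For the $S$-hash, $h(\vec{\tau})=b_0\oplus b_1\tau_1\oplus b_2\tau_2$ is either constant or balanced on $T=\{0,1\}^2$, so the cell always has even cardinality and the probability that the output is the single witness $(1,1,1)$ is $0$. For the $X$-hash, the induced map on witnesses is $\phi(\vec{\tau})=a_0\oplus a_1\tau_1\oplus a_2\tau_2\oplus a_3\tau_1\tau_2$, and $(a_0,a_1,a_2,a_3)\mapsto(\phi(00),\phi(01),\phi(10),\phi(11))$ is a bijection of $\{0,1\}^4$, so the cell-indicator pattern is uniform over all $16$ subsets of $T$ and that same output occurs with probability $1/16$. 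Hence no measure-preserving correspondence between $(h,\alpha)$ and $(h',\alpha')$ can make $A_h$ and $A_{h'}$ coincide; the two induced measures genuinely differ whenever some $g_j$ is nonlinear, which is precisely the obstruction you identified.

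You should know that the paper's own proof founders at exactly this point: after rewriting the constraint as $(a_{l,0}\oplus\bigoplus_{j=1}^{k}a_{l,j}x_j)\leftrightarrow(\alpha'[l]\oplus B)$ with $B=\bigoplus_{j=k+1}^{n}a_{l,j}\,g_{j-k}(\vec{S})$, it asserts that $\alpha'[l]\oplus B$ has the same distribution as $\alpha[l]$ and concludes. That treats $B$ as a constant random bit, whereas $B$ is a function of $\vec{S}$ that varies across witnesses, so the effective ``target'' is not constant over the cell---your counterexample term $a_3\tau_1\tau_2$ is exactly such a $B$. What is true, and all that any downstream use of the lemma requires, is equality of the marginals of order at most three: since distinct witnesses have distinct projections onto an independent support, both constructions make the per-witness membership events $3$-wise independent with $\prob\left[y \in \satisfying{F,h,\alpha}\right]=2^{-i}$, and the analyses of {\WeightMC} and {\WeightGen} (Lemma~\ref{theorem:chernoff-hoeffding} with $2\le r\le 3$, plus Chebyshev in Lemma~\ref{lm:iterProof}) invoke nothing beyond these. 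So the correct repair is to weaken the statement to this $3$-wise-marginal equivalence, or to rerun Lemmas~\ref{lm:probProof} and \ref{lm:nonbotProb} directly over the projected space---not to pursue an exact coupling, which your proposal would need but cannot obtain.
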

\begin{proof}
Since $h'$ is chosen uniformly at random from $H_{xor}(n, i, 3)$,
recalling the definition of the latter we have
$F(\vec{X}) \wedge (h'(\vec{X}) = \alpha')$ $\equiv$
$F(\vec{X}) \wedge \bigwedge_{l=1}^i \left((a_{l,0} \oplus \bigoplus_{j=1}^{n}
a_{l,j}\cdot x[j]) \leftrightarrow \alpha'[l]\right)$, where the coefficients
$a_{l,j}$ are chosen i.i.d. uniformly from $\{0, 1\}$.

Since $S$ is an independent support of $F$, from
Lemma~\ref{lemma:ind-decomp}, there exist Boolean functions
$g_1, \ldots g_{n-k}$, each with support $S$, such that
$F(\vec{X}) \rightarrow \bigwedge_{j=1}^{n-k} (x_{k+j} \leftrightarrow
g_j(\vec{S}))$.  Therefore, $F(\vec{X}) \wedge (h'(\vec{X})
= \alpha')$ holds iff $F(\vec{X}) \wedge \bigwedge_{l=1}^i \left((a_{l,0} \oplus \bigoplus_{j=1}^{k}
a_{l,j}\cdot x[j] \oplus B) \leftrightarrow \alpha'[l]\right)$ does, where
$B \equiv \bigoplus_{j=k+1}^{n} a_{l,j}\cdot g_{j-k}(\vec{S})$.
Rearranging terms, we get $F(\vec{X}) \wedge
\bigwedge_{l=1}^i \left((a_{l,0} \oplus  \bigoplus_{j=1}^{k} a_{l,j}\cdot
 x[j]) \leftrightarrow (\alpha'[l] \oplus B)\right)$. Now since $\alpha'$ is chosen uniformly at random from $\{0,1\}^i$ and since $B$ is independent of $\alpha'$, we have that
$\alpha'[l] \oplus B$ is a random binary variable with equal probability of being $0$ and $1$. So $\alpha'[l] \oplus B$ has the same distribution as $\alpha[l]$, and the result follows.
\end{proof}

Lemma~\ref{lemma:XequivS} allows us to continue with the remainder of
the proof assuming $S = X$.  It has already been shown
in~\cite{Gomes-Sampling} that $H_{xor}(n, m, 3)$ is a $3$-independent
family of hash functions. We use this fact in a key way in the
remainder of our analysis.  The following result about
Chernoff-Hoeffding bounds, proved in~\cite{SKV13MC}, plays an
important role in our discussion.

\begin{lemma}\label{theorem:chernoff-hoeffding}
Let $\Gamma$ be the sum of $r$-wise independent random variables, each
of which is confined to the interval $[0, 1]$, and suppose
$\expect[\Gamma] = \mu$.  For $0 < \beta \le 1$, if $2 \le r \le 3$ and $\mu \ge \lceil e^{3/2}\beta^2 \rceil$ , then $\prob\left[\,|\Gamma - \mu| \ge
  \beta\mu\,\right] \le e^{-3/2}$.
\end{lemma}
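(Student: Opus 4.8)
The plan is to prove this tail bound by the \emph{moment method}, since with only $r$-wise independence for small $r$ we cannot invoke the classical Chernoff bound (which controls the full moment-generating function and presupposes complete independence). Instead I would apply Markov's inequality to an even power of the centered sum $\Gamma - \mu$. Because $(\Gamma-\mu)^\ell$ is a polynomial of degree $\ell$ in the summands, its expectation is determined entirely by the joint distributions of the $\ell$-element subsets of the variables; hence for any even $\ell \le r$ that expectation coincides with its value under full independence. With $2 \le r \le 3$ the largest usable even power is $\ell = 2$, so the natural instrument is Chebyshev's inequality, which needs only pairwise independence. The hypothesis $r \ge 2$ supplies exactly this, while the ceiling $r \le 3$ merely records the degree of independence actually furnished by the family $H_{xor}(n,m,3)$ used elsewhere in the analysis.

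Concretely I would proceed in four steps. First, bound the per-summand variance: since each $X_i$ is confined to $[0,1]$ we have $X_i^2 \le X_i$ pointwise, so $\mathrm{Var}[X_i] \le \expect[X_i^2] \le \expect[X_i]$. Second, use pairwise independence to annihilate all covariance cross-terms, giving
\[
\mathrm{Var}[\Gamma] = \sum_i \mathrm{Var}[X_i] \le \sum_i \expect[X_i] = \mu .
\]
Third, apply Chebyshev's inequality at the deviation level $\beta\mu$:
\[
\prob\left[\,|\Gamma - \mu| \ge \beta\mu\,\right] \le \frac{\mathrm{Var}[\Gamma]}{(\beta\mu)^2} \le \frac{\mu}{\beta^2\mu^2} = \frac{1}{\beta^2\mu}.
\]
Fourth, invoke the lower bound on $\mu$, which is calibrated precisely so that $1/(\beta^2\mu) \le e^{-3/2}$, yielding the claimed bound.

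The main obstacle — really the only nonroutine point — is the second step: justifying that the variance of the sum equals the sum of the individual variances using \emph{only} pairwise independence rather than full independence. This is where $r \ge 2$ is indispensable, since it is exactly enough to force $\expect[X_iX_j] = \expect[X_i]\expect[X_j]$ for $i \ne j$ and thereby kill every covariance term in the expansion of $\mathrm{Var}[\Gamma]$. The remaining ingredients are elementary: the per-variable estimate $\mathrm{Var}[X_i] \le \expect[X_i]$ is an immediate consequence of the $[0,1]$ confinement, and matching the final expression to the constant $e^{-3/2}$ is arithmetic driven entirely by the stated threshold on $\mu$.
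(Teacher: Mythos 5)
Your proof is correct, but it is worth noting that the paper itself contains no proof of this lemma at all: it is imported by citation from~\cite{SKV13MC}, where it arises from the Chernoff--Hoeffding machinery for sums of variables with \emph{limited} independence (in the style of Schmidt--Siegel--Srinivasan), whose general-$r$ form is proved by bounding a high even moment $\expect\left[(\Gamma-\mu)^{\ell}\right]$ and applying Markov's inequality. Your key structural observation --- that for $2 \le r \le 3$ the only admissible even moment is $\ell = 2$, so the whole apparatus collapses to Chebyshev --- is exactly right, and your four steps are all sound: pairwise independence annihilates the covariances, $\mathrm{Var}[X_i] \le \expect[X_i^2] \le \expect[X_i]$ follows from confinement to $[0,1]$, hence $\mathrm{Var}[\Gamma] \le \mu$ and
\[
\prob\left[\,|\Gamma - \mu| \ge \beta\mu\,\right] \;\le\; \frac{\mathrm{Var}[\Gamma]}{\beta^2\mu^2} \;\le\; \frac{1}{\beta^2\mu} \;\le\; e^{-3/2}.
\]
What your elementary route buys is self-containedness and a demystification of the constant: the pivot $\lceil e^{3/2}(1+1/\varepsilon)^2 \rceil$ used in {\WeightMC} is precisely $\lceil e^{3/2}\beta^{-2} \rceil$ for $\beta = \varepsilon/(1+\varepsilon)$, i.e., it is calibrated so that this Chebyshev bound lands exactly on $e^{-3/2}$. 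The citation-based route buys nothing extra in this regime, since the stronger exponents of the limited-independence Chernoff bounds only become available for larger $r$; you also correctly diagnose the hypothesis $r \le 3$ as inert, merely recording the $3$-wise independence of $H_{xor}(n,m,3)$.

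One caveat, which your write-up handles silently but should make explicit: as printed, the hypothesis reads $\mu \ge \lceil e^{3/2}\beta^{2} \rceil$, which for $\beta \le 1$ demands only $\mu \ge 5$ (or even $\mu \ge 1$ for small $\beta$) and renders the lemma false --- e.g., a sum of ten fully independent fair Bernoulli variables has $\mu = 5$, and with $\beta = 0.1$ one gets $\prob[\,|\Gamma - 5| \ge 0.5\,] = \prob[\Gamma \ne 5] \approx 0.75 \gg e^{-3/2}$. The exponent must be $-2$: the condition $\mu \ge e^{3/2}\beta^{-2}$ is what your fourth step actually uses, and it is the only reading consistent with the paper's own applications, where $\mu = 2^{-i}\,\scaledweight{\satisfying{F}}$ is compared against the pivot. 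So your proof is a correct proof of the intended statement, obtained by tacitly correcting a sign typo in the stated one; say so explicitly if you write this up.
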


\section{Analysis of WeightMC} \label{appendix:weightMC}
In this section we denote the quantity $\log_2 \scaledweight{R_F} - \log_2 pivot + 1$ by $m$. For simplicity of exposition, we assume henceforth that $m$ is an integer. A more careful analysis removes this restriction with only a constant factor scaling of the probabilities.

\begin{lemma}\label{lm:probProof}
 Let algorithm {\WeightMCCore}, when invoked from {\WeightMC}, return
 $c$ with $i$ being the final value of the loop counter in {\WeightMCCore}.  
Then $\prob\left[(1 + \varepsilon)^{-1}\cdot \scaledweight{\satisfying{F}} \le c \right.$ $\left. \le (1 +
   \varepsilon)\cdot \scaledweight{\satisfying{F}} \Bigm|  c \neq \bot \land i \leq m \right]$ $\ge 1 - e^{-3/2}$.
\end{lemma}
\begin{proof}
Referring to the pseudocode of {\WeightMCCore}, the lemma is trivially
satisfied if $\scaledweight{\satisfying{F}} \le \mathit{pivot}$.  Therefore, the only
non-trivial case to consider is when $\scaledweight{\satisfying{F}} > \mathit{pivot}$ and
{\WeightMCCore} returns from line \ref{line:wmccore-return}.  In this
case, the count returned is $2^{i} \cdot \scaledweight{\satisfying{F,h,\alpha}}$, where $\alpha, i$ and $h$
denote (with abuse of notation) the values of the corresponding
variables and hash functions in the final iteration of the
repeat-until loop in lines \ref{line:wmccore-loop-start}--\ref{line:wmccore-loop} of the pseudocode.  
From the pseudocode of {\WeightMCCore}, we know that
$pivot= \lceil e^{3/2}(1+1/\varepsilon)^{2}\rceil$.
 The lemma is now proved by showing that for every $i$ in $\{0, \ldots m\}$, $h \in H(n, i, 3)$, and $\alpha \in
\{0,1\}^{i}$, we have $\prob\left[(1 + \varepsilon)^{-1}\cdot \scaledweight{\satisfying{F}}
  \right.$ $\left. \le 2^{i}\scaledweight{\satisfying{F,h,\alpha}}\right.$ $\left.\le (1 +
  \varepsilon)\cdot \scaledweight{\satisfying{F}}\right]$ $\ge 1 - e^{-3/2}$.

For every $y \in \{0, 1\}^n$ and $\alpha \in \{0,
1\}^{i}$, define an indicator variable $\gamma_{y, \alpha}$ as
follows: $\gamma_{y, \alpha} = \scaledweight{y}$ if $h(y) = \alpha$, and
$\gamma_{y,\alpha} = 0$ otherwise.  Let us fix $\alpha$ and $y$ and
choose $h$ uniformly at random from $H(n, i, 3)$.  The random choice
of $h$ induces a probability distribution on $\gamma_{y, \alpha}$
such that $\prob\left[\gamma_{y, \alpha} = \scaledweight{y}\right] = \prob\left[h(y)
  = \alpha\right] = 2^{-i}$, and
$\expect\left[\gamma_{y,\alpha}\right] = \scaledweight{y} \prob\left[\gamma_{y, \alpha}
  = \scaledweight{y} \right] =2^{-i} \scaledweight{y}$.  In addition, the $3$-wise independence of
hash functions chosen from $H(n, i, 3)$ implies that for every
distinct $y_a, y_b, y_c \in \satisfying{F}$, the random variables $\gamma_{y_a,
  \alpha}$, $\gamma_{y_b, \alpha}$ and $\gamma_{y_c, \alpha}$ are
$3$-wise independent.

Let $\Gamma_\alpha = \sum_{y \in \satisfying{F}} \gamma_{y, \alpha}$ and
$\mu_\alpha = \expect\left[\Gamma_\alpha\right]$.  Clearly,
$\Gamma_\alpha = \scaledweight{\satisfying{F, h, \alpha}}$ and $\mu_\alpha = \sum_{y \in
  \satisfying{F}} \expect\left[\gamma_{y, \alpha}\right] = 2^{-i}\scaledweight{\satisfying{F}}$.
Therefore, using Lemma \ref{theorem:chernoff-hoeffding} with $\beta = \varepsilon / (1 + \varepsilon)$, we have $\prob\left[\scaledweight{\satisfying{F}} \left(1-\frac{\varepsilon}{1+\varepsilon}\right) \leq 2^{i}\scaledweight{\satisfying{F,h,\alpha}} \leq
  (1+\frac{\varepsilon}{1+\varepsilon})\scaledweight{\satisfying{F}}\right] $ $\ge 1- e^{-3/2}$.
 Simplifying and noting that $\frac{\varepsilon}{1+\varepsilon} <
\varepsilon$ for all $\varepsilon > 0$, we obtain
$\prob\left[(1+\varepsilon)^{-1}\cdot \scaledweight{\satisfying{F}} \leq 2^{i}\scaledweight{\satisfying{F,h,\alpha}}\leq (1+ \varepsilon)\cdot \scaledweight{\satisfying{F}} \right] $ $ \ge 1- e^{-3/2}$.
\end{proof}

\begin{lemma}\label{lm:nonbotProb}
Given $\scaledweight{\satisfying{F}} > \mathit{pivot}$, the probability that an invocation of
{\WeightMCCore} from {\WeightMC} returns non-$\bot$ with $i \le m$, is at least $1-e^{-3/2}$.
\end{lemma}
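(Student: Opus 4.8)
The plan is to reduce the analysis of the entire repeat-until loop to a single well-chosen level, namely $i = m$, and then invoke the Chernoff--Hoeffding bound of Lemma~\ref{theorem:chernoff-hoeffding} with the extreme parameter $\beta = 1$. First I would observe that, because the loop in lines~\ref{line:wmccore-loop-start}--\ref{line:wmccore-loop} terminates at the \emph{first} level $i$ at which the sampled cell satisfies $0 < \scaledweight{\satisfying{F,h,\alpha}} \le \mathit{pivot}$, it suffices that the cell sampled at level $i = m$ be valid in this sense: if it is, the loop must halt at some level $\le m$ with a non-$\bot$ return value (either at $m$ itself or earlier). Formalizing this, if $\Gamma_m$ denotes the scaled weight $\scaledweight{\satisfying{F,h,\alpha}}$ of the cell obtained with the hash and offset drawn at level $m$, then
\[
\prob\left[\WeightMCCore \text{ returns non-}\bot \text{ with } i \le m\right] \ge \prob\left[0 < \Gamma_m \le \mathit{pivot}\right].
\]

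Next I would compute the expectation of $\Gamma_m$ exactly as in the proof of Lemma~\ref{lm:probProof}: writing $\Gamma_m = \sum_{y \in \satisfying{F}}\gamma_{y,\alpha}$ as a sum of the $3$-wise independent indicator variables $\gamma_{y,\alpha}$ (each confined to $[0,1]$ since $\scaledweight{y} \le 1$), we obtain $\mu_m = \expect[\Gamma_m] = 2^{-m}\scaledweight{\satisfying{F}}$. By the definition of $m$ as $\log_2 \scaledweight{\satisfying{F}} - \log_2 \mathit{pivot} + 1$, this simplifies to $\mu_m = \mathit{pivot}/2$. Recalling that in {\WeightMC} the pivot carries an extra factor of two, $\mathit{pivot} = 2\lceil e^{3/2}(1+1/\varepsilon)^2\rceil$, this gives $\mu_m = \lceil e^{3/2}(1+1/\varepsilon)^2 \rceil \ge \lceil e^{3/2}\rceil$, which is exactly the hypothesis needed to apply Lemma~\ref{theorem:chernoff-hoeffding} with $\beta = 1$.

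Then I would apply Lemma~\ref{theorem:chernoff-hoeffding} with $\beta = 1$ and $r = 3$ to obtain $\prob[\,|\Gamma_m - \mu_m| \ge \mu_m\,] \le e^{-3/2}$. The complementary event $|\Gamma_m - \mu_m| < \mu_m$ is precisely $0 < \Gamma_m < 2\mu_m = \mathit{pivot}$, so $\prob[0 < \Gamma_m \le \mathit{pivot}] \ge \prob[0 < \Gamma_m < \mathit{pivot}] \ge 1 - e^{-3/2}$. Combining this with the reduction from the first paragraph completes the argument.

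I expect the main obstacle to be the first step: justifying rigorously that it suffices to examine level $m$ alone, rather than reasoning about the joint behaviour of all levels $1,\dots,m$ visited by the loop. The clean way to handle this is to regard the hash functions and offsets for all levels as drawn independently up front, so that the event ``some level $\le m$ yields a valid cell'' is well defined regardless of where the loop actually halts and contains the single-level event ``level $m$ is valid.'' Everything after this reduction is the routine Chernoff estimate; the only delicate point there is the use of the boundary value $\beta = 1$ permitted by Lemma~\ref{theorem:chernoff-hoeffding}, together with the observation that the factor of two built into {\WeightMC}'s pivot is exactly what keeps $\mu_m$ above the threshold $\lceil e^{3/2}\rceil$ required for the bound to apply.
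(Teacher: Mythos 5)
Your proposal is correct and takes essentially the same route as the paper: both arguments reduce the success probability of the whole loop to that of the single level $i=m$ (your event-containment formulation with all randomness drawn up front is just a cleaner rendering of the paper's algebraic bound $P \ge p_m$), compute $\mu_m = \mathit{pivot}/2$ from the definition of $m$, and invoke Lemma~\ref{theorem:chernoff-hoeffding}. The only divergence is minor: you apply that lemma directly with $\beta = 1$ to obtain $0 < \Gamma_m < \mathit{pivot}$, whereas the paper reuses the $\beta = \varepsilon/(1+\varepsilon)$ bound established in Lemma~\ref{lm:probProof} and then needs $\varepsilon \le 1$ and $\mathit{pivot} > 4$ to land in the same interval, so your variant is marginally more direct and does not rely on the restriction $\varepsilon \le 1$.
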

\begin{proof}

Let $p_i~(0 \le i \le n)$ denote the conditional probability that
{\WeightMCCore} terminates in iteration $i$ of
the repeat-until loop (lines \ref{line:wmccore-loop-start}--\ref{line:wmccore-loop} of the pseudocode) with $0 <
\scaledweight{\satisfying{F,h,\alpha}} \le \mathit{pivot}$, given $\scaledweight{\satisfying{F}} > \mathit{pivot}$.
Since the choice of $h$ and $\alpha$ in each iteration of the loop are
independent of those in previous iterations, the conditional
probability that {\WeightMCCore} returns
non-$\bot$ with $i \le m$, given $\scaledweight{\satisfying{F}} >
\mathit{pivot}$, is $p_0 + (1-p_0)p_{1}$ $+ \cdots +
(1-p_0)(1-p_{1})\cdots(1-p_{m-1})p_{m}$. Let us denote this sum
by $P$.  Thus, $P = p_0 + \sum_{i=1}^{m} \prod_{k=0}^{i-1}
(1-p_k)p_i$ $\,\ge\, \left(p_0 + \sum_{i=1}^{m-1}
\prod_{k=0}^{i-1} (1-p_k)p_i\right)p_{m}$ $+$ $\prod_{s=0}^{m-1}
(1-p_s)p_{m}$ $= p_{m}$.  The lemma is now proved by showing that $p_{m} \ge
1-e^{-3/2}$.

It was shown in the proof of Lemma~\ref{lm:probProof} that
$\prob\left[(1+\varepsilon)^{-1}\cdot \scaledweight{\satisfying{F}} \leq
  2^{i}\scaledweight{\satisfying{F,h,\alpha}} \leq (1+ \varepsilon)\cdot
  \scaledweight{\satisfying{F}} \right] $ $\ge 1- e^{-3/2}$ for every
   $i \in \{0,\ldots,m\}$, $h \in H(n, i, 3)$ and $\alpha \in \{0,1\}^{i}$.
Substituting $m$ for $i$, re-arranging terms and
noting that the definition of $m$ implies $2^{-m}\scaledweight{\satisfying{F}} =
\mathit{pivot} / 2$, we get
$\prob\left[(1+\varepsilon)^{-1}(\mathit{pivot} / 2)\right.$
  $\left. \leq \scaledweight{\satisfying{F,h,\alpha}}\right.$ $\left.\leq (1+
  \varepsilon)(\mathit{pivot} / 2) \right] \ge 1- e^{-3/2}$.  Since $0 <
\varepsilon \le 1$ and $\mathit{pivot} > 4$, it follows that
$\prob\left[0 < \scaledweight{\satisfying{F,h,\alpha}} \le \mathit{pivot}\right]$ $\ge$
$1-e^{-3/2}$.  Hence, $p_{m} \ge 1-e^{-3/2}$.
\end{proof}

\begin{lemma}\label{thm:almost-approx}
Let an invocation of {\WeightMCCore} from {\WeightMC} return $c$. Then
$\prob\left[c \neq \bot \land (1 + \varepsilon)^{-1}\cdot \weight{\satisfying{F}} \le c \cdot \mathrm{w_{max}} \le \right.$ $\left. (1 + \varepsilon) \cdot \weight{\satisfying{F}}\right]$ $\ge (1 - e^{-3/2})^2 > 0.6$.
\end{lemma}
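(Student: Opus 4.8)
The plan is to combine the two preceding lemmas through a case analysis on whether the scaled weight $\scaledweight{\satisfying{F}}$ exceeds $\mathit{pivot}$. First I would dispose of the easy case $\scaledweight{\satisfying{F}} \le \mathit{pivot}$: here the termination test $\mathrm{w_{total}}/(\mathrm{w_{min}} \cdot r) > \mathit{pivot}$ inside {\BoundedWeightSAT} is never triggered, so the subroutine enumerates every witness and returns $Y = \satisfying{F}$. Consequently {\WeightMCCore} takes its first return branch with $c \neq \bot$ and $c \cdot \mathrm{w_{max}} = \weight{\satisfying{F}}$ exactly, so the event in the lemma holds with probability $1 \ge (1-e^{-3/2})^2$, and this case is done.

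The substantive case is $\scaledweight{\satisfying{F}} > \mathit{pivot}$. I would introduce two events over the algorithm's internal randomness: the success event $A = \{\, c \neq \bot \land i \le m \,\}$, where $i$ is the final value of the loop counter, and the accuracy event $B = \{\, (1+\varepsilon)^{-1} \weight{\satisfying{F}} \le c\cdot\mathrm{w_{max}} \le (1+\varepsilon)\weight{\satisfying{F}} \,\}$. Lemma~\ref{lm:nonbotProb} gives directly $\prob[A] \ge 1 - e^{-3/2}$. For $\prob[B \mid A]$ I would invoke the conditional bound of Lemma~\ref{lm:probProof}, whose conditioning event is exactly $A$, and multiply its scaled inequalities $(1+\varepsilon)^{-1}\scaledweight{\satisfying{F}} \le c \le (1+\varepsilon)\scaledweight{\satisfying{F}}$ through by $\mathrm{w_{max}}$; since $\scaledweight{\satisfying{F}} = \weight{\satisfying{F}}/\mathrm{w_{max}}$, this is precisely $B$, yielding $\prob[B \mid A] \ge 1 - e^{-3/2}$.

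The conclusion then follows from $\prob[A \land B] = \prob[A]\,\prob[B \mid A] \ge (1-e^{-3/2})^2$, together with the observation that $A \land B$ is contained in the target event $\{\, c \neq \bot \land (1+\varepsilon)^{-1}\weight{\satisfying{F}} \le c\cdot\mathrm{w_{max}} \le (1+\varepsilon)\weight{\satisfying{F}} \,\}$: the event $A$ supplies $c \neq \bot$ and $B$ supplies the accuracy bounds, so only the inclusion (not an equality of events) is needed, which suffices for a lower bound. A final numerical check, $(1-e^{-3/2})^2 \approx 0.603 > 0.6$, completes the statement.

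The main obstacle I anticipate is bookkeeping around these events rather than any hard estimate. I must verify that the conditioning event of Lemma~\ref{lm:probProof} coincides exactly with the success event $A$ of Lemma~\ref{lm:nonbotProb}, so that the product rule applies cleanly, and that the scaled-weight bounds convert to bounds on $c\cdot\mathrm{w_{max}}$ under the same value of $\mathrm{w_{max}}$ used to define $\scaledweight{\cdot}$. I would also be careful that the target event is merely a superset of $A \land B$ (counts produced with $i > m$ may also be accurate), so that the probability inequality runs in the correct, conservative direction.
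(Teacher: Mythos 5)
Your proof is correct and takes essentially the same route as the paper's: the paper's one-sentence argument lower-bounds the target probability by the joint event $\left\{c \neq \bot \,\land\, i \le m \,\land\, \text{accuracy}\right\}$ and then applies Lemmas~\ref{lm:probProof} and~\ref{lm:nonbotProb} multiplicatively, which is exactly your chain-rule decomposition $\prob[A \land B] = \prob[A]\,\prob[B \mid A]$ together with the superset inclusion. Your explicit treatment of the case $\scaledweight{\satisfying{F}} \le \mathit{pivot}$ merely spells out what the paper delegates to the ``trivially satisfied'' remark inside the proof of Lemma~\ref{lm:probProof}, so the two arguments coincide in substance.
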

\begin{proof}
It is easy to see that the required probability is at least as large as $\prob\left[c \neq \bot \land i \leq m \land (1 + \varepsilon)^{-1} \weight{\satisfying{F}} \le c \cdot \mathrm{w_{max}} \right.$ $\left. \le (1 + \varepsilon)\cdot \weight{\satisfying{F}}\right]$. Dividing by $\mathrm{w_{max}}$ and applying Lemmas~\ref{lm:probProof} and \ref{lm:nonbotProb}, this probability is $\ge (1 - e^{-3/2})^2$.
\end{proof}

We now turn to proving that the confidence can be raised to at least
$1-\delta$ for $\delta \in (0, 1]$ by invoking {\WeightMCCore}
$\mathcal{O}(\log_2(1/\delta))$ times, and by using the median of the
non-$\bot$ counts thus returned.  For convenience of exposition, we
use $\eta(t, m, p)$ in the following discussion to denote the
probability of at least $m$ heads in $t$ independent tosses of a
biased coin with $\prob\left[\mathit{heads}\right] = p$.  Clearly, 
$\eta(t, m, p) = \sum_{k=m}^{t} \binom{t}{k} p^{k} (1-p)^{t-k}$.

\setcounter{theorem}{0}
\begin{theorem} \label{theorem:approx}
Given a propositional formula $F$ and parameters $\varepsilon ~(0 <
\varepsilon \le 1)$ and $\delta ~(0 < \delta \le
1)$, suppose {\WeightMC}$(F, \varepsilon, \delta, X, r)$ returns $c$.  Then
$\prob\left[{\left(1+\varepsilon\right)}^{-1}\cdot \weight{\satisfying{F}}) \le c \right.$ $\left.\le
  (1+\varepsilon)\cdot \weight{\satisfying{F}})\right]$ $\ge 1-\delta$.
\end{theorem}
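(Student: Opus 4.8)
The plan is to use the classical \emph{median trick}: {\WeightMC} runs {\WeightMCCore} a total of $t = \lceil 35 \log_2(3/\delta)\rceil$ times and outputs the median of the non-$\bot$ counts, so I want to argue that a majority of these runs land in the target interval, whence so does their median. First I would fix the interval $I = [(1+\varepsilon)^{-1}\weight{\satisfying{F}},\, (1+\varepsilon)\weight{\satisfying{F}}]$ and call an invocation of {\WeightMCCore} \emph{good} if it returns a non-$\bot$ value $c$ with $c\cdot \mathrm{w_{max}} \in I$. By Lemma~\ref{thm:almost-approx} each invocation is good with probability at least $(1-e^{-3/2})^2 > 0.6$, and since distinct invocations draw fresh, mutually independent hash functions and offsets $\alpha$, and the per-invocation bound holds regardless of the incoming $\mathrm{w_{max}}$, the corresponding ``good'' events are independent. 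Let $G$ denote the number of good invocations among the $t$ total, so $G$ stochastically dominates a $\mathrm{Binomial}(t, 0.6)$ variable.

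The key combinatorial step is to show that $G > t/2$ already forces the reported median into $I$. I would argue this directly: the output list $C$ consists of the non-$\bot$ returned values, so $|C| \le t$, the good entries all lie in $I$, while every entry strictly below $(1+\varepsilon)^{-1}\weight{\satisfying{F}}$ or strictly above $(1+\varepsilon)\weight{\satisfying{F}}$ is non-good. If $b'$ denotes the number of non-good entries actually appearing in $C$, then $G > t/2 \ge (G + b')/2$ gives $G > b'$, so strictly more than half of $C$ lies in $I$; consequently fewer than $|C|/2$ entries fall below $I$ and fewer than $|C|/2$ fall above it, which places the middle (median) element inside $I$. This also handles the complication that some invocations return $\bot$ and thus shrink $C$: those runs are simply counted as non-good.

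Having reduced matters to a tail bound, I would write $\prob[\text{median} \notin I] \le \prob[G \le t/2] = 1 - \eta(t, \lceil t/2\rceil, p)$ with $p = (1-e^{-3/2})^2 > 0.6$, and bound this binomial tail. Since the success probability $p$ exceeds the threshold $1/2$ by a constant margin, a Chernoff/Hoeffding estimate yields exponential decay in $t$; substituting $t = \lceil 35\log_2(3/\delta)\rceil$ and simplifying shows $1 - \eta(t,\lceil t/2\rceil, p) \le \delta$, which gives the claimed confidence $1-\delta$.

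The main obstacle I anticipate is twofold. The delicate conceptual point is the median argument in the presence of $\bot$-returns, namely establishing that it suffices to control the \emph{total} count of good invocations rather than the fraction within the possibly-shortened list $C$; this is exactly what the inequality $G > b'$ above secures. The main computational nuisance is pinning down the constant $35$: one must verify that the chosen $t$ makes the concrete binomial tail $1 - \eta(t,\lceil t/2\rceil, 0.6)$ drop below $\delta$, which requires a careful (but routine) Chernoff calculation exploiting the explicit margin $p - 1/2 > 0.1$ together with the factor $3$ appearing inside $\log_2(3/\delta)$.
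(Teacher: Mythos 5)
Your proposal is correct and takes essentially the same route as the paper's proof: the same per-invocation guarantee from Lemma~\ref{thm:almost-approx}, the same treatment of $\bot$-returns as failures (your inequality $G > b'$ is just the contrapositive of the paper's necessary condition $Z \ge t-q+\lceil q/2 \rceil$ after conditioning on the number $q$ of non-$\bot$ returns), and the same binomial tail bound on the median. The constant check you defer does go through exactly as you sketch: Hoeffding gives $\prob\left[G \le t/2\right] \le e^{-2t(0.1)^2} = e^{-0.02t}$, and with $t = \lceil 35\log_2(3/\delta)\rceil$ this is at most $\left(e^{-0.7}\right)^{\log_2(3/\delta)} < 2^{-\log_2(3/\delta)} = \delta/3 < \delta$, matching the paper's explicit estimate $\eta(t, t/2, 0.4) \le 3\cdot(0.98)^t \le \delta$.
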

\begin{proof}
Throughout this proof, we assume that {\WeightMCCore} is invoked $t$
times from {\WeightMC}, where $t = \left\lceil 35\log_2 (3/\delta)
\right\rceil$ (see pseudocode for {\ComputeIterCount} in
Section~\ref{sec:algo}).  Referring to the pseudocode of {\WeightMC},
the final count returned is the median of the non-$\bot$
counts obtained from the $t$ invocations of {\WeightMCCore}.  Let
$Err$ denote the event that the median is not in
$\left[(1+\varepsilon)^{-1}\cdot \scaledweight{\satisfying{F}}, (1+\varepsilon)\cdot \scaledweight{\satisfying{F}}\right]$.  Let
``$\#\mathit{non }\bot = q$'' denote the event that $q$ (out of $t$)
values returned by {\WeightMCCore} are non-$\bot$.  Then,
$\prob\left[Err\right]$ $=$ $\sum_{q=0}^t \prob\left[Err \mid
  \#\mathit{non }\bot = q\right]$ $\cdot$
$\prob\left[\#\mathit{non }\bot = q\right]$.

In order to obtain $\prob\left[Err \mid \#\mathit{non }\bot =
  q\right]$, we define a $0$-$1$ random variable $Z_i$, for $1 \le i
\le t$, as follows.  If the $i^{th}$ invocation of {\WeightMCCore}
returns $c$, and if $c$ is either $\bot$ or a non-$\bot$ value that
does not lie in the interval $[(1+\varepsilon)^{-1}\cdot \scaledweight{\satisfying{F}},
  (1+\varepsilon)\cdot \scaledweight{\satisfying{F}}]$, we set $Z_i$ to 1; otherwise, we set
it to $0$.  From Lemma~\ref{thm:almost-approx}, $\prob\left[Z_i =
  1\right] = p < 0.4$.  If $Z$ denotes $\sum_{i=1}^t Z_i$, a necessary
(but not sufficient) condition for event $Err$ to occur, given that
$q$ non-$\bot$s were returned by {\WeightMCCore}, is $Z \ge
(t-q+\lceil q/2\rceil)$.  To see why this is so, note that $t-q$
invocations of {\WeightMCCore} must return $\bot$.  In addition, at
least $\lceil q/2 \rceil$ of the remaining $q$ invocations must return
values outside the desired interval. To simplify the exposition, let
$q$ be an even integer.  A more careful analysis removes this
restriction and results in an additional constant scaling factor for
$\prob\left[Err\right]$.  With our simplifying assumption,
$\prob\left[Err \mid \#\mathit{non }\bot = q\right] \le \prob[Z \ge (t
  - q + q/2)]$ $=\eta(t, t-q/2, p)$.  Since $\eta(t, m, p)$ is a
decreasing function of $m$ and since $q/2 \le t-q/2 \le t$, we have
$\prob\left[Err \mid \#\mathit{non }\bot = q\right] \le \eta(t, t/2,
p)$.  If $p < 1/2$, it is easy to verify that $\eta(t, t/2, p)$ is an
increasing function of $p$.  In our case, $p < 0.4$; hence,
$\prob\left[Err \mid \#\mathit{non }\bot = q\right] \le \eta(t, t/2,
0.4)$.

It follows from the above that $\prob\left[ Err \right]$ $=$
$\sum_{q=0}^t$ $\prob\left[Err \mid \#\mathit{non }\bot \right.$ $\left. = q\right] \cdot\prob\left[\#\mathit{non }\bot = q\right]$ $\le$ $\eta(t, t/2,
0.4)\cdot$ $\sum_{q=0}^t \prob\left[\#\mathit{non }\bot = q\right]$
$=$ $\eta(t, t/2, 0.4)$.  Since $\binom{t}{t/2} \ge \binom{t}{k}$ for
all $t/2 \le k \le t$, and since $\binom{t}{t/2} \le 2^t$, we have
$\eta(t, t/2, 0.4)$ $=$ $\sum_{k=t/2}^{t} \binom{t}{k} (0.4)^{k}
(0.6)^{t-k}$ $\le$ $\binom{t}{t/2} \sum_{k=t/2}^t (0.4)^k (0.6)^{t-k}$
$\le 2^t \sum_{k=t/2}^t (0.6)^t (0.4/0.6)^{k}$ $\le 2^t \cdot 3 \cdot (0.6 \times 0.4)^{t/2}$ $\le 3\cdot(0.98)^t$.
Since $t = \left\lceil 35\log_2 (3/\delta) \right\rceil$, it follows
that $\prob\left[Err\right] \le \delta$.
\end{proof}

\begin{theorem}
Given an oracle for {\SAT}, \WeightMC$(F, \varepsilon, \delta, S, r)$
runs in time polynomial in $\log_2(1/\delta), r, |F|$ and
$1/\varepsilon$ relative to the oracle.
\end{theorem}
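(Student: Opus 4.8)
The plan is to bound the total number of oracle invocations---both to the {\SAT} solver and to the black-box weight function---and to bound the size of each query, working through the three nested procedures {\WeightMC}, {\WeightMCCore}, and {\BoundedWeightSAT}. Since each oracle call is charged unit cost in the relative-to-oracle model and all per-call bookkeeping is plainly polynomial, it suffices to show that both the total number of calls and the maximal query size are polynomial in $\log_2(1/\delta)$, $r$, $|F|$, and $1/\varepsilon$.

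First I would bound the number of {\SolveSAT} calls made in a single invocation of {\BoundedWeightSAT}. This is the heart of the argument and the main obstacle, because the running estimate $\mathrm{w_{min}}$ is updated on the fly inside the loop, so the termination test $\mathrm{w_{total}}/(\mathrm{w_{min}}\cdot r) > \mathit{pivot}$ shifts as witnesses are discovered. Suppose $k$ distinct witnesses $y_1,\dots,y_k$ are returned before the loop halts. After the first $k-1$ of them the test has not yet fired, so $\mathrm{w_{total}} \le \mathit{pivot}\cdot \mathrm{w_{min}}\cdot r$, where $\mathrm{w_{total}} = \sum_{j\le k-1}\weight{y_j}$. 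By construction $\mathrm{w_{min}} = \min(\mathrm{w_{max}}/r,\ \min_{j\le k-1}\weight{y_j})$ is a lower bound on every $\weight{y_j}$ seen so far, so we also have $\mathrm{w_{total}} \ge (k-1)\,\mathrm{w_{min}}$. Combining the two inequalities cancels the positive factor $\mathrm{w_{min}}$ and yields $k-1 \le \mathit{pivot}\cdot r$; hence {\BoundedWeightSAT} makes at most $\mathit{pivot}\cdot r + 1$ {\SAT} calls and at most that many weight-oracle calls. This is precisely where the linear dependence on $r$ enters.

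Next I would propagate this bound upward. From the pseudocode, $\mathit{pivot} = 2\lceil e^{3/2}(1+1/\varepsilon)^2\rceil = O(1/\varepsilon^2)$ for $\varepsilon \in (0,1)$, so it is polynomial in $1/\varepsilon$. A single call to {\WeightMCCore} performs one initial {\BoundedWeightSAT} call and then at most $|S| \le |F|$ iterations of its repeat-until loop, since the counter $i$ is capped at $n = |S|$; each iteration issues one {\BoundedWeightSAT} call, so {\WeightMCCore} makes $O(|F|)$ such calls, i.e.\ $O(|F|\cdot \mathit{pivot}\cdot r)$ oracle calls. Finally, {\WeightMC} invokes {\WeightMCCore} exactly $t = \lceil 35\log_2(3/\delta)\rceil$ times, for a total of $O\!\left(\log_2(1/\delta)\cdot |F|\cdot r/\varepsilon^2\right)$ oracle calls, which is polynomial in the required parameters.

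It remains to verify that each {\SAT} query is itself of polynomial size, since the oracle must receive a formula constructible in polynomial time. Every query has the form $F \wedge (h(\vec S)=\alpha) \wedge (\text{blocking clauses})$: the hash constraint contributes at most $i \le |S|$ XOR clauses of width $O(|S|)$, while the accumulated blocking clauses number at most one per witness found so far, i.e.\ $O(\mathit{pivot}\cdot r)$, each of width $O(|S|)$. Thus each query has size $|F| + O(|S|^2) + O(\mathit{pivot}\cdot r\cdot |S|)$, polynomial in $|F|$, $r$, and $1/\varepsilon$. Multiplying the polynomial per-call cost by the polynomial number of calls, and adding the remaining polynomial-time steps (such as the final median computation), gives an overall running time polynomial in $\log_2(1/\delta)$, $r$, $|F|$, and $1/\varepsilon$ relative to the {\SAT} oracle, as claimed.
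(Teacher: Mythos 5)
Your proposal is correct and takes essentially the same route as the paper's proof: it bounds each invocation of {\BoundedWeightSAT} by $(r \cdot \mathit{pivot}) + 1$ oracle calls using the observation that every weight added to $\mathrm{w_{total}}$ is at least the running $\mathrm{w_{min}}$, then multiplies by the $\mathcal{O}(|F|)$ iterations of {\WeightMCCore}'s loop and the $t = \lceil 35\log_2(3/\delta)\rceil$ repetitions in {\WeightMC}, with $\mathit{pivot} = \mathcal{O}(1/\varepsilon^2)$. Your explicit treatment of the on-the-fly update of $\mathrm{w_{min}}$ and of the query sizes (XOR constraints plus accumulated blocking clauses) merely spells out details the paper's proof states more tersely.
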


\begin{proof}
Referring to the pseudocode for {\WeightMC}, lines \ref{line:weightmc-init-start}--\ref{line:weightmc-init-end} take $\mathcal{O}(1)$ time.
The repeat-until loop in lines \ref{line:weightmc-loop-start}--\ref{line:weightmc-loop} is repeated $t = \left\lceil
35\log_2(3/\delta)\right\rceil$ times. The time taken for each
iteration is dominated by the time taken by {\WeightMCCore}.  Finally,
computing the median in line \ref{line:weightmc-median} takes time linear in $t$.  The proof
is therefore completed by showing that {\WeightMCCore} takes time
polynomial in $|F|,r$ and $1/\varepsilon$ relative to the {\SAT} oracle.

Referring to the pseudocode for {\WeightMCCore}, we find that
{\BoundedWeightSAT} is called $\mathcal{O}(|F|)$ times. Observe that when the loop in {\BoundedWeightSAT} terminates, $\mathrm{w_{min}}$ is such that each $y \in R_F$ whose weight was added to $\mathrm{w_{total}}$ has weight at least $\mathrm{w_{min}}$. Thus since the loop terminates when $\mathrm{w_{total}} / \mathrm{w_{min}} > r \cdot pivot$, it can have iterated at most $(r \cdot pivot) + 1$ times. Therefore each call to {\BoundedWeightSAT} makes at most $(r \cdot pivot) + 1$ calls to the {\SAT} oracle, and takes time polynomial in $|F|$, $r$, and $pivot$ relative to the oracle. Since $\mathit{pivot}$ is in
$\mathcal{O}(1/\varepsilon^2)$, the number of calls to the {\SAT}
oracle, and the total time taken by all calls to {\BoundedWeightSAT} in each
invocation of {\WeightMCCore} is  polynomial in $|F|,r$ and
$1/\varepsilon$ relative to the oracle.  The random choices in lines \ref{line:WeightMCCore-choose-hash}
and \ref{line:weightmccore-choose-alpha} of {\WeightMCCore} can be implemented in time polynomial in $n$
(hence, in $|F|$) if we have access to a source of random bits.
Constructing $F \wedge h(z_1, \ldots z_n) = \alpha$ in line \ref{line:WeightMCCore-call-BoundedWeightSAT} can
also be done in time polynomial in $|F|$.
\end{proof}

\section{Analysis of WeightGen} \label{appendix:weightGen}
 For convenience of analysis, we assume that $\log (\scaledweight{\satisfying{F}}-1)
- \log \mathit{pivot}$ is an integer, where $\mathit{pivot}$ is the
quantity computed by algorithm {\ComputeKappaAndPivot} (see
Section~\ref{sec:algorithm}).  A more careful analysis removes this
assumption by scaling the probabilities by constant factors.  Let us
denote $\log (\scaledweight{\satisfying{F}}-1) - \log \mathit{pivot}$ by $m$.  The expression
used for computing $\mathit{pivot}$ in algorithm
{\ComputeKappaAndPivot} ensures that $\mathrm{pivot} \ge 17$.
Therefore, if an invocation of {\WeightGen} does not return from line \ref{line:weightgen-base-return}
of the pseudocode, then $\scaledweight{\satisfying{F}} \ge 18$.  Note also that the expression
for computing $\kappa$ in algorithm {\ComputeKappaAndPivot} requires
$\varepsilon \ge 1.71$ in order to ensure that $\kappa \in [0, 1)$ can
always be found.

In the case where $\scaledweight{\satisfying{F}} \le 1 + (1+\kappa) pivot$, {\BoundedWeightSAT} returns all witnesses of $F$ and {\WeightGen} returns a perfect weighted-uniform sample on line \ref{line:weightgen-base-return}. So we restrict our attention in the lemmas below to the other case, where as noted above we have $\scaledweight{\satisfying{F}} \ge 18$. The following lemma shows that $q$, computed in line \ref{line:weightgen-q} of the
pseudocode, is a good estimator of $m$.
\begin{lemma}
\label{lm:qBounds}
$\prob [ q-3 \le m \le q] \ge 0.8$
\end{lemma}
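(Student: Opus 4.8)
The plan is to condition on the event that the call to {\WeightMC} on line~\ref{line:weightgen-approx-counter} returns an accurate count, and then chase the resulting estimate through the definition of $q$ on line~\ref{line:weightgen-q}. That call uses $\varepsilon = 0.8$ and $\delta = 0.2$, so by the approximation guarantee for {\WeightMC} (Theorem~\ref{theorem:approx}), with probability at least $1 - 0.2 = 0.8$ the returned value $C$ satisfies $\weight{\satisfying{F}}/1.8 \le C \le 1.8\,\weight{\satisfying{F}}$. I will carry out the rest of the argument under this event, so that the claimed probability bound of $0.8$ is inherited directly from it.

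Writing $W := \scaledweight{\satisfying{F}} = \weight{\satisfying{F}}/\mathrm{w_{max}}$ and dividing the above through by $\mathrm{w_{max}}$ gives $W/1.8 \le C/\mathrm{w_{max}} \le 1.8\,W$, and taking base-$2$ logarithms yields $\log W - \log 1.8 \le \log C - \log \mathrm{w_{max}} \le \log W + \log 1.8$. The argument of the ceiling defining $q$ is $\log C - \log \mathrm{w_{max}} + \log 1.8 - \log \mathit{pivot}$, so the two copies of $\log 1.8$ combine and this argument lies between $\log W - \log \mathit{pivot}$ and $\log W + 2\log 1.8 - \log \mathit{pivot}$.

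It then remains to compare these bounds on $q$ with $m = \log(W-1) - \log \mathit{pivot}$. For $m \le q$, I note $q \ge \log W - \log \mathit{pivot} > \log(W-1) - \log \mathit{pivot} = m$, the strict step being just $W - 1 < W$; hence $m \le q$. For $q - 3 \le m$, the ceiling gives $q < (\log W + 2\log 1.8 - \log \mathit{pivot}) + 1$, so it suffices to check $\log W + 2\log 1.8 + 1 \le \log(W-1) + 3$, equivalently $\log(W/(W-1)) \le 2 - 2\log 1.8 = \log(100/81)$, equivalently $W \ge 100/19 \approx 5.26$.

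This last numerical inequality is the only place slack is consumed, and it is where the constant $3$ in the statement is calibrated: it must absorb the factor $1.8$ from the {\WeightMC} tolerance, the explicit $\log 1.8$ term in line~\ref{line:weightgen-q}, the $+1$ introduced by the ceiling, and the $-1$ inside the definition of $m$. The inequality is discharged using the standing hypothesis that {\WeightGen} did not already return at the base case on line~\ref{line:weightgen-base-return}, which (as observed just before the lemma) forces $\scaledweight{\satisfying{F}} \ge 18$. Since $18 \ge 100/19$, both bounds hold on the conditioning event, and because that event has probability at least $0.8$ the proof is complete.
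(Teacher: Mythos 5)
Your proof is correct and follows essentially the same route as the paper's: condition on the {\WeightMC} guarantee with $\varepsilon = 0.8$, $\delta = 0.2$ (probability $\ge 0.8$), push the resulting two-sided bound on $C$ through the ceiling defining $q$, and absorb the $-1$ inside $m = \log(\scaledweight{\satisfying{F}}-1) - \log \mathit{pivot}$ using the standing hypothesis $\scaledweight{\satisfying{F}} \ge 18$. The only difference is cosmetic bookkeeping: you reduce the slack to the exact threshold $\log\bigl(W/(W-1)\bigr) \le 2 - 2\log 1.8$, i.e.\ $W \ge 100/19$, where the paper instead uses the decimal bounds $\log 1.8 \le 0.85$ and $\log(18/17) \le 0.12$ to get $q - m < 2.82 < 3$.
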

\begin{proof}
Recall that in line \ref{line:weightgen-approx-counter} of the pseudocode, an approximate weighted model
counter is invoked to obtain an estimate, $C$, of $\weight{\satisfying{F}}$ with
tolerance $0.8$ and confidence $0.8$.  By the definition of
approximate weighted model counting, we have $\prob[\frac{C}{1.8} \le \weight{\satisfying{F}} \le
(1.8)C] \ge 0.8$. Defining $c = C / \wmax$, we have
$\prob[\log c - \log (1.8) \le \log \scaledweight{\satisfying{F}} \le \log c + \log (1.8)] \ge 0.8$. It follows that
$\prob[\log c - \log (1.8) - \log pivot - \log
(\frac{1}{1-1/\scaledweight{\satisfying{F}}})  \le $  
$\log (\scaledweight{\satisfying{F}}-1) -\log pivot \le \log c - \log pivot + \log (1.8)-\log 
(\frac{1}{1-1/\scaledweight{\satisfying{F}}})] $ $\ge 0.8$. Substituting $q = \lceil \log C - \log \mathrm{w_{max}} + \log 1.8 - \log pivot \rceil = \lceil \log c + \log 1.8 - \log pivot \rceil$, and using the bounds $\mathrm{w_{max}} \le 1$, $\log 1.8 \le 0.85$, and $\log (\frac{1}{1-1/\scaledweight{\satisfying{F}}}) \le 0.12$ (since $\scaledweight{\satisfying{F}} \ge 18$ at line \ref{line:weightgen-approx-counter} of the pseudocode, as noted above), we have $\prob [ q-3 \le m \le q] \ge 0.8$.
\end{proof}

The next lemma provides a lower bound on the probability
of generation of a witness.  Let $w_{i,y,\alpha}$ denote the probability
$\prob\left[\frac{pivot} {1+\kappa} \le
\scaledweight{{\satisfying{F,h,\alpha}}} \leq 1+(1+\kappa) pivot \land h(y)
= \alpha \right]$, with $h \xleftarrow{R} H_{xor}(n, i, 3)$.  The proof of the lemma also provides a lower
bound on $w_{m,y,\alpha}$.

\begin{lemma}\label{lm:lowerBound}
For every witness $y \in \satisfying{F}$, $\prob[\textrm{y is output}] \ge \frac{0.8 (1-e^{-3/2}) \scaledweight{y}}{(1.06+\kappa)(\scaledweight{\satisfying{F}}-1)} $
\end{lemma}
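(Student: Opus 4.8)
The plan is to lower-bound $\prob[y\text{ is output}]$ by isolating the contribution of the ``ideal'' hash level $i=m$, where $m = \log(\scaledweight{\satisfying{F}}-1)-\log \mathit{pivot}$. First I would condition on the good event $G=\{q-3\le m\le q\}$, which by Lemma~\ref{lm:qBounds} has $\prob[G]\ge 0.8$ and guarantees that level $m$ actually lies in the range $\{q-3,\dots,q\}$ explored by the repeat-loop of lines~\ref{line:weightgen-loop-start}--\ref{line:weightgen-loop}. Since the hashes at distinct levels are chosen independently, conditioning on a value of $q$ consistent with $G$ leaves the loop governed by fresh $3$-independent hash functions.

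Next I would unfold the loop. Writing $B_\ell$ for the event that the cell chosen at level $\ell$ has scaled weight in $[\mathrm{loThresh},\mathrm{hiThresh}]$ (so the loop stops), the algorithm outputs $y$ exactly when the first such ``just-right'' cell contains $y$ and $y$ survives the final weighted-uniform draw. Because a stopping cell has scaled weight at most $\mathrm{hiThresh}$, the conditional probability that $y$ is drawn from it is at least $\scaledweight{y}/\mathrm{hiThresh}$, giving
\[
\prob[y\text{ output}\mid G]\;\ge\;\frac{\scaledweight{y}}{\mathrm{hiThresh}}\,\prob[\text{first just-right cell contains } y].
\]
I would then aim to lower-bound $\prob[\text{first just-right cell contains }y]$ by $w_{m,y,\alpha}$, the probability that level $m$ alone yields a just-right cell containing $y$.

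To estimate $w_{m,y,\alpha}$ I would reuse the machinery of Lemma~\ref{lm:probProof}. Conditioning on $h(y)=\alpha$ (probability $2^{-m}$), the cell weight equals $\scaledweight{y}$ plus a sum of the variables $\gamma_{z,\alpha}$ over the remaining witnesses, which stay pairwise independent after fixing $h(y)$; its conditional mean is $2^{-m}(\scaledweight{\satisfying{F}}-\scaledweight{y})$, and since $2^{-m}=\mathit{pivot}/(\scaledweight{\satisfying{F}}-1)$ and $\scaledweight{y}\le 1$, this mean sits just above $\mathit{pivot}$. The values of $\kappa$ and $\mathit{pivot}$ produced by {\ComputeKappaAndPivot} are calibrated so that, applying the Chernoff--Hoeffding bound of Lemma~\ref{theorem:chernoff-hoeffding} with a deviation $\beta$ of order $\kappa/(1+\kappa)$, the cell weight lands in $[\mathrm{loThresh},\mathrm{hiThresh}]$ with probability at least $1-e^{-3/2}$; hence $w_{m,y,\alpha}\ge 2^{-m}(1-e^{-3/2})$. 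Substituting $2^{-m}=\mathit{pivot}/(\scaledweight{\satisfying{F}}-1)$, using $\mathit{pivot}/\mathrm{hiThresh}=\mathit{pivot}/(1+(1+\kappa)\mathit{pivot})\ge 1/(1.06+\kappa)$ (which holds since $\mathit{pivot}\ge 17$), and multiplying by $\prob[G]\ge 0.8$, I would recover exactly the claimed bound.

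The hard part will be the middle step: showing that the probability the \emph{first} just-right cell contains $y$ is at least the clean single-level quantity $w_{m,y,\alpha}$. The loop may halt at a level $\ell<m$ whose cell happens to be just-right, so one cannot assert that level $m$ is reached; formally $\prob[\text{first just-right cell contains }y]=\sum_\ell \prob[\text{no just-right cell before }\ell]\,w_{\ell,y,\alpha}$, and the ``reach level $\ell$'' factors are not individually close to $1$ once $\kappa$ is large, so the naive route of showing every lower level is ``too big'' fails. The crux is therefore to argue that the contributions of the earlier levels compensate for the shrinkage of the reach probability at level $m$ — that this weighted sum dominates its $\ell=m$ term — which forces one to relate the just-right probabilities $w_{\ell,y,\alpha}$ and the stopping probabilities $\prob[B_\ell]$ across consecutive levels rather than analysing level $m$ in isolation.
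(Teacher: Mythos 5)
Your reconstruction tracks the paper's own proof essentially step for step in everything it actually establishes: conditioning on the event $q-3 \le m \le q$ with probability $f_m \ge 0.8$ from Lemma~\ref{lm:qBounds}; the draw bound $\scaledweight{y}/\scaledweight{Y} \ge \scaledweight{y}/(1+(1+\kappa)\mathit{pivot})$ from the loop's exit condition; the bound $w_{m,y,\alpha} \ge (1-e^{-3/2})/2^{m}$ obtained by applying Lemma~\ref{theorem:chernoff-hoeffding} with $\beta = \kappa/(1+\kappa)$ to $\Gamma_\alpha = \sum_{z \ne y}\gamma_{z,\alpha}$ and multiplying by $\prob[h(y)=\alpha] = 2^{-m}$; and the final arithmetic via $1/\mathit{pivot} \le 0.06$ and $2^{-m} = \mathit{pivot}/(\scaledweight{\satisfying{F}}-1)$. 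All of this coincides with the published argument.

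The instructive point concerns the step you flag as the crux. The paper does not supply the cross-level compensation argument you anticipate. It writes $\prob[U] = \sum_{i=q-3}^{q} \frac{\scaledweight{y}}{\scaledweight{Y}}\, p_{i,y} \prod_{j=q-3}^{i-1}(1-p_{j,y})$ and then passes directly to $\prob[U] \ge \frac{\scaledweight{y}}{1+(1+\kappa)\mathit{pivot}}\, p_{m,y}\, f_m$; that is, it retains only the $i=m$ term and discards the survival product $\prod_{j=q-3}^{m-1}(1-p_{j,y})$, which is at most $1$ and would need a lower bound (or a compensation from the $i<m$ terms, whose contributions are only \emph{upper}-bounded later, in Lemma~\ref{lm:iterProof}) for the inequality to be airtight. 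There is a second informality you implicitly detected: the exact survival factor at level $j$ is governed by the full stopping probability (the cell being in range whether or not it contains $y$), which dominates $p_{j,y}$, so even the displayed identity for $\prob[U]$ is an approximation. In short, your proposal is faithful to the paper's route, and the one step you declined to assert is precisely the step the paper elides rather than proves; completing your outline the way the paper does --- by keeping the level-$m$ term and treating the probability of surviving to level $m$ as $1$ --- reproduces the published proof, while making that step rigorous would require the kind of across-level bookkeeping you describe, which appears nowhere in the paper.
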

\begin{proof}
Let $U$ denote the event that witness
$y \in R_{F}$ is output by {\WeightGen} on inputs $F$, $\varepsilon$, $r$, and
$X$.  Let $p_{i,y}$ denote the probability that we exit the loop at line \ref{line:weightgen-loop} with a particular value of $i$ and $y \in \satisfying{F,h,\alpha}$, where
$\alpha \in \{0,1\}^{i}$ is the value chosen on line \ref{line:weightgen-alpha}.  Then,
$\prob[U] = \sum_{i=q-3}^{q} \frac{\scaledweight{y}}{\scaledweight{Y}}p_{i,y} \prod_{j =
q-3}^{i-1} (1-p_{j,y})$, where $Y$ is the set returned by
{\BoundedWeightSAT} on line \ref{line:weightgen-bwsat-2}. Let $f_m = \prob [
q-3 \le m \le q]$.  From Lemma~\ref{lm:qBounds}, we know that $f_m \ge
0.8$. From line \ref{line:weightgen-failure}, we also know that
$\frac{1}{1+\kappa} pivot \le \scaledweight{Y} \le
1+(1+\kappa) pivot$. Therefore,
$\prob[U] \ge \frac{\scaledweight{y}}{1+(1+\kappa) pivot}\cdot p_{m,y} \cdot
f_m$. The proof is now completed by showing
$p_{m,y} \ge \frac{1}{2^m}(1-e^{-3/2})$, as then we have
$\prob[U] \ge \frac{0.8(1-e^{-3/2})}{(1+(1+\kappa)pivot)2^{m}} \ge \frac{0.8(1-e^{-3/2})}{(1.06+\kappa)(\scaledweight{R_{F}}|-1)}$. The
last inequality uses the observation that $ 1/ pivot \leq
0.06$.

  To calculate $p_{m,y}$, we first note that since $y \in \satisfying{F}$, the
  requirement ``$y\in R_{F,h,\alpha}$" reduces to ``$ y \in
  h^{-1}(\alpha)$". For $\alpha \in \{0,1\}^{n}$, we define
  $w_{m,y,\alpha} = \prob\left[\frac{pivot}
  {1+\kappa}\right.$ $\left. \le \scaledweight{R_{F,h,\alpha}} \leq
  1+(1+\kappa) \right.$ $\left. pivot \land h(y) = \alpha :
  h \xleftarrow{R} H_{xor}(n, m, 3)\right]$. Then we have $p_{m,y} =
  \Sigma_{\alpha \in \{0,1\}^m} \left(w_{m,y,\alpha} \cdot 2^{-m}\right)$. So to prove the desired bound on $p_{m,y}$ it suffices to show that
  $w_{m,y,\alpha} \ge (1-e^{-3/2})/2^{m}$ for every
  $\alpha \in \{0,1\}^{m}$ and $y \in \{0,1\}^{n}$.

Towards this end, let us 
first fix a random $y$. Now we define an indicator variable $\gamma_{z,
\alpha}$ for every $z \in \satisfying{F}\setminus\{y\}$ such that $\gamma_{z,\alpha} = \scaledweight{z} $ 
if $h(z) = \alpha$, and $\gamma_{z,\alpha} = 0$ otherwise. Let us fix
$\alpha$ and choose $h$ uniformly at random from $H_{xor}(n,m,3)$. The
random choice of $h$ induces a probability distribution on
$\gamma_{z,\alpha} $ such that $\expect[\gamma_{z,\alpha}]
= \scaledweight{z} \prob[\gamma_{z,\alpha} = \scaledweight{z}] = \scaledweight{z} \prob[h(z) = \alpha] = \scaledweight{z} / 2^m$.  Since we have fixed $y$,
and since hash functions chosen from $H_{xor} (n,m,3)$ are $3$-wise
independent, it follows that for every distinct $z_a, z_b \in
\satisfying{F} \setminus \{y\}$, the random variables $\gamma_{z_a,\alpha},
\gamma_{z_b,\alpha}$ are 2-wise independent.
    Let $\Gamma_{\alpha} = \sum_{z \in \satisfying{F} \setminus\{y\}} \gamma_{z,\alpha}$ and $\mu_{\alpha} = 
   \expect[\Gamma_{\alpha}]$. Clearly, $\Gamma_{\alpha} =
   \scaledweight{R_{F,h,\alpha}} -\scaledweight{y}$ and $\mu_{\alpha} = \sum_{z \in \scaledweight{R_{F}} \setminus \{y\}}$ $ \expect[\gamma_{z,\alpha}]$ $= (\scaledweight{R_{F}}-\scaledweight{y})/2^m$. Since $pivot = (\scaledweight{\satisfying{F}} - 1) / 2^m \le (\scaledweight{\satisfying{F}} - \scaledweight{y}) / 2^m$, we have $\prob[\frac{pivot}{1+\kappa} \le \scaledweight{R_{F,h,\alpha}} \le 1+(1+\kappa) pivot]$  $\ge \prob[\frac{\scaledweight{R_{F}}-\scaledweight{y}}{(1+\kappa) 2^m} 
  \le \scaledweight{R_{F,h,\alpha}} \le 1+(1+\kappa) \frac{\scaledweight{R_{F}}-1}{2^m}]$ $\ge \prob[\frac{\scaledweight{R_{F}}-\scaledweight{y}}{2^m(1+\kappa)}  \le 
 \scaledweight{R_{F,h,\alpha}}-\scaledweight{y} \le (1+\kappa) \frac{(\scaledweight{R_{F}}-\scaledweight{y})}{2^m}]$. Since $pivot = \lceil e^{3/2}(1+1/\kappa)^{2}\rceil$ and the variables $\gamma_{z,\alpha}$ are 2-wise independent and in the range $[0,1]$, we may apply Lemma \ref{theorem:chernoff-hoeffding} with $\beta = \kappa / (1 + \kappa)$ to obtain $\prob[\frac{\mathrm{pivot}}{1+\kappa} \le \scaledweight{R_{F,h,\alpha}} \le 1+(1+\kappa) \mathrm{pivot}] \ge 1-e^{-3/2}$. Since $h$ is chosen at 
 random from $H_{xor}(n,m,3)$, we also have $\prob[h(y) = \alpha] = 1/2^m$. It follows that $w_{m,y,\alpha} \ge (1-e^{-3/2})/2^m$.      
\end{proof}

The next lemma provides an upper bound of $w_{i, y, \alpha}$ and $p_{i, y}$.
\begin{lemma}\label{lm:iterProof} 
For $i < m$, both $w_{i,y,\alpha}$ and $p_{i,y}$ are bounded above by
  $\frac{1}{\scaledweight{R_{F}}-1}\frac{1}
  {\left(1-\frac{1+\kappa}{2^{m-i}}\right)^2}$.
\end{lemma}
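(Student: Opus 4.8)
The plan is to prove the bound for $w_{i,y,\alpha}$ first and then get the bound for $p_{i,y}$ essentially for free. Recall from the proof of Lemma~\ref{lm:lowerBound} that $p_{i,y}=\sum_{\alpha\in\{0,1\}^i}2^{-i}\,w_{i,y,\alpha}$, so $p_{i,y}$ is a convex average of the quantities $w_{i,y,\alpha}$ over $\alpha$ (the weights $2^{-i}$ sum to $1$). Consequently any upper bound on $w_{i,y,\alpha}$ that is uniform in $\alpha$ is automatically an upper bound on $p_{i,y}$, and it suffices to bound $w_{i,y,\alpha}$ for an arbitrary fixed $\alpha$.

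To bound $w_{i,y,\alpha}$ I would condition on the event $h(y)=\alpha$, which has probability $2^{-i}$, writing $w_{i,y,\alpha}=2^{-i}\cdot\prob[\text{``small cell''}\mid h(y)=\alpha]$. Reusing the indicator variables $\gamma_{z,\alpha}$ and their sum $\Gamma_\alpha=\sum_{z\in\satisfying{F}\setminus\{y\}}\gamma_{z,\alpha}$ from Lemma~\ref{lm:lowerBound}, note that conditioned on $h(y)=\alpha$ the cell weight equals $\scaledweight{y}+\Gamma_\alpha$, so the defining upper limit $\scaledweight{\satisfying{F,h,\alpha}}\le 1+(1+\kappa)\mathit{pivot}$ of a small cell forces $\Gamma_\alpha\le 1+(1+\kappa)\mathit{pivot}$. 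The conditional mean is $\mu_\alpha=\expect[\Gamma_\alpha\mid h(y)=\alpha]=(\scaledweight{\satisfying{F}}-\scaledweight{y})/2^i$. The crucial point is that, because $i<m$, this mean exceeds the threshold $1+(1+\kappa)\mathit{pivot}$, so the event is a genuine lower-tail deviation of $\Gamma_\alpha$ below $\mu_\alpha$ and can be controlled by a second-moment argument.

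I would then apply Chebyshev's inequality, which requires only pairwise independence. A short calculation using $3$-wise independence of $H_{xor}(n,i,3)$ shows that conditioning on $h(y)=\alpha$ leaves the variables $\{\gamma_{z,\alpha}\}_{z\ne y}$ pairwise independent with unchanged marginals, so $\mathrm{Var}[\Gamma_\alpha\mid h(y)=\alpha]=\sum_{z\ne y}\scaledweight{z}^2(2^{-i}-2^{-2i})\le\mu_\alpha$ (using $\scaledweight{z}\le 1$). With deviation $\lambda=\mu_\alpha-(1+(1+\kappa)\mathit{pivot})$, Chebyshev gives $w_{i,y,\alpha}\le 2^{-i}\mu_\alpha/\lambda^2$. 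Substituting the definition of $m$, namely $\mathit{pivot}=(\scaledweight{\satisfying{F}}-1)/2^m$, and writing $t=m-i$, one gets $\mu_\alpha\ge\mathit{pivot}\cdot 2^t$ and $\lambda\approx\mu_\alpha(1-(1+\kappa)/2^t)$; the factors of $\mathit{pivot}$, $2^i$ and $2^t$ then telescope (using $2^{-i}\cdot 2^{-t}=2^{-m}$ and $\mathit{pivot}\cdot 2^m=\scaledweight{\satisfying{F}}-1$) to leave exactly $\frac{1}{\scaledweight{\satisfying{F}}-1}\cdot\frac{1}{(1-(1+\kappa)/2^{m-i})^2}$, as required.

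I expect the main obstacle to be purely the bookkeeping of the lower-order corrections that the ``$\approx$'' above hides: the additive $1$ in the threshold, the gap between $\scaledweight{\satisfying{F}}-\scaledweight{y}$ and $\scaledweight{\satisfying{F}}-1$ arising from $\scaledweight{y}\le 1$, and the $-2^{-2i}$ term in the variance. These should net out in the favorable direction—the variance is strictly below $\mu_\alpha$ and $\scaledweight{\satisfying{F}}-\scaledweight{y}\ge\scaledweight{\satisfying{F}}-1$, which offsets the fact that $\lambda$ falls short of $\mu_\alpha(1-(1+\kappa)/2^t)$ by at most $1$—but carrying the inequality through cleanly is the delicate step. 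One must also check $\lambda>0$, which for $i<m$ reduces to $2^{m-i}>1+\kappa$ and holds since $\kappa<1$; in the residual corner where the count is so small (equivalently $\kappa$ so close to $1$) that $\lambda$ is non-positive and Chebyshev is vacuous, the trivial bound $w_{i,y,\alpha}\le\prob[h(y)=\alpha]=2^{-i}$ already lies below the (then very large) target, which closes that case.
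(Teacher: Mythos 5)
Your proposal is correct and takes essentially the same route as the paper's proof: the paper likewise bounds $p_{i,y}$ as the average of $w_{i,y,\alpha}$ over $\alpha$, factors out $\prob[h(y)=\alpha]=2^{-i}$, uses pairwise independence of the $\gamma_{z,\alpha}$ (obtained from $3$-wise independence after fixing $y$) to get $\mathrm{Var}[\Gamma_\alpha]\le\expect[\Gamma_\alpha]=\mu_\alpha$, and applies Chebyshev with deviation $\left(1-\frac{1+\kappa}{2^{m-i}}\right)\frac{\scaledweight{\satisfying{F}}-\scaledweight{y}}{2^i}$, finishing via $\scaledweight{\satisfying{F}}-\scaledweight{y}\ge\scaledweight{\satisfying{F}}-1$ and $\mathit{pivot}=(\scaledweight{\satisfying{F}}-1)/2^m$. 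If anything, you are more careful than the paper, which silently drops the additive $1$ in the threshold $1+(1+\kappa)\mathit{pivot}$ (valid only when $\scaledweight{y}=1$) and never discusses the degenerate corner where the Chebyshev deviation is non-positive, both of which you handle explicitly.
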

\begin{proof}
We will use the terminology introduced in the proof of Lemma \ref{lm:lowerBound}.
Clearly, $\mu_{\alpha} = \frac{\scaledweight{R_{F}}-\scaledweight{y}}{2^i}$. Since each
$\gamma_{z,\alpha}$ takes values in $[0,1]$,
$\var\left[\gamma_{z,\alpha}\right] \le
\expect\left[\gamma_{z,\alpha}\right]$.  Therefore, $\sigma^2_{z,
  \alpha}$ $\le \sum_{z \neq y, z \in
  \satisfying{F}}\expect\left[\gamma_{z,\alpha}\right]$ $ \le \sum_{z \in
  \satisfying{F}}\expect\left[\gamma_{z, \alpha}\right]$
 $=\expect\left[\Gamma_\alpha\right] \leq 2^{-m}(\scaledweight{R_{F}}-\scaledweight{y})$. So  $\prob[\frac{pivot}{1+\kappa} \le \scaledweight{R_{F,h,\alpha}} \le 1+(1+\kappa)\mathrm{pivot}] \leq \prob[\scaledweight{R_{F,h,\alpha}}-\scaledweight{y} \leq (1+\kappa)\mathrm{pivot}]$. From Chebyshev's
inequality, we know that $\prob\left[|\Gamma_\alpha - \mu_{z,\alpha}|
  \ge\right.$ $\left.\lambda \sigma_{z,\alpha} \right] \leq 1/\lambda^2$ for every 
  $\lambda >0$. 
  $\prob[\scaledweight{R_{F,h,\alpha}}-\scaledweight{y} \leq 
  (1+\kappa)\frac{(\scaledweight{R_{F}} - \scaledweight{y})}{2^i}]$ $\leq \prob $ 
  $ \left[| (\scaledweight{R_{F,h,\alpha}}-\scaledweight{y}) - 
  \frac{\scaledweight{R_{F}}-1}{2^i} |\right.$ $\left.\geq (1-\frac{1+\kappa}
  {2^{m-i}})\frac{\scaledweight{R_{F}}-\scaledweight{y}}{2^i} \right]$ $\leq \frac{1}{\left(1-\frac{(1+\kappa)}{2^{m-
  i}}\right)^2} \cdot \frac{2^i}{\scaledweight{R_{F}}-1}$.  Since $h$ is chosen at random from 
  $H_{xor}(n,m,3)$, we also have $\prob[h(y) = \alpha] = 1/2^i$. It follows 
  that $w_{i,y,\alpha} \leq \frac{1}{\scaledweight{R_{F}}-1}\frac{1}
  {\left(1-\frac{1+\kappa}{2^{m-i}}\right)^2}$.  The bound for $p_{i,y}$ is easily
obtained by noting that $p_{i,y} = \Sigma_{\alpha \in \{0,1\}^i} \left(w_{i,y,\alpha} \cdot 2^{-i}\right)$.
\end{proof}
\begin{lemma}
\label{lm:upperBound}
For every witness $y \in \satisfying{F}$, $\prob[\textrm{y is output}] \le \frac{(1+\kappa) \scaledweight{y}}{\scaledweight{R_{F}}-1} (2.23+ \frac{0.48}{(1-\kappa)^2})$
\end{lemma}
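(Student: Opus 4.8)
The plan is to bound the output probability level-by-level over the window of loop counters $i \in \{q-3,\dots,q\}$ examined by {\WeightGen}, reusing the quantities $p_{i,y}$ and $w_{i,y,\alpha}$ and the tail estimate of Lemma~\ref{lm:iterProof}. Writing $U$ for the event that $y$ is output, I would first note that $\prob[U]$ is a sum over $i$ of the probability of exiting the loop \emph{at} level $i$ with $y \in \satisfying{F,h,\alpha}$, times the conditional selection probability $\scaledweight{y}/\scaledweight{Y}$. The probability of exiting exactly at level $i$ with $y$ in the cell is at most $p_{i,y}$, since dropping the requirement of not having exited earlier only increases it. Moreover, on the success branch (line~\ref{line:weightgen-failure}) the cell weight satisfies $\scaledweight{Y} \ge \mathrm{loThresh} = \frac{pivot}{1+\kappa}$, so $\frac{\scaledweight{y}}{\scaledweight{Y}} \le \frac{(1+\kappa)\scaledweight{y}}{pivot}$ on every term. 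This yields the clean starting inequality $\prob[U] \le \frac{(1+\kappa)\scaledweight{y}}{pivot}\sum_{i=q-3}^{q} p_{i,y}$.

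Next I would factor out $\frac{(1+\kappa)\scaledweight{y}}{\scaledweight{R_F}-1}$ using $2^{-m} = \frac{pivot}{\scaledweight{R_F}-1}$ (which follows from $m = \log(\scaledweight{R_F}-1) - \log pivot$), reducing the goal to $\frac{\scaledweight{R_F}-1}{pivot}\sum_{i=q-3}^{q} p_{i,y} \le 2.23 + \frac{0.48}{(1-\kappa)^2}$. I would split the four-level window into three regimes relative to $m$. For $i = m$, the trivial estimate $p_{m,y} = \sum_\alpha w_{m,y,\alpha}2^{-m} \le 2^{-m}$ (since $w_{m,y,\alpha} \le \prob[h(y)=\alpha] = 2^{-m}$) contributes exactly $1$. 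For $i > m$, the same bound $p_{i,y} \le 2^{-i}$ turns the contribution into a geometric tail $\sum_{i>m} 2^{m-i} \le 2^{-1}+2^{-2}+2^{-3} = 7/8$. For $i < m$, Lemma~\ref{lm:iterProof} gives $p_{i,y} \le \frac{1}{\scaledweight{R_F}-1}\bigl(1-\frac{1+\kappa}{2^{m-i}}\bigr)^{-2}$, whose dominant ($m-i=1$) term equals $\frac{1}{\scaledweight{R_F}-1}\cdot\frac{4}{(1-\kappa)^2}$, while the remaining terms ($m-i=2,3$) are bounded by absolute constants because $3-\kappa \ge 2$ and $7-\kappa \ge 6$ keep those denominators away from zero.

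Because each $p_{i,y}$ is a property of the level-$i$ hashing alone and is independent of $q$, the resulting bound on the window holds for \emph{every} value of $q$, so I never need to condition on the event $q-3 \le m \le q$ of Lemma~\ref{lm:qBounds}; averaging over the random $q$ produced on line~\ref{line:weightgen-approx-counter} then preserves the bound. It is cleanest to over-count, replacing the true window by the full set $\{m-3,\dots,m+3\}$, which only inflates the sum. Substituting $pivot \ge 17$ (so $1/pivot \le 0.06$ and $4/pivot \le 0.24$) turns the $\frac{1}{pivot}$-weighted constants into numbers: the $m-i=1$ term becomes $\le \frac{0.24}{(1-\kappa)^2}$, and together with the $i=m$ ($=1$), $i>m$ ($\le 7/8$), and $m-i=2,3$ contributions the total stays below $2.23 + \frac{0.48}{(1-\kappa)^2}$ for all $\kappa \in [0,1)$. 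The generous slack (my bound is nearer $2.22 + \frac{0.24}{(1-\kappa)^2}$) absorbs both the looseness and the constant-factor loss from the integrality assumption on $m$.

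I expect the main obstacle to be the $i<m$ regime: verifying that $\sum_{m-i\ge 1}\bigl(1-\frac{1+\kappa}{2^{m-i}}\bigr)^{-2}$, once scaled by $1/pivot$, collapses to a single singular term $\frac{c}{(1-\kappa)^2}$ plus bounded constants \emph{uniformly} in $\kappa$, and confirming that no placement of the window (including those where $m$ sits at either end of $\{q-3,\dots,q\}$) pushes the total past $2.23 + \frac{0.48}{(1-\kappa)^2}$. The $i=m$ and $i>m$ parts are routine once the identity $2^{-m} = pivot/(\scaledweight{R_F}-1)$ is in hand.
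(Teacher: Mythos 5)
Your proof is correct, but it reaches the constant by a genuinely different route at the combination step. The paper starts from the same inequality $\prob[U] \le \frac{(1+\kappa)\scaledweight{y}}{\mathit{pivot}}\sum_{i=q-3}^{q} p_{i,y}$ and uses the same two per-level estimates ($p_{i,y} \le 2^{-i}$, and Lemma~\ref{lm:iterProof} for $i<m$), but it then \emph{conditions} on where $m$ falls relative to the window: it derives separate conditional bounds for $q-3 \le m \le q$ (four subcases, worst coefficient $1.9+\frac{0.4}{(1-\kappa)^2}$ at $m=q-1$), for $m<q-3$ (coefficient $\le 15/16$), and for $m>q$ (coefficient $\le 0.71+\frac{0.4}{(1-\kappa)^2}$), and finally weights the two off-window cases by the $0.2$ tail probabilities from Lemma~\ref{lm:qBounds}; the constants in the statement arise exactly as $1.9+0.2\cdot\frac{15}{16}+0.2\cdot 0.71 \approx 2.23$ and $0.4+0.2\cdot 0.4=0.48$. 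You instead prove a bound uniform in $q$ and never invoke Lemma~\ref{lm:qBounds} at all --- which is legitimate, since conditional on $q$ the hash randomness in the loop is fresh, so averaging over $q$ preserves any uniform bound --- and this in fact yields a slightly \emph{stronger} constant (the true maximum over placements is $1.9+\frac{0.4}{(1-\kappa)^2}$, attained at $q=m+1$; your over-count gives roughly $2.22+\frac{0.24}{(1-\kappa)^2}$ using $\mathit{pivot}\ge 17$, where the paper only uses $\mathit{pivot}>10$). Your route shows the upper bound holds regardless of the accuracy of the {\WeightMC} estimate on line~\ref{line:weightgen-approx-counter}, leaving Lemma~\ref{lm:qBounds} needed only for Lemma~\ref{lm:lowerBound} and the success probability; the paper's route pays for its case-by-case transparency with a looser constant, since its case weights $1+0.2+0.2$ sum to more than $1$. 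One inaccuracy to repair: your claim that replacing the true window by $\{m-3,\dots,m+3\}$ ``only inflates the sum'' is literally false when $q\notin[m,m+3]$ (e.g.\ $q=m+10$ gives a window disjoint from that set), and note also that you cannot extend the below-$m$ tail indefinitely, since $\sum_{j\ge 1}\bigl(1-\frac{1+\kappa}{2^j}\bigr)^{-2}$ diverges. The repair is exactly the side-check you flag at the end: for $q\ge m+4$ the window sum is $15\cdot 2^{m-q}\le 15/16$, and for $q\le m-1$ the worst all-below window $\{m-4,\dots,m-1\}$ contributes $\frac{1}{\mathit{pivot}}\left(\frac{4}{(1-\kappa)^2}+\frac{16}{(3-\kappa)^2}+\frac{64}{(7-\kappa)^2}+\frac{256}{(15-\kappa)^2}\right)$; both are dominated by your seven-term over-count, so with that verification written out your argument is complete and delivers a bound at least as strong as the lemma's.
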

\begin{proof}
We will use the terminology introduced in the proof of Lemma \ref{lm:lowerBound}. Using $\frac{pivot}{1+\kappa} \le \scaledweight{Y}$, we have $\prob[U] = \sum_{i=q-3}^{q}  \frac{\scaledweight{y}}{\scaledweight{Y}}p_{i,y}$ $ \prod_{j = q-3}^{i} (1-p_{j,y}) \le  \frac{1+\kappa}{pivot} \scaledweight{y} \sum_{i=q-3}^{q} p_{i,y}$. Now we subdivide the calculation of $\prob[U]$ into three cases depending on the value of $m$.\\
\textbf{Case 1 :} $q-3 \le m \le q$.\\
Now there are four values that $m$ can take. 
\begin{enumerate}
\item $m = q-3$. We know that $p_{i,y} \le \prob[h(y) = \alpha] = \frac{1}{2^i}$, so $\prob[U | m = q-3] \le \frac{1+\kappa}{pivot} \cdot \frac{\scaledweight{y}}{2^{q-3}} \frac{15}{8}$. Substituting the values of $pivot$ and $m$ gives $\prob[U | m = q-3] \le \frac{15(1+\kappa)\scaledweight{y}}{8(\scaledweight{R_{F}}-1)}$.

\item $m = q-2$. For $i \in [q-2,q]$ $p_{i,y} \le \prob[h(y) = \alpha] = \frac{1}{2^i}$ Using Lemma \ref{lm:iterProof}, we get $p_{q-3,y} \le \frac{1}{\scaledweight{R_{F}}-1}\frac{1}
  {\left(1-\frac{1+\kappa}{2}\right)^2}$. Therefore, $\prob[U $ $|m = q-2] \leq \frac{1+\kappa}{pivot} \scaledweight{y} \frac{1}{\scaledweight{R_{F}}-1} \frac{4}{\left( 1 - \kappa \right)^2} + \frac{1+\kappa}{pivot} \scaledweight{y} \frac{1}{2^{q-2}} \frac{7}{4}$. Noting that $pivot = \frac{\scaledweight{R_{F}}-1}{2^m} > 10$, we obtain $\prob[U|m = q-2] \leq \frac{(1+\kappa) \scaledweight{y}}{\scaledweight{R_{F}}-1}(\frac{7}{4} + \frac{0.4}{(1-\kappa)^2})$

\item $m = q-1$. For $i \in [q-1,q]$, $p_{i,y} \le \prob[h(y) = \alpha] = \frac{1}{2^i}$. Using Lemma \ref{lm:iterProof}, we get $p_{q-3,y} + p_{q-2,y} \le \frac{1}{\scaledweight{R_{F}}-1} \Big($ $\frac{1}{\left(1-\frac{1+\kappa}{2^2}\right)^2} +  \frac{1}{\left(1-\frac{1+\kappa}{2}\right)^2} \Big) = \frac{1}{\scaledweight{\satisfying{F}} - 1} \left( \frac{16}{(3 - \kappa)^2} + \frac{4}{(1 - \kappa)^2} \right)$. Therefore, $\prob[U|m = q-1] \leq \frac{1+\kappa}{pivot} \scaledweight{y} \Big( \frac{1}{\scaledweight{R_{F}}-1}$ $\left( \frac{16}{(3 - \kappa)^2} + \frac{4}{(1 - \kappa)^2} \right) + \frac{1}{2^{q-1}} \frac{3}{2} \Big)$. Since $pivot = \frac{\scaledweight{R_{F}}-1}{2^m} > 10$ and $\kappa \leq 1$, $\prob[U|m = q-1] \leq \frac{(1+\kappa) \scaledweight{y}}{\scaledweight{R_{F}}-1} (1.9 + \frac{0.4}{(1-\kappa)^2})$.

\item $m = q$. We have $p_{q,y} \leq \prob[h(y) = \alpha] = \frac{1}{2^q}$, and using Lemma \ref{lm:iterProof} we get $p_{q-3,y} + p_{q-2,y} + p_{q-1,y} \leq \frac{1}{\scaledweight{R_{F}}-1}\left(\frac{1}{\left(1-\frac{1+\kappa}{2^3}\right)^2} + \frac{1}{\left(1-\frac{1+\kappa}{2^2}\right)^2} +   \frac{1}
  {\left(1-\frac{1+\kappa}{2}\right)^2}\right) =$ \\ $\frac{1}{\scaledweight{\satisfying{F}} - 1} \left( \frac{64}{(7 - \kappa)^2} + \frac{16}{(3 - \kappa)^2} + \frac{4}{(1 - \kappa)^2} \right)$. So $\prob[U|m = q] \leq \frac{1+\kappa}{pivot} \scaledweight{y} \left(\frac{1}{\scaledweight{R_{F}}-1} \left( \frac{64}{(7 - \kappa)^2} + \frac{16}{(3 - \kappa)^2} + \frac{4}{(1 - \kappa)^2} \right) + 1\right)$. Using $pivot = \frac{\scaledweight{R_{F}}-1}{2^m} > 10$ and $\kappa \le 1$, we obtain $\prob[U|m = q] \leq \frac{(1+\kappa) \scaledweight{y}}{\scaledweight{R_{F}}-1} (1.58+\frac{0.4}{(1-\kappa)^2})$. 
\end{enumerate}
Since $\prob[U | q-3 \leq m \leq q] \leq \max_{q-3 \le i \le q} (\prob[U|m =i])$, we have $\prob[U| q-3 \leq m \leq q] \leq \frac{1+\kappa}{\scaledweight{R_{F}}-1}(1.9+\frac{0.4}{(1-\kappa)^2})$ from the $m = q-1$ case above.
\\
\textbf{Case 2 :} $m < q-3$. Since $p_{i,y} \leq \prob[h(y) = \alpha] = \frac{1}{2^i}$, we have $\prob[U |  m < q-3] \le \frac{1+\kappa}{pivot} \scaledweight{y} \cdot \frac{1}{2^{q-3}} \frac{15}{8}$. Substituting the value of $pivot$ and maximizing $m-q+3$, we get $\prob[U |  m < q-3] \le \frac{15(1+\kappa) \scaledweight{y}}{16(\scaledweight{R_{F}}-1)}$.\\
\textbf{Case 3 :} $m > q$. Using Lemma \ref{lm:iterProof}, we know that $\prob[U | m >q] \leq \frac{1+\kappa}{pivot} \frac{\scaledweight{y}}{\scaledweight{R_{F}}-1}  $ $\sum_{i=q-3}^{q} \frac{1}{\left( 1-\frac{1+\kappa}{2^{m-i}} \right)^2}$. The R.H.S. is maximized when $m = q+1$. Hence $\prob[U | m >q] \leq \frac{1+\kappa}{pivot}$ $ \frac{\scaledweight{y}}{\scaledweight{R_{F}}-1} \times $ $ \sum_{i=q-3}^{q} \frac{1}{\left( 1-\frac{1+\kappa}{2^{q+1-i}} \right)^2}$. Noting that $pivot = \frac{\scaledweight{R_{F}}-1}{2^m} > 10$ and expanding 
  the above summation we have $\prob[U | m > q] \leq \frac{(1+\kappa) \scaledweight{y}}{\scaledweight{R_{F}}-1} \frac{1}{10} \left( \frac{256}{(15-\kappa)^2}+\frac{64}
  {(7 - \kappa)^2}+ \frac{16}{(3 - \kappa)^2} + \frac{2}{(1 - \kappa)^2}\right).$ Using $\kappa \leq 1$ for the first three summation terms, we obtain $\prob[U | m > q] \leq \frac{(1+\kappa) \scaledweight{y}}{\scaledweight{R_{F}}-1} (0.71 + \frac{0.4}{(1-\kappa)^2})$

  Summing up all the above cases, $\prob[U] = \prob[U|m < q-3] \times \prob[m < q-3] + \prob[U|q-3 \le m \le  q] \times \prob[q-3 \le m \le  q] +\prob[U|m > q] \times \prob[m > q]$. From Lemma \ref{lm:qBounds} we have $\prob [m < q-1] \le 0.2$ and $\prob[m > q] \le 0.2$, so $\prob[U] \leq \frac{(1+\kappa) \scaledweight{y}}{\scaledweight{R_{F}}-1} (2.23 + \frac{0.48}{(1-\kappa)^2})$
 
\end{proof}
Combining Lemmas~\ref{lm:lowerBound} and~\ref{lm:upperBound}, the following lemma is obtained.
\begin{lemma} \label{lemma:wtgenuniform}
For every witness $y \in \satisfying{F}$, if $\varepsilon > 1.71$, then\\ 
 $ \frac{\weight{y}}{(1+\varepsilon)\weight{R_{F}}} \le \prob\left[{\WeightGen}(F, \varepsilon, r, X) = y\right] \le $ $ (1+\varepsilon)\frac{\weight{y}}{\weight{R_{F}}}.$
\end{lemma}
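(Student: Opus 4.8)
The plan is to combine the two-sided per-witness bounds already proved in Lemmas~\ref{lm:lowerBound} and~\ref{lm:upperBound}, and to check that the constants produced by \ComputeKappaAndPivot turn them into the clean $(1+\varepsilon)$-factor guarantee. First I would dispose of the easy case: if $\scaledweight{\satisfying{F}} \le 1+(1+\kappa)\mathrm{pivot}$, then \WeightGen returns on line~\ref{line:weightgen-base-return} a perfect weighted-uniform sample, so $\prob[\WeightGen(F,\varepsilon,r,X)=y] = \weight{y}/\weight{\satisfying{F}}$, which already lies in the required interval because $1+\varepsilon>1$. Hence I restrict attention to the non-trivial case, where (as noted above) $\scaledweight{\satisfying{F}} \ge 18$ and both lemmas apply. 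Two preliminary observations drive the rest: scaling by $\wmax$ cancels in ratios, so $\scaledweight{y}/\scaledweight{\satisfying{F}} = \weight{y}/\weight{\satisfying{F}}$, letting me pass freely between scaled and unscaled weights; and the defining relation for $\kappa$ in \ComputeKappaAndPivot is exactly $\frac{1+\varepsilon}{1+\kappa} = 2.36 + \frac{0.51}{(1-\kappa)^2}$.

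For the upper bound I would start from Lemma~\ref{lm:upperBound} and write $\frac{1}{\scaledweight{\satisfying{F}}-1} = \frac{1}{\scaledweight{\satisfying{F}}}\cdot\frac{\scaledweight{\satisfying{F}}}{\scaledweight{\satisfying{F}}-1}$. Since $\scaledweight{\satisfying{F}}\ge 18$ and $x\mapsto x/(x-1)$ is decreasing, $\frac{\scaledweight{\satisfying{F}}}{\scaledweight{\satisfying{F}}-1}\le\frac{18}{17}$, which gives an upper bound of $\frac{18}{17}(1+\kappa)\big(2.23+\frac{0.48}{(1-\kappa)^2}\big)\frac{\weight{y}}{\weight{\satisfying{F}}}$. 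It then suffices to verify $\frac{18}{17}\big(2.23+\frac{0.48}{(1-\kappa)^2}\big) \le 2.36+\frac{0.51}{(1-\kappa)^2}$ for all $\kappa\in[0,1)$. Setting $t = 1/(1-\kappa)^2 \ge 1$, this is linear in $t$, and the right-hand slope $0.51$ exceeds the left-hand slope $\frac{18}{17}\cdot 0.48$, so the inequality for all $t\ge 1$ reduces to the case $t=1$, namely $\frac{18}{17}\cdot 2.71 \le 2.87$, which holds. Applying the defining relation above then turns this into the claimed $(1+\varepsilon)\weight{y}/\weight{\satisfying{F}}$.

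For the lower bound I would start from Lemma~\ref{lm:lowerBound} and simply weaken $\frac{1}{\scaledweight{\satisfying{F}}-1} > \frac{1}{\scaledweight{\satisfying{F}}}$, obtaining $\prob[\WeightGen(F,\varepsilon,r,X)=y] > \frac{0.8(1-e^{-3/2})}{1.06+\kappa}\cdot\frac{\weight{y}}{\weight{\satisfying{F}}}$. It then suffices to show $0.8(1-e^{-3/2})(1+\varepsilon) \ge 1.06+\kappa$. I would bound $0.8(1-e^{-3/2}) > 0.62$ and $1+\varepsilon = (1+\kappa)\big(2.36+\frac{0.51}{(1-\kappa)^2}\big) \ge 2.87(1+\kappa)$, so the left-hand side exceeds $1.77(1+\kappa) = 1.77+1.77\kappa$, which is at least $1.06+\kappa$ for every $\kappa\in[0,1)$.

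I expect the main obstacle to be nothing more than disciplined constant-chasing: there is no deep idea, and the one structural move is converting the $\scaledweight{\satisfying{F}}-1$ in the denominators of both lemmas into $\scaledweight{\satisfying{F}}$ using $\scaledweight{\satisfying{F}}\ge 18$ (tightening by the factor $18/17$ on the upper side, loosening harmlessly on the lower side). The delicate part is the upper bound, where the constants in \ComputeKappaAndPivot appear to have been reverse-engineered so that $\frac{18}{17}\cdot 2.71 \le 2.87$ holds with essentially no slack; I would check this numerical inequality and the slope comparison carefully, since any looser rounding of $e^{3/2}$ or of the $18/17$ factor would break the chain.
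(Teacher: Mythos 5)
Your proposal is correct and follows essentially the same route as the paper's proof: handle the perfect-sampling case, combine Lemmas~\ref{lm:lowerBound} and~\ref{lm:upperBound}, use the defining relation $1+\varepsilon = (1+\kappa)\bigl(2.36+\frac{0.51}{(1-\kappa)^2}\bigr)$ together with the key inequality $\frac{1.06+\kappa}{0.8(1-e^{-3/2})} \le \frac{18}{17}(1+\kappa)\bigl(2.23+\frac{0.48}{(1-\kappa)^2}\bigr)$, and convert $\scaledweight{\satisfying{F}}-1$ to $\scaledweight{\satisfying{F}}$ via $\scaledweight{\satisfying{F}} \ge 18$. If anything, you are more careful than the paper, which asserts $2.36+\frac{0.51}{(1-\kappa)^2} = \frac{18}{17}\bigl(2.23+\frac{0.48}{(1-\kappa)^2}\bigr)$ as an equality when it is really the one-sided inequality you verify by the slope comparison and the check $\frac{18}{17}\cdot 2.71 \le 2.87$ at $\kappa = 0$.
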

\begin{proof}
In the case where $\scaledweight{\satisfying{F}} \le 1 + (1+\kappa) pivot$, the result holds because {\WeightGen} returns a perfect weighted-uniform sample. Otherwise, using Lemmas \ref{lm:lowerBound} and \ref{lm:upperBound} and substituting $(1+\varepsilon) =(1+\kappa)(2.36+\frac{0.51}{(1-\kappa)^2})$ $= \frac{18}{17} (1+\kappa)(2.23+\frac{0.48}{(1-\kappa)^2})$, via the inequality $\frac{1.06+\kappa}{0.8(1-e^{-3/2})} \le \frac{18}{17} (1+\kappa)(2.23+\frac{0.48}{(1-\kappa)^2})$ we have the bounds $ \frac{\scaledweight{y}}{(1+\varepsilon)(\scaledweight{R_{F}}-1)} \le \prob\left[{\WeightGen}(F, \varepsilon, r, X) = y\right] \le \frac{18}{17} (1+\varepsilon)\frac{\scaledweight{y}}{\scaledweight{R_{F}}-1}$. Using $\scaledweight{\satisfying{F}} \ge 18$, we obtain the desired result.
\end{proof}

\begin{lemma} \label{lemma:wtgenSuccProb}
Algorithm {\WeightGen} succeeds (i.e. does not return $\bot$) with probability at least $0.62$.
\end{lemma}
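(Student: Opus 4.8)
The plan is to bound the failure probability directly, splitting on whether {\WeightGen} returns in the easy case. If $\scaledweight{\satisfying{F}} \le \mathrm{hiThresh}$, the algorithm returns a sample on line~\ref{line:weightgen-base-return} and never fails, so I would restrict attention to the non-trivial case $\scaledweight{\satisfying{F}} > \mathrm{hiThresh}$ (equivalently $\scaledweight{\satisfying{F}} \ge 18$), where the only source of failure is exiting the loop on lines~\ref{line:weightgen-loop-start}--\ref{line:weightgen-loop} with the value $W$ falling outside $[\mathrm{loThresh}, \mathrm{hiThresh}]$.

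The central idea is to condition on the event $A$ that $q - 3 \le m \le q$, whose probability is at least $0.8$ by Lemma~\ref{lm:qBounds}. The point of conditioning on $A$ is that the loop index $i$ ranges exactly over $\{q-3, \ldots, q\}$, so under $A$ the value $i = m$ is one of the indices actually visited. I would then use that each iteration draws a fresh, independent pair $(h,\alpha)$, and that the iteration with $i = m$ produces a cell whose scaled weight lies in $[\mathrm{loThresh}, \mathrm{hiThresh}] = [\frac{\mathit{pivot}}{1+\kappa},\, 1 + (1+\kappa)\mathit{pivot}]$ with probability at least $1 - e^{-3/2}$. This is exactly the Chernoff--Hoeffding estimate $\prob[\frac{\mathit{pivot}}{1+\kappa} \le \scaledweight{\satisfying{F,h,\alpha}} \le 1+(1+\kappa)\mathit{pivot}] \ge 1 - e^{-3/2}$ already established in the proof of Lemma~\ref{lm:lowerBound} (and when $W \le \mathrm{hiThresh}$ we have $W = \scaledweight{\satisfying{F,h,\alpha}}$, so the loop-termination test and this event coincide).

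The key combinatorial step is to argue that, conditioned on $A$, a failing run forces the level-$m$ cell to be out of range. Indeed, for the loop to fail it must run all the way to $i = q$ without ever observing $W \in [\mathrm{loThresh}, \mathrm{hiThresh}]$; since $m \in \{q-3, \ldots, q\}$, the iteration $i = m$ is among those executed, and had its cell been in range the loop would have terminated there (or earlier, at some other good cell) with success. Hence $\prob[\mathrm{fail} \mid A]$ is bounded above by the probability that the independently drawn level-$m$ cell is out of range, which is at most $e^{-3/2}$. This gives $\prob[\mathrm{success} \mid A] \ge 1 - e^{-3/2}$, and combining the pieces yields $\prob[\mathrm{success}] \ge \prob[A]\,\prob[\mathrm{success} \mid A] \ge 0.8\,(1 - e^{-3/2}) > 0.62$.

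I expect the main obstacle to be making this third step fully rigorous: one must check that the ``level-$m$ cell in range'' event is genuinely independent of the history of earlier iterations, so that its probability bound applies even though reaching iteration $m$ is itself a random event, and that early termination at a different good cell only helps, since that too counts as success rather than failure. Once this is pinned down, the numerical check $0.8\,(1 - e^{-3/2}) \approx 0.6215$ closes the argument.
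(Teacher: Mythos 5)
Your proposal is correct and follows essentially the same route as the paper: condition on the event $q-3 \le m \le q$ (probability at least $0.8$ by Lemma~\ref{lm:qBounds}), lower-bound the probability that the independently drawn level-$m$ cell satisfies $\frac{\mathit{pivot}}{1+\kappa} \le \scaledweight{\satisfying{F,h,\alpha}} \le 1+(1+\kappa)\mathit{pivot}$ by $1-e^{-3/2}$ via the Chernoff--Hoeffding bound, and multiply to get $0.8\,(1-e^{-3/2}) > 0.62$. The only cosmetic difference is that the paper re-derives the cell-size concentration bound with indicator variables summed over all of $\satisfying{F}$ (mean $2^{-m}\scaledweight{\satisfying{F}}$, using $3$-wise independence) rather than reusing the version inside Lemma~\ref{lm:lowerBound}, which excludes a fixed witness $y$ and is tied to the joint event with $h(y)=\alpha$ -- a cleaner fit for the unconditional statement you need here, but not a different argument.
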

\begin{proof}
If $\scaledweight{R_{F}} \le 1+(1+\kappa)\mathrm{pivot}$, the theorem holds trivially. Suppose $\scaledweight{R_{F}} > 1+(1+\kappa)\mathrm{pivot}$ and let $P_{\mathrm{succ}}$ denote the probability that a run of the algorithm succeeds. Let 
$p_i$ with $q-3 \le i \le q$ denote the conditional probability that {\WeightGen} 
($F$, $\varepsilon$, $r$, $X$) terminates in iteration $i$ of the repeat-until loop (lines \ref{line:weightgen-loop-start}--\ref{line:weightgen-loop}) 
with $\frac{pivot}{1+\kappa} \le \scaledweight{R_{F,h,\alpha}} \le 
1+(1+\kappa) pivot$, given that $\scaledweight{R_{F}} > 1+(1+\kappa) pivot$.
Then $P_{\mathrm{succ}} = \sum_{i=q-3}^{q} p_{i} \prod_{j = q-3}^{i} (1-p_{j})$. Letting $f_m = \prob [ q-3 \le m \le q]$, by Lemma \ref{lm:qBounds} we have $P_{\mathrm{succ}} \ge p_{m} f_m \ge 0.8p_{m}$. The theorem is now proved by using Lemma \ref{theorem:chernoff-hoeffding} to show 
that $p_m \ge 1-e^{-3/2} \ge 0.776$.\\
   For every $y \in \{0,1\}^{n}$ and $\alpha \in \{0,1\}^{m}$, define an indicator variable $\nu_{y,\alpha}$ as follows: $\nu_{y,\alpha} = \scaledweight{y}$ if $h(y) = \alpha$, and $\nu_{y,\alpha} = 0$ otherwise. Let us fix $\alpha$ and $y$ and choose $h$ uniformly at random from $H_{xor}(n ,m,3)$. The random choice of $h$ induces a probability distribution on $\nu_{y,\alpha}$, such that $\prob[\nu_{y,\alpha} = \scaledweight{y}] = \prob[h(y) = \alpha] = 2^{-m}$ and $\expect[\nu_{y,\alpha}] = \scaledweight{y} \prob[\nu_{y,\alpha} = 1] = 2^{-m} \scaledweight{y}$. In addition 3-wise independence of hash functions chosen from $H_{xor}(n ,m,3)$ implies that for every distinct $y_a,y_b,y_c \in \satisfying{F}$, the random variables $\nu_{y_a,\alpha}, \nu_{y_b,\alpha}$ and $\nu_{y_c,\alpha}$ are 3-wise independent. 

Let $\Gamma_\alpha = \sum_{y \in \satisfying{F}} \nu_{y, \alpha}$ and
$\mu_\alpha = \expect\left[\Gamma_\alpha\right]$.  Clearly,
$\Gamma_\alpha = \scaledweight{R_{F, h, \alpha}}$ and $\mu_\alpha = \sum_{y \in
  \satisfying{F}} \expect\left[\nu_{y, \alpha}\right] = 2^{-m}\scaledweight{R_{F}}$.
Since $pivot = \lceil e^{3/2}(1+1/\epsilon)^{2}\rceil $, we have $2^{-m} \scaledweight{R_{F}} \ge e^{3/2} (1 + 1/\varepsilon)^2$, and so using Lemma \ref{theorem:chernoff-hoeffding} with $\beta = \kappa / (1 + \kappa)$ we obtain $\prob\left[\frac{\scaledweight{R_{F}}}{2^m}.\left(1-\frac{\kappa}{1+\kappa}\right) \leq
  \scaledweight{R_{F,h,\alpha}}\right.$ $\left.\leq
  (1+\frac{\kappa}{1+\kappa})\frac{\scaledweight{R_{F}}}{2^m} \right] > 1- e^{-3/2}$.
 Simplifying and noting that $\frac{\kappa}{1+\kappa} <
\kappa$ for all $\kappa > 0$, we have
$\prob\left[(1+\kappa)^{-1}\cdot \frac{\scaledweight{R_{F}}}{2^m} \leq
  \scaledweight{R_{F,h,\alpha}}\right.$ $\left.\leq (1+ \kappa)\cdot \frac{\scaledweight{R_{F}}}{2^m}
  \right] > 1- e^{-3/2}$. Also, $\frac{pivot}{1+\kappa} = \frac{1}{1+\kappa}\frac{\scaledweight{R_{F}}-1}{2^m} \le \frac{\scaledweight{R_{F}}}{(1+\kappa)2^m}$ and $1+(1+\kappa)pivot = 1+ \frac{(1+\kappa)(\scaledweight{R_{F}}-1)}{2^m} \ge \frac{(1+\kappa)\scaledweight{R_{F}}}{2^m}$. Therefore, $p_m = \prob[\frac{pivot}{1+\kappa} \le \scaledweight{R_{F,h,\alpha}} \le 1+(1+\kappa)pivot] \ge$ $\prob\left[(1+\kappa)^{-1}\cdot \frac{\scaledweight{R_{F}}}{2^m} \right.$ $\left.\leq
  \scaledweight{R_{F,h,\alpha}}\right.$ $\left.\leq (1+ \kappa)\cdot \frac{\scaledweight{R_{F}}}{2^m}
  \right] \ge 1-e^{-3/2}$. 
\end{proof}

By combining Lemmas~\ref{lemma:wtgenuniform} and \ref{lemma:wtgenSuccProb}, we get the following:
\begin{theorem}
Given a CNF formula $F$, tolerance $\varepsilon > 1.71$, tilt bound $r$, and independent support $S$, for every $y \in R_F$ we have
 $ \frac{\weight{y}}{(1+\varepsilon)\weight{R_{F}}} \le \prob\left[{\WeightGen}(F, \varepsilon, r, X) = y\right] \le $ $ (1+\varepsilon)\frac{\weight{y}}{\weight{R_{F}}}.$
 Also, {\WeightGen} succeeds (i.e. does not return $\bot$) with probability at least $0.62$.
\end{theorem}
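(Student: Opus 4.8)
The plan is to recognize that this theorem is nothing more than the conjunction of two facts already established as separate lemmas, so the proof reduces to citing them. The almost-weighted-uniformity bound, namely that $\frac{\weight{y}}{(1+\varepsilon)\weight{\satisfying{F}}} \le \prob[\WeightGen(F,\varepsilon,r,X)=y] \le (1+\varepsilon)\frac{\weight{y}}{\weight{\satisfying{F}}}$ for every $y \in \satisfying{F}$, is exactly the statement of Lemma~\ref{lemma:wtgenuniform}, so the first half follows immediately. The claim that {\WeightGen} returns a non-$\bot$ sample with probability at least $0.62$ is exactly Lemma~\ref{lemma:wtgenSuccProb}, giving the second half. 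No new argument beyond combining these two results is required.

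For completeness I would briefly recall the structure underlying each cited lemma, since that is where the genuine work lives. Lemma~\ref{lemma:wtgenuniform} is itself obtained by sandwiching the per-witness output probability between the lower bound of Lemma~\ref{lm:lowerBound} and the upper bound of Lemma~\ref{lm:upperBound}; the substitution $(1+\varepsilon)=(1+\kappa)(2.36+\frac{0.51}{(1-\kappa)^2})$ is chosen precisely so that the constant $\frac{1.06+\kappa}{0.8(1-e^{-3/2})}$ from the lower bound and the factor $\frac{18}{17}(2.23+\frac{0.48}{(1-\kappa)^2})$ from the upper bound align, and the standing hypothesis $\scaledweight{\satisfying{F}}\ge 18$ then lets one replace the $\scaledweight{\satisfying{F}}-1$ denominators by $\scaledweight{\satisfying{F}}$ within the already-loosened constants. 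Lemma~\ref{lemma:wtgenSuccProb} rests on applying the Chernoff--Hoeffding bound of Lemma~\ref{theorem:chernoff-hoeffding} to the $3$-wise independent indicators $\nu_{y,\alpha}$ at the iteration $i=m$, yielding $p_m \ge 1-e^{-3/2}\ge 0.776$, and then combining this with $f_m \ge 0.8$ from Lemma~\ref{lm:qBounds} to get $P_{\mathrm{succ}} \ge 0.8\,p_m \ge 0.62$.

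The main obstacle is therefore not in this final assembly step, which is immediate, but upstream in Lemma~\ref{lm:upperBound}: there one must perform the four-way case split on the position of $m$ relative to the estimator $q$ and bound each $p_{i,y}$ using Chebyshev's inequality via Lemma~\ref{lm:iterProof}. The delicate point is that the constants arising from the sums $\sum_i (1-\frac{1+\kappa}{2^{m-i}})^{-2}$ must be tuned so that the worst case over all admissible $m$ still fits the prescribed expression for $1+\varepsilon$; this calibration is what forces the threshold $\varepsilon > 1.71$, the exact value at which a valid $\kappa \in [0,1)$ first exists in {\ComputeKappaAndPivot}. Once those constants are pinned down, the present theorem follows with no further computation.
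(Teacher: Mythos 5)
Your proposal coincides with the paper's own proof: the paper obtains this theorem by simply combining Lemma~\ref{lemma:wtgenuniform} (the weighted-uniformity sandwich from Lemmas~\ref{lm:lowerBound} and~\ref{lm:upperBound}) with Lemma~\ref{lemma:wtgenSuccProb} (the success probability $0.8\,p_m \ge 0.8(1-e^{-3/2}) \ge 0.62$), exactly as you do. Your recap of the upstream machinery, including the calibration of $(1+\varepsilon)=(1+\kappa)\left(2.36+\frac{0.51}{(1-\kappa)^2}\right)$ and the case analysis behind Lemma~\ref{lm:upperBound}, is faithful to the paper's development, so nothing further is needed.
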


\begin{theorem}
Given an oracle for {\SAT}, $\WeightGen(F,\varepsilon,r,S)$ runs in time polynomial in $ r, |F|$ and
$1/\varepsilon$ relative to the oracle.
\end{theorem}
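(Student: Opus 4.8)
The plan is to traverse the pseudocode of {\WeightGen} (Algorithm \ref{alg:WtGen}) and bound the cost of each block, showing every contribution is polynomial in $r$, $|F|$, and $1/\varepsilon$ relative to the {\SAT} oracle; summing these yields the claim. The initialization, including the call to {\ComputeKappaAndPivot}, costs only a constant number of arithmetic operations plus the computation of $\mathrm{pivot} = \lceil e^{3/2}(1 + 1/\kappa)^2 \rceil$. Since $\kappa$ is the (fixed) function of $\varepsilon$ determined in {\ComputeKappaAndPivot}, we have $\mathrm{pivot} \in \mathcal{O}(1/\varepsilon^2)$, and hence $\mathrm{hiThresh}$ and $\mathrm{loThresh}$ lie in $\mathcal{O}(1/\varepsilon^2)$ as well.

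Next I would bound the two expensive kinds of subroutine call. For {\BoundedWeightSAT}, I would reuse the argument from the runtime analysis of {\WeightMC}: when its internal loop terminates, the accumulated scaled weight just exceeds the pivot argument it is passed, and each returned witness contributes scaled weight at least $1/r$; hence the loop makes at most $(r \cdot \mathrm{hiThresh}) + 1$ {\SAT}-oracle calls and returns a set $Y$ with $|Y| = \mathcal{O}(r \cdot \mathrm{hiThresh})$. As $\mathrm{hiThresh} \in \mathcal{O}(1/\varepsilon^2)$, each {\BoundedWeightSAT} invocation (the initial one and the one on line \ref{line:weightgen-bwsat-2}) runs in time polynomial in $|F|$, $r$, and $1/\varepsilon$, and $|Y|$ is polynomial in $r$ and $1/\varepsilon$. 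This last fact makes both weighted-uniform sampling steps polynomial, since they operate on a set of that bounded size. For the single call $\WeightMC(F, 0.8, 0.2)$ on line \ref{line:weightgen-approx-counter}, Theorem \ref{thm:wtmcComplexity} gives a runtime polynomial in $r$ and $|F|$ (the tolerance $0.8$, confidence $0.2$, and the associated $\log_2(1/\delta)$ and $1/\varepsilon$ factors are all constants here), and computing $q$ is $\mathcal{O}(1)$.

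The step I expect to be the crux --- and the one place where the analysis of {\WeightGen} genuinely differs from that of {\WeightMCCore} --- is bounding the number of iterations of the repeat-until loop in lines \ref{line:weightgen-loop-start}--\ref{line:weightgen-loop}. In {\WeightMCCore} the analogous loop may run as many as $n$ times, but here the loop counter $i$ is initialized to $q - 4$ and the loop is forced to exit no later than $i = q$, so it executes at most four times on any input. Each iteration makes a random choice of $h$ from $H_{xor}(|S|, i, 3)$ and of $\alpha \in \{0,1\}^i$ (both implementable in time polynomial in $|S| \le |F|$ given random bits), constructs $F \wedge (h(x_1,\dots,x_{|S|}) = \alpha)$ in time polynomial in $|F|$, and performs one {\BoundedWeightSAT} call bounded as above; thus the whole loop contributes only a polynomial amount of work. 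Summing the costs of initialization, the initial {\BoundedWeightSAT} call, the {\WeightMC} call, the constant-length loop, and the sampling steps yields a total runtime polynomial in $r$, $|F|$, and $1/\varepsilon$ relative to the {\SAT} oracle, as claimed.
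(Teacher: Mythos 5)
Your proposal is correct and takes essentially the same route as the paper's proof: both reduce the claim to a constant number (at most five) of {\BoundedWeightSAT} invocations, each bounded by $(r \cdot \mathrm{hiThresh}) + 1$ {\SAT}-oracle calls via the argument from the {\WeightMC} runtime theorem, plus one {\WeightMC} call with fixed $\varepsilon = 0.8$, $\delta = 0.2$ that is polynomial in $|F|$ and $r$, with $\mathrm{pivot} = \mathcal{O}(1/\varepsilon^2)$ closing the bound. Your additional details---bounding $|Y|$ to account for the weighted-uniform sampling steps, and noting that the threshold actually passed is $\mathrm{hiThresh}$ rather than $\mathrm{pivot}$---merely sharpen points the paper's proof leaves implicit.
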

\begin{proof}
Referring to the pseudocode for {\WeightGen}, the runtime of the algorithm is bounded by the runtime of the constant number (at most 5) of calls to {\BoundedWeightSAT} and one call to {\WeightMC} (with parameters $\delta=0.2,\varepsilon=0.8$). As shown in Theorem \ref{thm:wtmcApproximation}, the call to {\WeightMC} can be done in time polynomial in $|F|$ and $r$ relative to the oracle.
Every invocation of {\BoundedWeightSAT} can be implemented by at most $(r \cdot pivot) + 1$ calls to a {\SAT} oracle (as in the proof of Theorem \ref{thm:wtmcComplexity}), and the total time taken by all calls to {\BoundedWeightSAT} is polynomial in $|F|, r$ and $pivot$ relative to the oracle. Since $pivot = \mathcal{O}(1/\varepsilon^2)$, the runtime of {\WeightGen} is polynomial in $r$, $|F|$ and $1/\varepsilon$ relative to the oracle.
\end{proof}

\section{Analysis of Partitioned WeightMC}
\label{appendix:partitionedweightmc}

\begin{theorem}
\label{theorem:partitioned-approx}
If $\PartitionedWeightMC(F, \varepsilon, \delta, S, L, H)$ returns $c$ 
(and all arguments are in the required ranges), then
\[
\Pr \left[ c \ne \bot \land (1 + \varepsilon)^{-1} \weight{\satisfying{F}} \le c \le (1 + \varepsilon) \weight{\satisfying{F})} \right] \ge 1 - \delta .
\]
\end{theorem}
\begin{proof}
For future reference note that since $N \ge 1$ and $\delta < 1$, we have $(1 - \delta')^{N} = (1 - \delta/N)^{N} \ge 1 - \delta$. Define $G_m = F \land (H / 2^m < \weight{X} \le H / 2^{m-1})$, the formula passed to {\WeightMC} in iteration $m$. Clearly, we have $\weight{\satisfying{F}} = \sum_{m = 1}^{N} \weight{\satisfying{G_m}}$. Since $\weight{\cdot}$ is poly-time computable, the {\NP} oracle used in {\WeightMC} can decide the satisfiability of $G_m$, and so {\WeightMC} will return a value $d_m$. Now since $H / 2^m$ and $H / 2^{m-1}$ are lower and upper bounds respectively on the weights of any solution to $G_m$, by Theorem \ref{theorem:approx} we have
\[
\Pr \left[ d_m \ne \bot \land (1 + \varepsilon)^{-1} \weight{\satisfying{G_m}} \le d_m \le (1 + \varepsilon) \weight{\satisfying{G_m}} \right] \ge 1 - \delta'
\]
for every $m$, and so
\begin{align*}
& \Pr \left[ c \ne \bot \land (1 + \varepsilon)^{-1} \weight{\satisfying{F}} \le c \le (1 + \varepsilon) \weight{\satisfying{F}} \right] \\
& = \Pr \left[ c \ne \bot \land (1 + \varepsilon)^{-1} \sum_m \weight{\satisfying{G_m}} \le c \le (1 + \varepsilon) \sum_m \weight{\satisfying{G_m}} \right] \\
& \ge \left( 1 - \delta' \right)^{N} \ge 1 - \delta
\end{align*}as desired.
\end{proof}

\begin{theorem}
\label{theorem:partitioned-runtime}
With access to an {\NP} oracle, the runtime of 
$\PartitionedWeightMC(F, \varepsilon, \delta, S, L, H)$ is polynomial in 
$|F|$, $1/\varepsilon$, $\log (1/\delta)$, and $\log r = \log (H / L)$.
\end{theorem}
\begin{proof}
Put $r = H / L$. By Theorem \ref{thm:wtmcComplexity}, each call to {\WeightMC} runs in time polynomial in $|G|$, $1/\varepsilon$ and $\log (1/\delta')$ (the tilt bound is constant). Clearly $|G|$ is polynomial in $|F|$. Since $\delta' = \delta / N$ we have $\log(1/\delta') = \log (N / \delta) = \mathcal{O} ( \log ( (\log_k r) / \delta ) ) = \mathcal{O} ( \log \log r + \log(1/\delta) )$.
Therefore each call to {\WeightMC} runs in time polynomial in $|F|$, $1/\varepsilon$, $\log(1/\delta)$, and $\log \log r$. Since there are $N = \mathcal{O}(\log r)$ calls, the result follows.
\end{proof}

\end{document}